\newcommand{\aref}[1]{\hyperref[#1]{Appendix~\ref*{#1}}}
\newcommand{\lemref}[1]{\hyperref[#1]{Lemma~\ref*{#1}}}
\newcommand{\propref}[1]{\hyperref[#1]{Proposition~\ref*{#1}}}
\newcommand{\corref}[1]{\hyperref[#1]{Corollary~\ref*{#1}}}
\newcommand{\conref}[1]{\hyperref[#1]{Condition~\ref*{#1}}}
\newcommand{\norm}[1]{\left\lVert#1\right\rVert}
\newcommand{\normt}[1]{\norm{#1}_2}
\newcommand{\snormt}[1]{\norm{#1}^2_2}
\newcommand{\iprod}[1]{\langle#1\rangle}
\DeclareMathOperator{\E}{\mathbb{E}}
\DeclareMathOperator{\R}{\mathbb{R}}
\renewcommand{\bar}{\overline}
\DeclareMathOperator{\normal}{\mathcal{N}}
\DeclareMathOperator{\Cov}{Cov}
\DeclareMathOperator{\zoloss}{\mathcal{L}_{\text{0-1}}}
\DeclareMathOperator{\Unif}{Uniform}
\DeclareMathOperator{\proj}{Proj}
\DeclareMathOperator{\sgn}{sgn}
\newcommand{\one}{\mathbbm{1}}
\newtheorem{theorem}{Theorem}[section]
\newtheorem{lemma}[theorem]{Lemma}
\newtheorem{proposition}[theorem]{Proposition}
\newtheorem{corollary}[theorem]{Corollary}
\theoremstyle{remark}
\newtheorem{remark}{Remark}[section]
\definecolor{darkblue}{rgb}{0, 0, 0.5}
\definecolor{forestgreen}{rgb}{0.13, 0.55, 0.13}
\definecolor{gainshade}{HTML}{69BBC4}
\definecolor{lossshade}{RGB}{251, 188, 173}
\definecolor{gaincolor}{HTML}{0768E1}
\definecolor{losscolor}{RGB}{232, 84, 49}
\definecolor{posttraintulu}{HTML}{f74d9d}
\definecolor{setupcolor}{HTML}{0768E1}
\definecolor{orangedotcolor}{HTML}{e25234}
\definecolor{bluedotcolor}{RGB}{63, 116, 120}
\title{The Delta Learning Hypothesis:\\ Preference Tuning on Weak Data can Yield Strong Gains}
\author{
    Scott Geng$^{\clubsuit}$ \; \; 
    Hamish Ivison$^{\clubsuit\spadesuit}$ \; \; 
    Chun-Liang Li$^\clubsuit$ \; \; 
    Maarten Sap$^\heartsuit$ \; \;
    Jerry Li$^{\clubsuit}$ \\
    \textbf{
    Ranjay Krishna$^{\clubsuit\spadesuit}$ \; \; \; 
    Pang Wei Koh$^{\clubsuit\spadesuit}$}\\ \\
    $^\clubsuit$University of Washington \;
    $^\spadesuit$Allen Institute for AI \;
    $^\heartsuit$Carnegie Mellon University \\
    \texttt{sgeng@cs.washington.edu}\hspace{4.5mm} \parbox{0.027\textwidth}{\includegraphics[width=\linewidth]{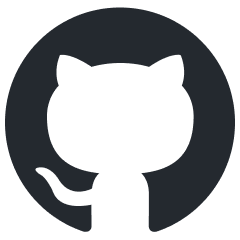}}\hspace{0.5mm}\href{https://github.com/scottgeng00/delta_learning/}{\hspace{1mm}\texttt{GitHub Repo}}
}
\begin{document}

\ifcolmsubmission
\linenumbers
\fi

\maketitle
\vspace{-1em}
\begin{abstract}
Improvements in language models are often driven by improving the quality of the data we train them on, which can be limiting when strong supervision is scarce. In this work, we show that paired preference data consisting of individually weak data points can enable gains beyond the strength of each individual data point. We formulate the \textbf{delta learning hypothesis} to explain this phenomenon, positing that the relative quality \textit{delta} between points suffices to drive learning via preference tuning---even when supervised finetuning on the weak data hurts. We validate our hypothesis in controlled experiments and at scale, where we post-train 8B models on preference data generated by pairing a small 3B model's responses with outputs from an even smaller 1.5B model to create a meaningful delta. Strikingly, on a standard 11-benchmark evaluation suite (MATH, MMLU, etc.), our simple recipe matches the performance of Tülu 3, a state-of-the-art open model tuned from the same base model while relying on much stronger supervisors (e.g., GPT-4o). Thus, delta learning enables simpler and cheaper open recipes for state-of-the-art post-training. To better understand delta learning, we prove in logistic regression that the performance gap between two weak teacher models provides useful signal for improving a stronger student. Overall, our work shows that models can learn surprisingly well from paired data that might typically be considered weak.

\end{abstract}

\vspace{-2mm}
\begin{figure}[h]
    \centering
    \includegraphics[width=0.885\linewidth]{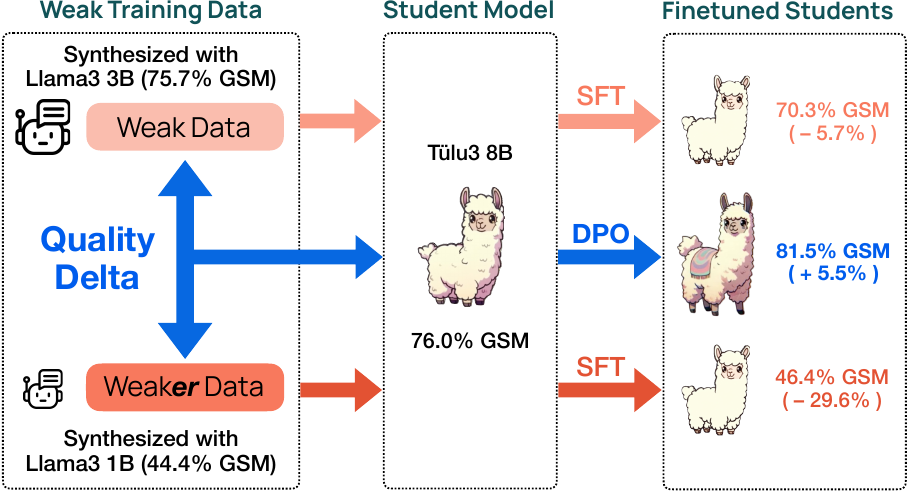}
    \caption{The \textbf{delta learning hypothesis} posits that paired preference data enables language models to learn from \textbf{relative differences} in data quality, driving gains beyond the absolute quality of each individual data point. Example: Tuning Tülu-3-8B-SFT to prefer greedy responses from Llama3 3B over those from Llama3 1B improves Tülu's GSM8k accuracy, even though both Llamas are weaker than Tülu on GSM8k. SFT on the weak data hurts.}
    \label{fig:teaser}
\end{figure}

\section{Introduction}
Common wisdom in machine learning holds that \textit{strong data builds strong models}: improving performance typically requires training on data that exceeds a model’s current capabilities. This principle has driven progress across the language model pipeline---from pretraining corpus curation~\citep{li2024datacomp, penedo2024fineweb, olmo20242}, to rejection sampling for finetuning data~\citep{dong2023raft, 
adler2024nemotron}, and to preference tuning, where human annotators identify the best model outputs as targets to tune towards~\citep{ouyang2022training, bai2022training}.
However, this wisdom also implies an inherent limitation: model capability may be upper-bounded by the strength of supervision available. Many desirable tasks are difficult to support with strong data, either because of high collection costs (e.g., synthesizing scientific literature at a PhD level) or because the task exceeds current human expertise (e.g., formulating a unified theory of physics). Thus, we ask: how might we build models that exceed the capabilities demonstrated in their training data?

In this paper, we show that preference pairs consisting of individually weak data points (e.g., responses from weak models) can be leveraged to improve a stronger language model \textit{beyond} the strength of each individual sample. Our study is motivated by preliminary evidence in the literature~\citep{yao2024varying, zhu2024weak} and an intriguing pilot result: preference tuning a modern 8B Llama 3~\citep{dubey2024llama} large language model (LLM) using paired outputs from weaker, past-generation models consistently leads to performance gains, even when supervised finetuning on those same weak responses directly results in degradation.

We formalize these observations as the \textbf{delta learning hypothesis} (\autoref{fig:teaser}), which posits that data with high \textit{absolute} quality is not strictly necessary to improve language models. Instead, the relative quality difference---the "delta"---between paired samples can provide sufficient supervision to guide improvement through preference tuning, even if neither sample alone is stronger than the model being trained. 
Intuitively, the delta defines a meaningful direction of improvement; a strong model may learn to generalize along this direction and improve beyond the absolute quality of the preferred example. We systematically test our hypothesis in two controlled experiments by explicitly constructing preference pairs with limited absolute quality but a clear delta, and find consistent empirical evidence in~support.

\textbf{Our hypothesis enables new open recipes for state-of-the-art language model post-training---without requiring any strong supervision}. To test the limits of ``delta learning,'' we preference-tuned Tülu-3-8B-SFT, the instruction-finetuned precursor to Tülu 3, a state-of-the-art openly post-trained model~\citep{lambert2024tulu3}. In contrast to typical open recipes~\citep{lambert2024tulu3, ivison2023camels}, which heavily distill from strong supervisors (e.g., GPT-4o) to generate high-quality chosen responses, we generate chosen responses with a single small model (e.g., Qwen 2.5 3B Instruct) that is not stronger than Tülu-3-8B-SFT itself. We pair these responses with outputs from an even  small\textit{\textbf{er}} model (e.g., Qwen 2.5 1.5B Instruct), thus creating a delta for learning. Strikingly, on a standard 11-benchmark evaluation suite, \textbf{our simple recipe matches Tülu 3's performance}, despite using vastly less supervision. Our analysis explains our recipe's success; we find that the chosen response only needs to meet a surprisingly low quality threshold (i.e., not significantly worse than base model), beyond which the size of the quality \textit{delta} becomes the primary determinant of downstream preference tuning performance. Delta learning offers simple, cheap, and performant post-training, reducing the reliance of open recipes on strong model distillation.

To further illuminate \textit{why} delta learning works, we theoretically study a logistic regression setup where a student model is trained to prefer synthetic pseudo-labels from a (possibly weak) teacher model over those from an even weaker one. We prove that even when both teachers provide misleading supervision individually, the performance gap between them ensures that the \textit{delta between} these signals is still directionally correct. Learning from this delta can improve an already-strong student with high probability.

Overall, our work shows that models can learn surprisingly well from weak data, provided the data is paired to expose informative deltas. We find these deltas are often readily obtainable---even simple heuristics like model size differences suffice to capture them. Thus, we are optimistic that weak, currently unused data may be revitalized into valuable supervision. Furthermore, curating pairs of weak data may offer a more scalable alternative to finetuning in settings where strong supervision is limited---for example, by generating targeted corruptions to existing data or collecting lightweight human edits of weak model outputs. Finally, curating paired data may potentially enable training of superhuman models with preference labels on human-level outputs~\citep{burns2023weak, bowman2022measuring}.
We leave these directions for future work.

\vspace{-1mm}
\section{A Warm-up Case Study}
\label{sec:warmup}
\vspace{-2mm}
We begin our investigation with an intriguing empirical finding: training on paired preference data generated by weak models can improve a stronger model’s performance, even when finetuning directly on the weak models' outputs hurts. 

\textbf{Data.} We start with \textsc{UltraFeedback}, a popular preference dataset~\citep{cui2023ultrafeedback} consisting of preference pairs $(x, y_c, y_r)$, where $y_c$ and $y_r$ are an LLM-generated chosen and rejected response (respectively) to a prompt $x$. We filter to explicitly exclude all responses from models that have an LMSYS Chatbot Arena ELO score near or above Llama-3.2-3B-Instruct. Hence, the chosen response $y_c$ now derives from a model weaker than the Llama 3 models, although it is still higher-quality than the rejected response $y_r$. We call the resulting filtered dataset \textsc{UltraFeedback-Weak}. 
See \aref{appdx:exps:ufweak} for details.

\textbf{Training and evaluation.} We finetune Llama-3.2-3B-Instruct and Llama-3.1-8B-Instruct~\citep{touvron2023llama} on \textsc{UltraFeedback-Weak} in two ways. One, we (1) preference tune with the DPO algorithm~\citep{rafailov2024direct} on the preference pairs $(x, y_c, y_r)$. We compare to (2) supervised finetuning (SFT) directly on the chosen responses $(x, y_c)$. We evaluate models on 8 standard benchmarks that measure knowledge recall, mathematical reasoning, instruction following, truthfulness, general reasoning, and coding. See \aref{appdx:exps:ufweak} for full details of benchmarks used, along with training and hyperparameter details.

\begin{table}[t]
\centering
\begin{minipage}{0.38\textwidth}
\centering
\setlength{\tabcolsep}{2pt}
\begin{adjustbox}{max width=\textwidth}
\begin{tabular}{lccc}
\toprule
Model / Training & MMLU & AE2 & Full Avg. \\
\midrule
\textsc{Llama-3.2-3B-Inst.} & \cellcolor{gray!10}62.9 & \cellcolor{gray!10}18.7 & \cellcolor{gray!10}57.8 \\
\hspace{1mm} + \textsc{UF-Weak} SFT & \cellcolor{lossshade!75}61.8 & \cellcolor{lossshade!75}12.3 & \cellcolor{lossshade!75}54.0 \\
\hspace{1mm} + \textsc{UF-Weak} DPO & \cellcolor{gainshade!60}\textbf{64.0} & \cellcolor{gainshade!60}\textbf{22.4} & \cellcolor{gainshade!60}\textbf{59.0} \\
\midrule
\textsc{Llama-3.1-8B-Inst.} & \cellcolor{gray!10}71.8 & \cellcolor{gray!10}24.9 & \cellcolor{gray!10}63.9 \\
\hspace{1mm} + \textsc{UF-Weak} SFT & \cellcolor{lossshade!75}65.7 & \cellcolor{lossshade!75}8.9 & \cellcolor{lossshade!75}56.1 \\
\hspace{1mm} + \textsc{UF-Weak} DPO & \cellcolor{gainshade!60}\textbf{72.0} & \cellcolor{gainshade!60}\textbf{26.3} & \cellcolor{gainshade!60}\textbf{64.5} \\
\bottomrule
\end{tabular}
\end{adjustbox}
\end{minipage}%
\hfill
\begin{minipage}{0.6\textwidth}
\vspace{-3mm}
\caption{
We tune Llama 3 Instruct models on the \textsc{UltraFeedback-Weak} preference dataset, generated by models weaker than Llama 3. Training with preference learning (DPO)—to prefer ``weak responses'' over ``weak\textbf{\textit{er}} responses''—yields gains, while SFT directly on the weak preferred responses hurts performance. {\sethlcolor{gainshade!60}\hl{Blue indicates gain}} over baseline, {\sethlcolor{lossshade!75}\hl{orange degradation}}.
\vspace{-1em}
}
\label{tab:ufweak}
\end{minipage}
\end{table}

\textbf{Results.} We show a representative subset of results in \autoref{tab:ufweak}, deferring the rest to the Appendix (\autoref{tab:appdx:ufweak_full}). SFT on the chosen responses significantly hurts performance---likely because the models are finetuned to imitate weaker outputs. Yet surprisingly, preference tuning with the \textit{same} weak preference pairs improves overall performance across benchmarks. 
Thus, regardless of absolute data quality, there may exist valuable learning signal in the pairwise contrast between chosen and rejected responses, which preference tuning can leverage. We will now develop this intuition into our central hypothesis.

\section{The Delta Learning Hypothesis}
\label{sec:delta}
We hypothesize that training on paired responses $(x, y_c, y_r)$ enables learning from the relative quality difference---the delta----between $y_c$ and $y_r$. Even if both responses $y_c, y_r$ have low absolute quality compared to the model we aim to improve, as long as $y_c$ is better than $y_r$ along some informative axes, the model can learn from this delta and improve.

Formally, let $\mu(x, y)$ be the utility of a response $y$ to some prompt $x$. In practice, $\mu$ may represent human preference, or simply some arbitrary function we wish to optimize. Suppose we wish to improve a model $M$. The \textbf{delta learning hypothesis} posits that there exist natural preference pairs $(x, y_c, y_r)$, where $\mu(x, y_c) > \mu(x, y_r)$, such that two conditions hold:
\vspace{-2mm}
\begin{enumerate}[noitemsep, leftmargin=*]
    \item \textbf{Low absolute utility}: The utility $\mu(x, y_c)$ of the chosen response $y_c$ is no higher than the current capability of model $M$, and therefore supervised finetuning on $(x, y_c)$ explicitly hurts the model or at best does not help.
    \item \textbf{Extrapolated gain}: Preference tuning on the pair improves model $M$ \textit{beyond} $\mu(x, y_c)$.
\end{enumerate}
We now present experiments with language models in controlled settings where we explicitly manipulate $\mu$ and construct responses $y_c, y_r$ of varying utility to test the delta learning hypothesis; we find consistent evidence in support. Later in \autoref{sec:theory}, we theoretically study delta learning in logistic regression to better understand \textit{why} delta learning can work.

\subsection{Controlled Experiment: Stylistic Delta in Number of Bold Sections}
We start with a toy setting where we explicitly define $\mu(x, y)$ to be ``the number of Markdown-denoted bold section headers in $y$'' (e.g., **\textbf{example header}**), a measurable and controllable metric. Our hypothesis predicts that if we tune $M$ on preference pairs $(x, y_c, y_r)$ where $y_c$ contains, say, 3 sections and \(y_r\) contains 2, then $M$ should learn to produce more sections---even though 3 sections (the “better” response) is fewer than $M$'s current capability, and hence would hurt when used as SFT data. As shown below, this is indeed observed.

\textbf{Setup.} We build a dataset of prompts $x$ matched with responses $y_{k_1} \dots y_{k_n}$ containing varying numbers $k_i$ of bolded sections (details in \aref{appdx:exps:numsec}). We then tune Llama-3.2-3B-Instruct with DPO on preference pairs $(x, y_{k_i}, y_{k_j})$ formed by selecting a response $y_{k_i}$ with more sections ($k_i > k_j$) as chosen. To isolate potential confounding effects associated with preference tuning, we also consider two control settings: (1) reversing the preference pairs ($k_i < k_j$) and (2) tuning with responses containing an equal number of sections ($k_i = k_j$). We compare against SFT on the chosen responses $y_{k_i}$. See \aref{appdx:exps:numsec} for hyperparameter details. We evaluate by measuring the average number of bolded sections generated before and after training in response to a set of held-out test prompts.
\begin{table}[t]
\centering
\footnotesize
\setlength{\tabcolsep}{5pt}
\begin{adjustbox}{max width=\textwidth, center}%
\begin{tabular}{lcccc}
\toprule
Model/Algorithm & Chosen Res. & Rejected Res. & Section $\Delta$ & \textbf{\# Sections Generated} \\
\midrule
\rowcolor{gray!10} \textsc{Llama-3.2-3B-Inst.} (Baseline) & --- & --- & --- & 5.9\\
\midrule
 + SFT & 9 sections & --- & --- & 24.6 (\textbf{\textcolor{gaincolor}{\texttt{+} 18.7}})\\
 + SFT & 3 sections & --- & --- & 4.4 (\textbf{\textcolor{losscolor}{\texttt{-} 1.5}})\\
 + SFT & 2 sections & --- & --- & 2.9 (\textbf{\textcolor{losscolor}{\texttt{-} 3.0}})\\
 \midrule
 + DPO & 3 sections & 2 sections & \textbf{\textcolor{gaincolor}{\texttt{+}1}} & 81.1 (\textbf{\textcolor{gaincolor}{\texttt{+} 75.2}})\\
 + DPO & 2 sections & 3 sections & \textbf{\textcolor{losscolor}{\texttt{-}1}} & 1.1 (\textbf{\textcolor{losscolor}{\texttt{-} 4.8}})\\
 + DPO & 3 sections & 3 sections  & 0 & 6.1 (\textbf{\textcolor{gray}{\texttt{+} 0.2}})\\
\bottomrule
\end{tabular}
\end{adjustbox}
\vspace{-3mm}
\caption{
We train Llama-3.2-3B-Instruct with DPO on preference pairs with responses containing a varying number of bold sections, and compare to SFT on the chosen response directly. When responses contain fewer sections than the model's baseline (i.e., $<$ 5.9 sections), SFT decreases the number of sections generated. In contrast, preference tuning can leverage the \textit{delta} between responses, increasing the number of sections generated even when each response is individually suboptimal.
\vspace{-3mm}
}
\label{tab:numsec}
\end{table}

\textbf{Results.} Results in \autoref{tab:numsec} strongly support our hypothesis: SFT only helps when the training responses are higher quality than the model's baseline ``capability'' (i.e., response contains more sections than what the model generates), and otherwise hurts. In contrast, even when responses are individually weak according to our defined $\mu$, pairing them together with a positive delta massively boosts section generation, extrapolating beyond the number of sections contained in the chosen response (see \autoref{fig:appdx:numsec}---the model learns to make nearly every single word a new section header!). Our negative controls (preference tuning with negative delta or zero delta) do not yield gains; the positive delta is thus critical.

\vspace{-1mm}
\subsection{Controlled Experiment: Semantic Delta from a Weaker Model}
\label{sec:delta:selfimprove}
To test whether our hypothesis extends beyond a one-dimensional style feature to general semantic quality, we next study the delta between self-generated outputs and outputs from a weaker~model.
Specifically, suppose we wish to improve model $M$. Given a set of prompts $\{x\}$, we can use $M$ to greedy decode responses $y_M = M(x)$. By construction, the quality of these responses exactly match $M$'s capability, $\mu(x, y_M) = \mu(x, M(x))$. Consequently, we would not expect SFT on $(x, y_M)$ to improve $M$'s overall performance. In contrast, our hypothesis predicts that creating a quality delta by pairing self-generated responses $y_M$ with semantically \textit{weaker} responses $y_m$ may provide sufficient signal for preference tuning to improve $M$. A simple way to obtain such weaker responses is to use a smaller (weaker) model $m$ from the same model family as $M$ and greedy decode $y_m = m(x)$. 

While similar in spirit to our pilot experiment (\autoref{sec:warmup}), this setup guarantees by construction that $M$ never observes any single chosen response of higher quality than it can currently produce. Our experiment also studies whether we can learn from a \textit{noisy} delta, as even though $\mu(x, M(x)) > \mu(x, m(x))$ on average, some responses $y_m$ from the smaller model may surpass corresponding responses $y_M$ from the larger model on individual prompts. 

\textbf{Setup.} We randomly sample 50k prompts $x$ from the Tülu 3 SFT dataset~\citep{lambert2024tulu3}. We greedy decode chosen responses $y_M$ with Llama-3.1-8B-Instruct (the model we later train) and rejected responses $y_m$ with Llama-3.2-3B-Instruct. We tune Llama-3.1-8B-Instruct with DPO to prefer its own responses $y_M$ over outputs $y_m$ from its smaller 3B sibling. We compare to SFT on $y_M$. As a negative control, we also preference tune to prefer $y_m$ over $y_M$. See \aref{appdx:exps:selfimprove} for training details. We use the same evaluations from \autoref{sec:warmup}. 

\textbf{Results.} Results in \autoref{tab:selfimprove} further support our hypothesis. 
SFT on self-generated greedy responses reduces average performance by 1.2 points, possibly due to overfitting on these outputs at the expense of broader ability.
In contrast, pairing self-generated responses with weaker responses creates a positive delta that drives learning beyond the baseline model's performance. This approach yields small but consistent gains on nearly all benchmarks, with a 0.4-point gain on average. Our negative control---flipping the preference order of self-generated and weaker responses---eliminates these gains and worsens overall performance ($-$0.7 points). Thus, the improvement comes specifically from the positive delta created by pairing with weaker responses, rather than general effects of preference tuning.

\begin{table}[t]
\setlength{\tabcolsep}{3pt}
\centering
\begin{adjustbox}{max width=\textwidth, center}%
\begin{tabular}{lcccccccccccc}
\toprule
       Model/Training Setup &  MMLU & MATH &  GSM &    AEval2 &  IFEval &  BBH &  TQA & HEval+ & Avg. \\
\midrule
\textsc{Llama-3.2-3B-Instruct} (Weaker) & \cellcolor{gray!10} 62.9 & \cellcolor{gray!10} 39.6 & \cellcolor{gray!10} 75.7 & \cellcolor{gray!10} 18.7	& \cellcolor{gray!10} 76.5 & \cellcolor{gray!10} 61.6 & \cellcolor{gray!10} 50.6 & \cellcolor{gray!10} 76.8 & \cellcolor{gray!10} 57.8 \\
\midrule
\textsc{Llama-3.1-8B-Instruct} (Baseline) & \cellcolor{gray!10} 71.8 & \cellcolor{gray!10} 43.0 & \cellcolor{gray!10} 83.7 & \cellcolor{gray!10} 24.9 & \cellcolor{gray!10} 78.2 & \cellcolor{gray!10} 72.7 & \cellcolor{gray!10} 55.1 & \cellcolor{gray!10} 81.6 & \cellcolor{gray!10} 63.9 \\
 \hspace{1mm}+ SFT (self-generated responses) & \cellcolor{gainshade!60} \textbf{72.2} & \cellcolor{lossshade!75} 42.2 & \cellcolor{lossshade!75} 82.9 & \cellcolor{lossshade!75} 24.3 & \cellcolor{lossshade!75} 76.3 & \cellcolor{lossshade!75} 72.1 & \cellcolor{lossshade!75} 53.4 & \cellcolor{lossshade!75} 78.0 & \cellcolor{lossshade!75} 62.7
 \\
 \hspace{1mm}+ DPO (self-generated over weaker) & \cellcolor{gainshade!60} 72.0 & \cellcolor{gainshade!60} \textbf{43.5} & \cellcolor{gainshade!60} \textbf{84.2} & \cellcolor{gainshade!60} \textbf{25.7} & \cellcolor{gainshade!60} \textbf{80.0} & \cellcolor{lossshade!75} 71.4 & \cellcolor{gainshade!60} \textbf{55.6} & \cellcolor{gainshade!60} \textbf{82.2} & \cellcolor{gainshade!60} \textbf{64.3} \\
 \hspace{1mm}+ DPO (weaker over self-generated) & \cellcolor{lossshade!75} 70.9 & \cellcolor{lossshade!75} 42.3 & \cellcolor{lossshade!75} 83.4 & \cellcolor{lossshade!75} 22.9 & \cellcolor{gainshade!60} 78.6 & \cellcolor{lossshade!75} 72.1 & \cellcolor{lossshade!75} 54.6 & \cellcolor{lossshade!75} 80.5 & \cellcolor{lossshade!75} 63.2 \\
\bottomrule
\end{tabular}
\end{adjustbox}
\vspace{-3mm}
\caption{We train Llama-3.1-8B-Instruct using greedy responses generated by itself and by its weaker sibling, Llama-3.2-3B-Instruct. SFT on self-generated responses---which, by definition, equal the model’s current capability---does not yield gains. In contrast, training with DPO to prefer self-generated responses over weaker ones can exploit the delta between them and{\sethlcolor{gainshade!60} \hl{yield consistent improvement}}. Reversing the preference order{\sethlcolor{lossshade!75} \hl{hurts performance}}; the positive delta is critical, not generic effects of preference tuning.
\vspace{-2mm}
}
\label{tab:selfimprove}
\end{table}

\section{Post-training Language Models with Delta Learning}
\label{sec:posttrain}
\vspace{-1mm}
Having validated our hypothesis in controlled settings, we now test its applicability in a realistic, large-scale setting: 8B LLM post-training. Current open recipes extensively rely on strong LLMs (e.g., GPT-4o) to generate preference data with high-quality chosen responses to learn from~\citep{lambert2024tulu3, ivison2023camels}. However, the delta learning hypothesis suggests that preference tuning can be effective even when chosen responses are \textit{not} high quality, provided we can construct a meaningful delta to a weaker rejected response. Pushing this idea to its logical extreme, we propose a simple preference tuning setup that explicitly eliminates the use of any strong LLMs (i.e., larger than 3B parameters) for either response generation or preference annotation from an existing state-of-the-art synthetic preference data recipe. Surprisingly, we find our simplifying changes incur little performance trade-off, enabling a significantly cheaper and more accessible post-training recipe that reduces the reliance of open recipes on strong model distillation.

\begin{figure}[htbp]
  \centering
  \includegraphics[width=\linewidth]{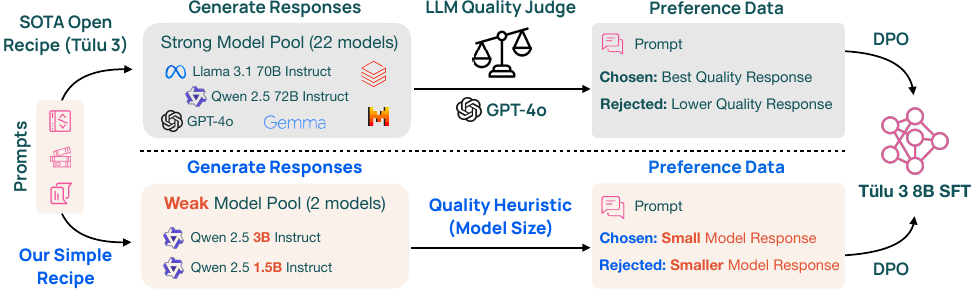}
  \vspace{-1em}
  \caption{We simplify the \textcolor{black}{\textbf{Tülu 3 preference data recipe}} (top half). Instead of using a GPT-4o judge to pick the best response from many strong models as chosen, \textcolor{setupcolor}{\textbf{our recipe (bottom half)}} uses a single small model to generate all chosen responses, relying on the implicit delta to an even smaller (and thus weaker) model's responses to drive downstream learning. 
  \vspace{-1mm}
}\label{fig:tülu_setup}
\end{figure}
\vspace{-2mm}

\subsection{A State-of-the-Art Existing Setup: The Tülu 3 Recipe}
\vspace{-1mm}
To contextualize our simplifications, we first detail Tülu 3~\citep{lambert2024tulu3}, the current state-of-the-art recipe in open-source post-training. Tülu 3 comprises a series of 8B and 70B language models post-trained on top of the Llama 3 base models, achieving performance that matches or exceeds equivalently-sized proprietary models. Hence, we adopt the Tülu 3 8B preference tuning recipe (\autoref{fig:tülu_setup}, top half) as an ideal starting point for our exploration.

Tülu 3 preference data is constructed
starting from 271k diverse prompts. Responses are generated using strong modern LLMs (e.g., Llama-3.1-70B-Instruct, Qwen-2.5-72B-Instruct~\citep{yang2024qwen2}, etc.). A frontier LLM (GPT-4o) then scores these responses; preference pairs are formed by selecting the highest-scoring response as chosen and a lower-scoring one as rejected. This data is then used to DPO tune an intermediate instruction-finetuned model (Tülu-3-8B-SFT), yielding the preference tuned model Tülu-3-8B-DPO.

Besides the substantial cost of GPT-4o annotation ($\sim$\$10,000 USD), the Tülu 3 preference tuning recipe fundamentally assumes access to supervision sources stronger than the model being trained (i.e., an 8B model), both for (1) generating high-quality responses (i.e., using 70B models) and (2) annotating response quality (using GPT-4o). As we demonstrate below, this assumption can be entirely eliminated.

\subsection{Our Simple Recipe: Constructing Preference Data without Strong Supervision}
Our recipe, illustrated in 
\autoref{fig:tülu_setup} (bottom half), simplifies the Tülu 3 preference tuning recipe while keeping the starting model checkpoint (Tülu-3-8B-SFT) and initial prompts fixed to isolate our changes. We intervene by removing all use of strong models: 

\textbf{Chosen response generation.} Starting from the same set of prompts as the Tülu 3 dataset, we generate all chosen responses with a single small model (e.g., Qwen 2.5 3B Instruct) that is near or below the capability of Tülu-3-8B-SFT (as measured by downstream evaluations, see below). With a 3B chosen model, this change reduces the FLOPs needed for data generation by over an order of magnitude ($\sim$6\% of the original).

\textbf{Forming preference pairs.} We eliminate GPT-4o quality annotations entirely. Drawing from our findings in \autoref{sec:delta:selfimprove}, we simply use model size as a proxy for quality. We pair every chosen response with a response from the next-smallest model in the same model family (e.g., pair Qwen 2.5 3B Instruct with Qwen 2.5 1.5B Instruct). While this heuristic is noisy---the smaller model might occasionally generate better responses---our previous controlled experiments show that learning can still occur with such noisy semantic deltas.

With our simplified pipeline, we construct three preference datasets, generating chosen responses with either (1) Qwen-2.5-3B-Instruct (paired with Qwen-2.5-1.5B-Instruct), (2) Qwen-2.5-1.5B-Instruct (paired with 0.5B), or (3) Llama-3.2-3B-Instruct (paired with Llama-3.2-1B-Instruct). We choose these exact models because the original pool of data-generating models in Tülu 3's preference data recipe explicitly includes larger models from both the Qwen 2.5 and Llama 3 model families, while excluding these small models. Hence, the chosen responses in the original Tülu 3 preference data are of significantly higher absolute quality compared to our setup (i.e., 4.44/5 absolute quality points as judged by GPT-4o, versus 3.98/5 in our setup with Qwen-2.5-3B-Instruct generating chosen responses). Our setup thus relies more on the \textbf{delta} between chosen and rejected responses to drive learning. See \aref{appdx:weak_dataset} for qualitative examples of these deltas, as well as additional statistics of our datasets (e.g., response length, vocabulary diversity, etc.).
\begin{table}[t]
\setlength{\tabcolsep}{2pt}
\centering
\begin{adjustbox}{max width=\textwidth, center}%
\begin{tabular}{lcccccccccccc}
\toprule
    Model/Preference Data  &  MMLU & PopQA & MATH &  GSM &  AE2 &  IFEval & BBH & DROP & TQA & HEval & HEval+ & Avg. \\
\midrule
    \rowcolor{gray!10} \textsc{Llama-3.2-1B-Instruct} & 46.1 & 13.9 & 21.1 & 44.4 & 8.8 & 54.5 & 40.2 & 32.2 & 40.0 & 64.8 & 60.0 & 38.7\\
    \rowcolor{gray!10} \textsc{Llama-3.2-3B-Instruct} & 62.9 & 19.4 & 39.6 & 75.7 & 18.7 & 76.5 & 61.6 & 48.5 & 50.6 & 79.7 & 76.8 & 55.5\\
    \rowcolor{gray!10} \textsc{Qwen-2.5-0.5B-Instruct} & 46.2 & 10.1 & 27.2 & 39.2 & 3.3 & 28.8 & 32.2 & 25.3 & 45.4 & 60.5 & 58.9 & 34.3\\
    \rowcolor{gray!10} \textsc{Qwen-2.5-1.5B-Instruct} & 59.7 & 15.4 & 41.6 & 66.2 & 7.2 & 44.2 & 45.9 & 14.1 & 46.5 & 83.0 & 79.8 & 45.8\\
    \rowcolor{gray!10} \textsc{Qwen-2.5-3B-Instruct} & 69.5 & 15.7 & 63.1 & 77.7 & 17.8 & 64.0 & 57.6 & 31.5 & 57.2 & 90.5 & 87.4 & 57.5\\
\midrule
    \textsc{Tülu-3-8B-SFT} & 66.1 & 29.6 & 31.2 & 76.0 & 12.2 & 71.3 & 69.2 & 61.2 & 46.8 & \textbf{86.2} & 79.8 & 57.2 \\
    \hspace{1mm}+ Llama 3.2 3B over 1B & 68.8 & 30.3 & 40.9 & 81.5 & 24.9 & 75.0 & 70.0 & 60.7 & 54.2 & 84.7 & 81.1 & 61.1 \\
    \hspace{1mm}+ Qwen 2.5 1.5B over 0.5B & 67.4 & 29.9 & 39.9 & 79.8 & 15.8 & 72.5 & \textbf{70.8} & 61.8 & 52.1 & 83.7 & 78.1 & 59.3\\
    \rowcolor{gainshade!70} \hspace{1mm}+ Qwen 2.5 3B over 1.5B & 69.4 & \textbf{31.7} & \textbf{42.6} & 83.4 & \textbf{36.1} & 78.6 & 69.4 & 62.0 & \textbf{57.7} & 84.4 & \textbf{81.7} & \textbf{63.4}\\
    \cmidrule[0.2pt]{1-13}
    \rowcolor{posttraintulu!40} \hspace{1mm}+ Tülu 3 Preference Dataset & \textbf{69.8} & 30.3 & \textbf{42.6} & \textbf{84.2} & 32.8 & \textbf{80.4} & 69.2 & \textbf{62.5} & 56.1 & 84.7 & 80.8 & 63.0\\
    \bottomrule
    \end{tabular}
\end{adjustbox}
\vspace{-2mm}
\caption{
We train Tülu-3-8B-SFT with DPO on preference data constructed with our simple recipe, which pairs outputs from a weak model (chosen response) with outputs from an even weak\textbf{\textit{er}} model (rejected). Strikingly, our {\sethlcolor{gainshade!70}\hl{best setup}} matches the {\sethlcolor{posttraintulu!40}\hl{original Tülu 3 preference
data}}, which requires vastly stronger supervision (e.g., from GPT-4o). We generate our data using models that are near or below Tülu-3-8B-SFT in average performance (top half).
\vspace{-2mm}
}
\label{tab:posttrain}
\end{table}

\subsection{Matching Tülu with Delta Learning}
We preference tune Tülu-3-8B-SFT on our weak preference datasets using DPO, tuning hyperparameters following~\cite{lambert2024tulu3} (see \aref{appdx:exps:posttrain}). We evaluate our models on all benchmarks from \autoref{sec:warmup} as well as three additional benchmarks measuring model capabilities for consistency with Tülu 3's evaluations (see \aref{appdx:evals}). We show further results on six safety evaluations in \aref{appdx:safety}. To compare our simple preference data against the Tülu 3 preference data, we evaluate the official Tülu-3-8B-DPO model.

Our main results in \autoref{tab:posttrain} reveal a striking finding: \textbf{tuning with our simple weak preference data recipe matches the original Tülu 3 recipe in performance}, achieving a +0.4 point average gain over the Tülu 3 preference data when using Qwen-2.5-3B-Instruct to generate chosen responses. Even though Tülu 3 preference data uses strong model supervision to synthesize chosen responses of significantly higher absolute quality, the quality delta between chosen and rejected responses in our weak pairs still suffices to produce comparable gains---it is possible to learn a surprising amount from the quality delta alone. 

Consistent with our hypothesis, we observe gains when tuning with any of our three weak datasets. For instance, tuning with chosen responses from Qwen-2.5-1.5B-Instruct---which is 11.4 points worse than Tülu-3-8B-SFT on average---still yields a +2.1 point gain in average performance. The delta learning phenomenon also holds on the level of individual tasks; for example, tuning on Llama-3.2-3B-Instruct chosen responses boosts GSM8K accuracy by 5.5 points, despite Llama being weaker than Tülu on GSM8K. Finally, our preference data recipe yields gains when using either Llama or Qwen models to generate preference data, suggesting that it is not reliant on idiosyncrasies of a specific model family.

\section{Analysis}
\label{sec:analysis}
We identify four factors in our simple recipe that may impact preference tuning performance and study each: (1) the magnitude of the quality delta between chosen and rejected responses, (2) the absolute quality of chosen responses, (3) our model size-based reward heuristic, and (4) our choice of Tülu-3-8B-SFT as the base model to tune from. We use the same 11-benchmark evaluation as in \autoref{sec:posttrain}. Unless noted otherwise, all models are tuned from Tülu-3-8B-SFT with DPO. We defer full training details to \aref{appdx:exps:analysis}.

\subsection{How does the magnitude of the chosen-rejected quality delta affect learning?}
\label{sec:analysis:margin}
Using our simple recipe, we construct 21  preference tuning datasets containing chosen and rejected responses with varying absolute quality (and thus varying deltas). Starting with Tülu 3 prompts, we synthesize preference pairs with all 21 possible model pairs from the Qwen 2.5 Instruct family, which we select for its wide variety of model sizes (0.5B, 1.5B, 3B, 7B, 14B, 32B, 72B). Following our recipe, we select the larger model's response as chosen (e.g., 72B over 7B). To quantify the absolute quality of these responses, we apply the GPT-4o annotation method from~\cite{lambert2024tulu3} to score 10k responses from each model on a 1–5 scale. 
We plot average performance after preference tuning against the average pairwise delta between chosen and rejected responses in \autoref{fig:margin}. Our results suggest that:
\begin{figure}[tbp]
  \centering
  \includegraphics[width=\linewidth]{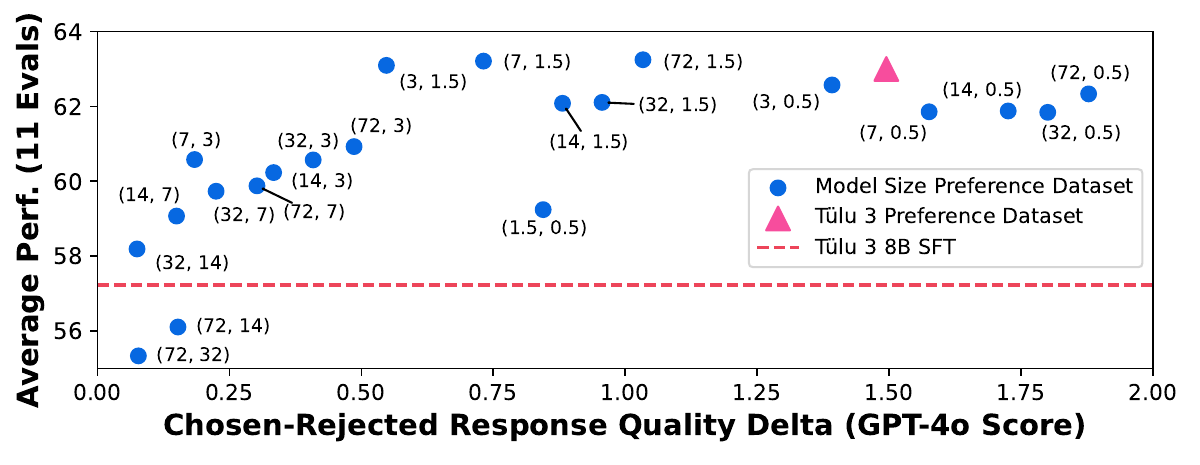}
  \vspace{-2.5em}
  \caption{The size of the quality delta between chosen and rejected responses is a strong predictor of downstream preference tuning performance. Performance improves as the delta increases, up to approximately $\Delta \approx 0.55$, beyond which gains plateau. Each dot in the plot represents a model preference-tuned from Tülu-3-8B-SFT, using either the \textcolor{posttraintulu}{\textbf{original Tülu 3 data}} or data generated by pairing two Qwen 2.5 Instruct models following \textcolor{setupcolor}{\textbf{our simple recipe}}. Numbers in parentheses indicate the parameter counts (B) of the paired models.
  }
  \vspace{-1em}
\label{fig:margin}
\end{figure}

\textbf{The magnitude of the delta strongly predicts downstream preference tuning performance, with a minimal delta required for optimal gains.} We observe a strong positive correlation between the magnitude of the delta and downstream performance, up until approximately $\Delta \approx 0.55$, after which performance peaks and plateaus. Beyond this threshold, further increases in the delta do not yield additional downstream gains. Interestingly, \textbf{the Tülu 3 preference dataset falls in line with this same trend}, despite being generated via a significantly different recipe. This alignment provides some evidence for why our simple recipe can match Tülu 3 in performance: both the Tülu 3 dataset and our Qwen-2.5-3B-Instruct-generated weak data exhibit deltas that are above the saturation threshold, after which nearly all datasets perform comparably.

An outlier to this trend is the dataset with Qwen-2.5-1.5B-Instruct-generated chosen responses. The gain over Tülu-3-8B-SFT from tuning on this data is positive, but smaller than what its delta size alone would predict. We conjecture that this is because Qwen 1.5B is the only chosen model considered that is \textit{substantially} weaker than the Tülu-3-8B-SFT base.

\textbf{Not all positive deltas drive learning.} 
Tuning on preference datasets generated by pairing the 72B Qwen model with 32B or 14B-sized models \textit{hurts} performance---these are the only two points that fall below the Tülu-3-8B-SFT dashed line in \autoref{fig:margin}. We observe that the log-likelihoods of both the chosen and rejected responses decrease during DPO training on these datasets (an occasional phenomenon in DPO training, see~\cite{razin2024unintentional}). We hypothesize that this effect may be particularly harmful when both chosen and rejected responses are much stronger than the base model being tuned; the model is optimized to downweight good behaviors. It remains an open question as to what properties of the delta---beyond magnitude---are necessary to drive effective learning, and how these factors interact with the choice of preference tuning algorithm.

\subsection{How does the chosen response's absolute quality affect learning?}
\label{sec:analysis:absolute}
\begin{figure}[tbp]
\begin{minipage}{0.54\textwidth}
\includegraphics[width=\linewidth]{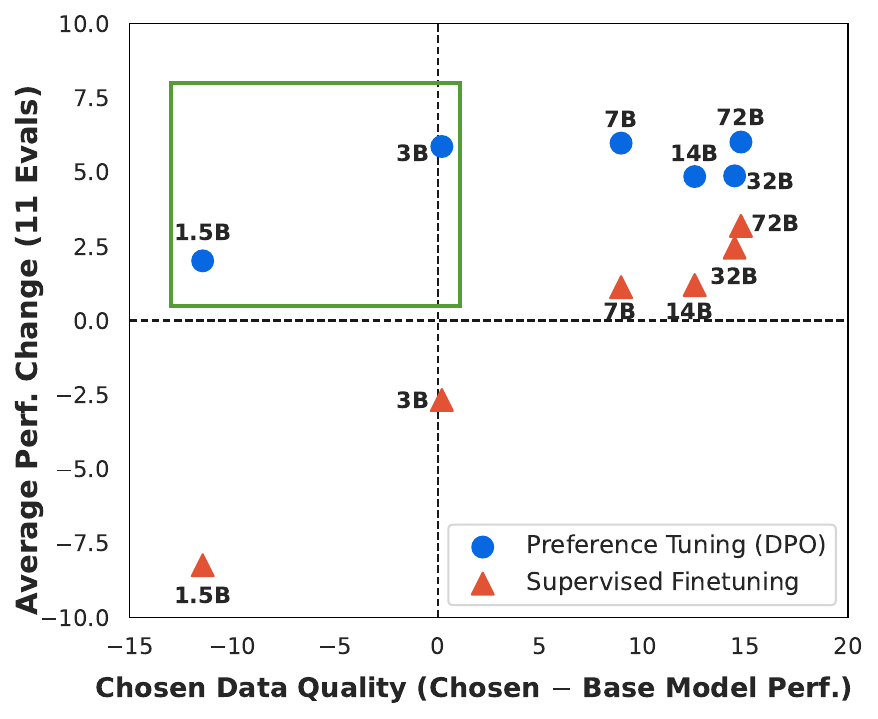}
\end{minipage}\hfill
\begin{minipage}{0.45
\textwidth}
\vspace{-2mm}
\caption{\textbf{Impact of chosen data quality}: We plot the performance of the model used to generate chosen responses ($x$-axis) against performance change after tuning on those responses ($y$-axis). \textcolor{orangedotcolor}{\textbf{SFT}} only yields gains when training on responses from models stronger than the base model we tune from; gains scale with chosen quality. \textcolor{setupcolor}{\textbf{DPO}} yields gains even when the chosen responses come from models no stronger than the base (\textcolor[HTML]{549c35}{\textbf{green box}}); increasing chosen response quality gives quickly diminishing returns. Each dot represents a model tuned on chosen responses generated by a Qwen 2.5 Inst. model (parameter size labeled).
}
\label{fig:absolute}
\end{minipage}
\end{figure}
We group the 21 preference datasets described above (\autoref{sec:analysis:margin}) according to the strength of the Qwen 2.5 model used to generate the chosen response (i.e., 1.5B, 3B, 7B, 14B, 32B, or 72B parameters). For each group, we selected the dataset that yields the highest downstream performance after preference tuning as a best-case measure for the performance achievable when tuning on chosen responses generated by a model of a given strength. For comparison, we also evaluated supervised finetuning directly on the responses generated from each of the Qwen models. Results in \autoref{fig:absolute} show that preference tuning generally outperformed SFT; preference tuning with 3B chosen responses yielded higher gains than SFT on even 72B responses\footnote{Tülu-3-8B-SFT has alredy been finetuned with even stronger supervision (e.g., human-written or GPT-4o responses), potentially diminishing the benefits of additional SFT.}. Moreover, we observe distinct qualitative trends for each tuning approach:

\textbf{Supervised finetuning: performance scales with chosen quality.} Performance gains (above $y$=0) only occur when tuning on responses from Qwen models stronger than our base Tülu-3-8B-SFT model (to the right of $x$=0). Gains scale monotonically with the chosen responses' absolute quality, increasing as the data-generating model's strength increases.

\textbf{Preference tuning: performance is less dependent on chosen quality, but saturates.} Tuning on chosen responses of any quality yielded gains---even when generated by models weaker than or equal to our base model (green box in figure). Still, chosen quality matters; tuning on chosen responses from Qwen 1.5B (much weaker than our base model) yields smaller gains than using stronger responses. However, once response quality reaches that of the tuned model ($x$=0), further improvements in chosen quality do not significantly improve performance. This saturation effect further explains why our weak preference data matches Tülu 3, despite using chosen responses with lower absolute quality.

\begin{table}[t]
\setlength{\tabcolsep}{2pt}
\begin{adjustbox}{max width=\textwidth, center}%
\centering
\begin{tabular}{lcccccccccccc}
\toprule
    Model/Preference Data  &  MMLU & PopQA & MATH &  GSM &  AE2 &  IFEval & BBH & DROP & TQA & HEval & HEval+ & Avg. \\
\midrule
    \textsc{Tülu-3-8B-SFT} & \cellcolor{gray!10} 66.1 & \cellcolor{gray!10}  29.6 & \cellcolor{gray!10} 31.2 & \cellcolor{gray!10} 76.0 & \cellcolor{gray!10} 12.2 & \cellcolor{gray!10} 71.3 & \cellcolor{gray!10} 69.2 & \cellcolor{gray!10} 61.2 & \cellcolor{gray!10} 46.8 & \cellcolor{gray!10} \textbf{86.2} & \cellcolor{gray!10} 79.8 & \cellcolor{gray!10} 57.2 \\
    \hspace{1mm}+ Model size heuristic & 69.4 & \textbf{31.7} & \textbf{42.6} & 83.4 & \textbf{36.1} & 78.6 & \textbf{69.4} & \textbf{62.0} & 57.7 & 84.4 & \textbf{81.7} & \textbf{63.4}\\
    \hspace{1mm}+ GPT-4o judge reward & \textbf{69.9} & 31.5 & 40.6 & \textbf{83.9 }& 29.9 & \textbf{79.5} & 66.5 & 61.2 & \textbf{62.4} & 85.7 & 80.8 & 62.9 \\
\midrule
    \textsc{Olmo-2-7B-SFT} & \cellcolor{gray!10} 61.4 & \cellcolor{gray!10} \textbf{23.6} & \cellcolor{gray!10} 25.3 & \cellcolor{gray!10} 73.5 & \cellcolor{gray!10} 8.4 & \cellcolor{gray!10} 66.5 & \cellcolor{gray!10} 49.3 & \cellcolor{gray!10} 59.6 & \cellcolor{gray!10} 48.6 & \cellcolor{gray!10} 70.0 & \cellcolor{gray!10} 63.8 & \cellcolor{gray!10} 50.0 \\
    \hspace{1mm}+ Qwen 2.5 3B over 1.5B & \textbf{62.9} & \textbf{23.6} & 30.0 & 80.6 & \textbf{31.0} & 71.5 & \textbf{50.9} & 59.3 & \textbf{56.3} & \textbf{72.6} & \textbf{66.6} & \textbf{55.0}\\
    \hspace{1mm}+ OLMo 2 Preference Dataset & 61.9 & 23.5 & \textbf{30.3} & \textbf{83.1} & 27.7 & \textbf{72.3} & \textbf{50.9} & \textbf{60.2} & 56.0 & 70.7 & 66.2 & 54.8\\

\bottomrule
    \end{tabular}
\end{adjustbox}
\caption{
\textbf{Top half.} We ablate our use of model size as a heuristic to label preference pairs in our simple recipe, and find that it is a good proxy for GPT-4o judge reward. \textbf{Bottom half.} We ablate our choice of tuning from Tülu-3-8B-SFT. We find that our setup generalizes to OLMo-2-7B-SFT, matching the original OLMo 2 preference recipe's performance.
\vspace{-1.5em}
}
\label{tab:olmo_gpt4}
\end{table}

\subsection{Ablations} 
\label{sec:analysis:ablations}
We ablate our choice of (1) \textbf{using model size as a preference heuristic} (as opposed to annotating preferences with an LLM judge) and (2) \textbf{tuning from Tülu-3-8B-SFT}. We discuss results (in \autoref{tab:olmo_gpt4}) below. See \aref{appdx:results:simpo} for a further ablation on our choice of preference tuning algorithm; we find that delta learning also succeeds with SimPO~\citep{meng2024simpo}.

\paragraph{Model size preference heuristic.} 
Using Tülu 3's GPT-4o judge method, we re-labeled our best-performing weak preference dataset (responses from Qwen-2.5-3B-Instruct paired with 1.5B responses). We find that \textbf{the model size heuristic is a surprisingly accurate proxy for GPT-4o preferences}, with an 80.5\% agreement rate.
For context, GPT-4’s agreement rate with human annotators has been estimated at around 65\%~\citep{dubois2023alpacafarm}. Preference tuning with either GPT-4o preference labels or model size heuristic labels yielded comparable performance (\autoref{tab:olmo_gpt4}, top half), except on AlpacaEval 2 (see discussion in \aref{appdx:results:alpaca}). Overall, our findings (1) validate our approach of using model size to ensure a chosen-rejected quality delta, and (2) show that learning from the delta between weak responses can succeed independently of the specific preference signal used.

\paragraph{Choice of base model.} 
To test the generality of our preference tuning recipe across base models, we use it to tune OLMo-2-7B-SFT. Using prompts from the OLMo 2 Preference Dataset---constructed in a near-identical manner to the Tülu 3 data---we generate chosen and rejected responses using Qwen-2.5-3B-Instruct and 1.5B, respectively. We choose this pair as it was the best pair from \autoref{sec:posttrain}.
We show results of training in \autoref{tab:olmo_gpt4} (bottom half). Consistent with our earlier comparison against Tülu 3, our simple recipe matches the OLMo 2 preference data (+0.2 points average), which requires far stronger supervision.

\section{Delta Learning in Logistic Regression, Provably}
\label{sec:theory}
We have shown empirical evidence of our hypothesis (\autoref{sec:delta}) and utilized it for performant language model post-training (\autoref{sec:posttrain}). Now, we seek to deepen our intuition for \textit{why} delta learning works. We analyze a binary logistic regression setup where we preference tune a student model to prefer pseudo-labels from one teacher over pseudo-labels from a \textit{weaker} teacher. We prove that this teacher strength gap alone can guarantee that the learner improves with high probability, even when both teachers are weaker than the student.

\subsection{Problem Setup and Notation}
\label{sec:theory:setup}
\paragraph{Preliminaries.} We study binary classification with intercept-free logistic regression. Suppose that input data points are drawn as $x\!\sim\!\normal(0, I_d)$ and that labels $y^* \in \{0,1\}$ are assigned according to some ground-truth unit-norm parameter $\theta^* \in \R^d$:
\begin{align}
    y^* = \mathbbm{1} \{\iprod{\theta^*, x} \geq 0\}.
\end{align}
Hence, the input data is linearly separable (realizable), and by construction $\theta^*$ achieves 
zero population 0-1 loss (i.e., perfect classification accuracy). 
Because the input distribution is an isotropic Gaussian, the population 0-1 loss incurred by any model $\theta\in\R^d$ depends only on its angle with the ground truth unit parameters $\theta^*$:
\begin{align}
    \zoloss(\theta) \coloneq \Pr_{x \sim \normal (0, I_d)}{\bigl[\sgn{\iprod{\theta,x}} \ne \sgn{\iprod{\theta^*, x}}\bigr]} = \frac{1}{\pi}\arccos\frac{\iprod{\theta,\theta^*}}{\normt{\theta}}.
\end{align}
Thus, the classification accuracy---or \textit{strength}---of any model $\theta$ is proportional to its cosine similarity with $\theta^*$, defined as
\begin{align}
    \cos(\theta,\theta^*) \coloneq \iprod{\theta, \theta^*}/\normt{\theta}\normt{\theta^*} = \iprod{\theta, \theta^*}/\normt{\theta}.
\end{align}
Improving the model $\theta$ is equivalent to improving $\cos(\theta,\theta^*)$.

\paragraph{Student and teachers.} Fix an arbitrary student model $\theta_0 \in \R^d$ that we aim to improve, $\theta_0 \ne \theta^*$, and fix two teacher models $\theta_c, \theta_r$. We write $\alpha_0, \alpha_c, \alpha_r$ to denote cosine similarity of each model with the ideal parameters $\theta^*$:
\begin{align}
    \alpha_0 \coloneq \cos(\theta_0, \theta^*), \quad  \alpha_c \coloneq \cos(\theta_c, \theta^*), \quad  \alpha_r \coloneq \cos(\theta_r, \theta^*).
\end{align}
We assume that $\alpha_c>\alpha_r$, so that teacher $\theta_c$ is a stronger model than teacher $\theta_r$ in expectation over the data population. We have no other assumptions on the teachers or the strength $\alpha_0$ of the student $\theta_0$. Given any input data point $x$, we assign chosen and rejected \textit{pseudo-labels}
\begin{align}
    y_{c} = \mathbbm{1} \{\iprod{\theta_c, x} \geq 0\}, \qquad y_{r} = \mathbbm{1} \{\iprod{\theta_r, x} \geq 0\},
\end{align}
forming a preference pair $(x, y_{c}, y_{r})$ annotated with $y_c \succ y_r$ that we use to train the student. With this procedure, some pairs may have incorrect annotations; the chosen pseudo-label can be \textit{incorrect} while the rejected pseudo-label correctly matches the true label $y^*$. We will show that learning succeeds regardless of this noise, so long as $y_c$ is more correct than $y_r$ on average across all pairs (i.e. $\Pr[y_c = y^*] > \Pr[y_r = y^*]$, or equivalently, $\alpha_c > \alpha_r$).

\paragraph{Delta learning training procedure.}
We optimize a naïve preference loss with mini-batch SGD. Given a fresh batch of $B$ preference pairs $\bigl\{(x^{(i)}, y^{(i)}_{c}, y_{r}^{(i)})\bigr\}_{i=1}^B$ with sampled covariates $x^{(i)}~\overset{\mathrm{iid}}{\sim}~\normal(0, I_d)$, we update the learner as
\begin{align}
    \theta_{t+1} \leftarrow \theta_t - \eta\,\sum_{i=1}^m \nabla_{\theta}\mathcal{L}_{\text{pref}}(x^{(i)}, y_{c}^{(i)}, y_{r}^{(i)}; \theta_t), \label{eq:update_rule}\\
     \mathcal{L}_{\text{pref}}(x, y_c, y_r; \theta) \coloneq - \bigl(\log{p_{\theta}(y_c | x)} - \log{p_{\theta}(y_r | x)}\bigr). \label{eq:naive_pref_loss}
\end{align}
Here, $\eta > 0$ is the learning rate and $p_\theta(y=1|x) = \sigma(\iprod{\theta, x})$. The loss $\mathcal{L}_{\text{pref}}$ can be seen as an unnormalized version of the SimPO loss~\citep{meng2024simpo}; we drop the normalization for theoretical simplicity. Intuitively, we are upweighting labels from the stronger teacher $\theta_c$ and downweighting labels from the weaker teacher $\theta_r$.

\subsection{Delta Learning Succeeds with High Probability}
\label{sec:theory:theorem_sketch}

\begin{figure}[tbp]
\begin{minipage}{0.46 \textwidth}
\includegraphics[width=\linewidth]{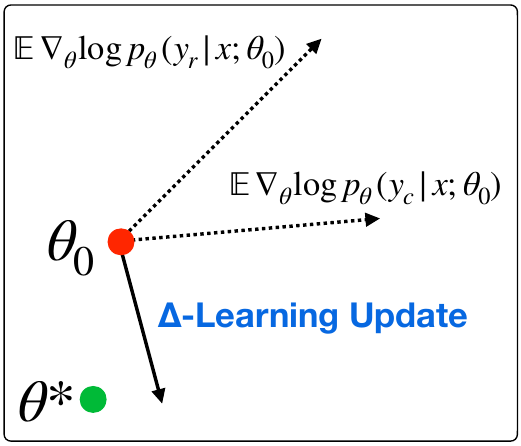}
\end{minipage}\hfill
\begin{minipage}{0.52
\textwidth}
\vspace{-2mm}
\caption{We train \textcolor[HTML]{FF2600}{\textbf{student model $\theta_0$}} to prefer pseudo-labels $y_c$ from a (possibly weak) teacher model $\theta_c$ over pseudo-labels $y_r$ from a \textit{weaker} teacher $\theta_r$. Individually, the gradient updates $\nabla_\theta \log p_\theta(y_c|x; \theta_0), \nabla_\theta \log p_\theta(y_c|x; \theta_0)$ (dashed arrows) induced by the weak labels can be harmful in expectation and steer the student away from the \textcolor[HTML]{18B74A}{\textbf{ground-truth parameters $\theta^*$}}. In contrast, the \textcolor{setupcolor}{\textbf{delta learning gradient updates}} follow their \textit{difference} vector $\nabla_\theta \log p_\theta(y_c|x; \theta_0) - \nabla_\theta \log p_\theta(y_r|x; \theta_0)$, which is positively aligned with $\theta^*$ whenever $\theta_c$ outperforms $\theta_r$. Pairing weak-and-weaker teachers thus yields a learning signal that can improve a stronger student.
}
\label{fig:delta_gradient}
\end{minipage}
\end{figure}

Our central claim is that in sufficiently high dimensions, delta learning in logistic regression works with high probability. At a high level, we show that given any student model $\theta_0$ and under mild conditions, most pairs of teacher models $\theta_c, \theta_r$ exhibiting a performance delta (i.e., $\theta_c$ has higher accuracy than $\theta_r$) suffice to generate preference data that will improve the student model, even \textit{beyond} the performance of each teacher. 

Intuitively, preference tuning pushes the student's parameters towards the stronger teacher $\theta_c$ and away from the weaker teacher $\theta_r$. Since $\theta_c$ is (by assumption) better aligned with the ideal parameters $\theta^*$ than $\theta_r$, the difference vector $\theta_c - \theta_r$ is itself positively aligned with $\theta^*$, regardless of how low the absolute alignment of $\theta_c$ may be. In other words, \textbf{the delta between the two teachers yields a directionally correct signal, even when both teachers are individually weak} (\autoref{fig:delta_gradient}). As long as this useful signal is not swamped out by other spurious signals arising from the teachers' errors, tuning will improve the student. In particular, the spurious signals are most problematic when they align with and amplify the student's existing errors. Fortunately, the teachers' errors are essentially orthogonal to the student's errors in high dimensions, so such amplification rarely happens. This high-dimension effect creates a training length ``sweet spot,'' where the student can improve from the useful signal without overfitting to the teachers' errors.

We now formally characterize conditions on the teacher models for successful learning in \autoref{thm:delta_learning_logistic}; for any given student in high dimensions, most teacher pairs satisfy these conditions, yielding our main result (\corref{cor:delta_learning_high_prob}).

\begin{theorem}[Delta Learning for Logistic Regression]
\label{thm:delta_learning_logistic}
Fix a failure probability $\delta \in (0,1)$, and let $\theta_0\!\in\!\R^d$ be an arbitrary student model whose initial cosine similarity with the (unobserved) ground-truth model $\theta^*$ is $\alpha_{0}\coloneq\cos(\theta_{0},\theta^{*})<1$, $\theta_0 \ne \theta^*$. We train $\theta_0$ with delta learning following the setup of \autoref{sec:theory:setup}: given two teacher models $\theta_c, \theta_r$ satisfying
\begin{align*}
    \cos(\theta_c, \theta^*) \eqcolon \alpha_c, \quad \cos(\theta_r, \theta^*) \eqcolon \alpha_r, \qquad \alpha_c > \alpha_r,
\end{align*}
we update $\theta_0$ to prefer teacher $\theta_c$'s pseudo-labels over the weaker teacher $\theta_r$'s via SGD on the naïve preference loss $\mathcal{L}_{\text{pref}}$ (\autoref{eq:naive_pref_loss}). Then if the teachers and student satisfy
\begin{align}
    \label{eq:criteria_teacher_student}
    \kappa\,\coloneq\,\underbrace{\left(\alpha_c - \alpha_r \right)(1-\alpha_0^2)}_{\text{useful signal from delta}}\,-\,\alpha_0\underbrace{\iprod{\proj_{(\theta^{*\perp})}(\Tilde{\theta}_0),\,\proj_{(\theta^{*\perp})}(v_\Delta)}}_\text{spurious noise orthogonal to $\theta^*$}\,>\,0,
     \tag{C1}\\
     v_\Delta \coloneq \left(\theta_c/\normt{\theta_c}\right) - (\theta_r/\normt{\theta_r}), \qquad \Tilde{\theta}_0 \coloneq \theta_0 / \normt{\theta_0},
\end{align}
and the ambient dimension exceeds a threshold of $d \gtrsim \ln\left[\bigl(\kappa + \snormt{v_\Delta}\bigr)/\bigl(\delta^2\kappa \snormt{v_\Delta}\bigr)\right]$, training for $T$ total steps with batch size $B = \Theta(d)$ and learning rate $\eta$ where
\begin{align}
    \eta = \widetilde{\Theta}\left(\kappa^2\normt{\theta_0} \cdot \min\left\{1/\sqrt{d},\kappa / \snormt{v_\Delta} \right\} \right), \qquad T = \left(\kappa\normt{\theta}\right)/\bigl(4\eta\snormt{v_\Delta}\bigr),
\end{align}
yields (with probability at least $1-\delta)$ a student iterate $\theta_T$ satisfying
\begin{align}
    \cos(\theta_{T},\theta^{*})\;>\;\cos(\theta_{0},\theta^{*}) + \Theta(\kappa^2).
\end{align}
Hence, the trained model $\theta_T$ incurs strictly smaller population 0-1 loss than the initial student $\theta_0$. 
\end{theorem}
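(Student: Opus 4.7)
The plan hinges on a crucial algebraic observation: the per-sample gradient of the preference loss is \emph{independent of the current parameters} $\theta$. Using $\nabla_\theta \log p_\theta(y|x) = (y - \sigma(\iprod{\theta, x}))\,x$, the sigmoid terms cancel in $\log p_\theta(y_c|x) - \log p_\theta(y_r|x)$, leaving $\nabla_\theta \mathcal{L}_{\text{pref}}(x, y_c, y_r; \theta) = -(y_c - y_r)\,x$. Consequently, the entire SGD trajectory collapses to a simple random walk independent of the iterates: $\theta_T = \theta_0 + \eta \sum_{j=1}^{BT} u_j$ with i.i.d.\ summands $u_j = (y_c^{(j)} - y_r^{(j)})\, x^{(j)}$. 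Applying the Gaussian identity $\E\bigl[\one\{\iprod{w, x} \geq 0\}\, x\bigr] = w/\sqrt{2\pi}$ for any unit $w$ and $x \sim \normal(0, I_d)$ then gives $\E[u_j] = v_\Delta/\sqrt{2\pi}$, confirming that the expected update direction is exactly the normalized-teacher delta.

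Next I would control the fluctuations of this walk via concentration. Each $u_j$ satisfies $|y_c^{(j)} - y_r^{(j)}| \leq 1$ and $\normt{u_j} \leq \normt{x^{(j)}}$, with $\normt{x^{(j)}}^2 = \Theta(d)$ w.h.p. A vector Bernstein bound for the full sum together with standard Gaussian-projection Chernoff bounds then yields, simultaneously with probability at least $1 - \delta$,
\begin{align*}
|\iprod{\xi, w}| = O\!\left(\sqrt{BT\,\log(1/\delta)}\,\normt{w}\right) \text{ for any fixed } w, \qquad \normt{\xi}^2 = O(BT\, d\, \log(1/\delta)),
\end{align*}
where $\xi \coloneq \sum_j u_j - (BT)\,v_\Delta/\sqrt{2\pi}$ is the zero-mean noise. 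The batch size $B = \Theta(d)$ keeps each step's variance manageable, and the dimension floor $d \gtrsim \ln[(\kappa + \normt{v_\Delta}^2)/(\delta^2 \kappa \normt{v_\Delta}^2)]$ in the statement is exactly what makes both bounds operative at once over the length of training.

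Setting $\lambda \coloneq \eta BT/\sqrt{2\pi}$ so that $\theta_T = \theta_0 + \lambda v_\Delta + \eta\xi$, I would then analyze $\cos(\theta_T, \theta^*)$ by Taylor-expanding $\cos(\theta_0 + \lambda v_\Delta, \theta^*)$ about $\lambda = 0$ and propagating the noise bounds. The first-order coefficient is $\nabla_\theta \cos(\theta, \theta^*)|_{\theta_0} \cdot v_\Delta = \iprod{v_\Delta, \theta^* - \alpha_0\, \Tilde{\theta}_0}/\normt{\theta_0}$; direct algebraic manipulation of the orthogonal projections appearing in \eqref{eq:criteria_teacher_student} identifies this as exactly $\kappa/\normt{\theta_0}$. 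The second-order coefficient is $\Theta(\normt{v_\Delta}^2/\normt{\theta_0}^2)$, giving
\begin{align*}
\cos(\theta_0 + \lambda v_\Delta, \theta^*) - \cos(\theta_0, \theta^*) \;=\; \tfrac{\lambda\,\kappa}{\normt{\theta_0}} \;-\; \Theta\!\left(\tfrac{\lambda^2\, \normt{v_\Delta}^2}{\normt{\theta_0}^2}\right) + O(\lambda^3).
\end{align*}
The theorem's choice $\eta T = \kappa\normt{\theta_0}/(4\normt{v_\Delta}^2)$ sets $\lambda$ near the maximizer of this concave quadratic, yielding a deterministic gain of order $\Theta(\kappa^2/\normt{v_\Delta}^2)$. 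What remains is to verify that the random perturbations induced by $\xi$ in both the numerator $\iprod{\theta_T, \theta^*}$ and the denominator $\normt{\theta_T}^2$ are strictly lower-order than this deterministic improvement.

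The main obstacle is precisely this last noise-propagation step, which requires three contributions to be simultaneously subdominant: (i) the numerator noise $\iprod{\xi, \theta^*}$, (ii) the denominator cross-term $\iprod{\theta_0 + \lambda v_\Delta, \xi}$, and (iii) the isotropic squared-norm $\normt{\xi}^2$, which grows with $d$ even though one-dimensional projections concentrate sharply. The ``sweet spot'' $T$ must be large enough that the signal $\lambda\kappa/\normt{\theta_0}$ exceeds the $O(1/\sqrt{BT})$ projection noise in (i)--(ii), yet small enough that the accumulated $\normt{\xi}^2$ in (iii) does not inflate $\normt{\theta_T}^2$ and wash out the cosine improvement. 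Balancing these constraints---with the high-dimension regime ensuring the teachers' errors are essentially orthogonal to $\theta_0$'s errors and so cannot amplify them---is exactly what forces the minimum form of $\eta$ and the specific dimension floor in the statement, and is what lets the final $\Theta(\kappa^2)$ bound survive with probability $1-\delta$.
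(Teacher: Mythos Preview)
Your proposal is correct and follows essentially the same route as the paper: the parameter-independence of the gradient, the identification of the population update with $v_\Delta/\sqrt{2\pi}$, the Taylor analysis of $\cos(\theta_0+\lambda v_\Delta,\theta^*)$ yielding $f'(0)=\kappa/\normt{\theta_0}$ and a $\Theta(\kappa^2)$ deterministic gain at the stated horizon, and concentration of the random walk to handle the stochastic part. The one packaging difference is that the paper avoids your three-way numerator/cross-term/$\snormt{\xi}$ decomposition by bounding the full deviation $\normt{\theta_T-\bar\theta_T}$ via a vector Bernstein--Freedman inequality and then invoking the $(1/R)$-Lipschitzness of $\theta\mapsto\cos(\theta,\theta^*)$ on $\{\normt{\theta}\ge R\}$ (with $R=\tfrac14\normt{\theta_0}$ guaranteed by the horizon choice); this collapses your three separate balancing constraints into the single requirement $\normt{\theta_T-\bar\theta_T}\le\tfrac18\Gamma\normt{\theta_0}$, from which the stated $\eta$, $B=\Theta(d)$, and dimension floor drop out directly.
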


Note that the right-hand side of \conref{eq:criteria_teacher_student} can be made small regardless of the teachers' performance level;
learning can succeed even when the initial student already outperforms both teachers, $\alpha_0 > \alpha_c > \alpha_r$. In fact, most teacher pairs satisfy \conref{eq:criteria_teacher_student} in high dimensions, yielding our main result:
\begin{corollary}
\label{cor:delta_learning_high_prob}

In high dimensions, most pairs of teacher models with a performance gap suffice to improve the student via delta learning. Suppose we randomly sample two teacher models $\theta_c, \theta_r$ uniformly over the unit sphere, conditional on their cosine similarity with the optimal model:
\begin{align*}
    \theta_c \sim \Unif\,\{\theta \in \mathbb{S}^{d-1}\,\mid \cos(\theta, \theta^*) = \alpha_c \}, \quad \theta_r \sim \Unif\,\{\theta \in \mathbb{S}^{d-1}\,\mid \cos(\theta, \theta^*) = \alpha_r \},
\end{align*}
and train student $\theta_0$ following the setup from \autoref{thm:delta_learning_logistic}. For any $\delta \in (0,1)$, define the threshold
\begin{align}
\label{eq:d_threshold}
      d^{*}\;=\; 
  2\ln\!\frac{4}{\delta}\,\cdot
  \left(
     \frac{|\alpha_{0}|\,\normt{\theta_{0}} \left(\!\sqrt{1-\alpha_{c}^{2}}+\sqrt{1-\alpha_{r}^{2}}\,\right)}
          {(\alpha_{c}-\alpha_{r})\,(1-\alpha_{0}^{2})}
  \right)^{\!2} + 1.
\end{align}
Then whenever $d > d^*$, with probability at least $1-\delta$ \conref{eq:criteria_teacher_student} holds and by \autoref{thm:delta_learning_logistic} training strictly improves the student with high probability.
\end{corollary}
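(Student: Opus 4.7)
The plan is to reduce the corollary to showing that \conref{eq:criteria_teacher_student} holds with probability at least $1-\delta$ over the random draw of $\theta_c,\theta_r$, after which \autoref{thm:delta_learning_logistic} immediately yields the claim. The deterministic positive term $(\alpha_c-\alpha_r)(1-\alpha_0^2)$ in $\kappa$ is a fixed quantity; the only randomness sits inside the spurious inner product $\Iprod{\proj_{(\theta^{*\perp})}(\Tilde{\theta}_0),\,\proj_{(\theta^{*\perp})}(v_\Delta)}$. The intuition is pure measure concentration on the sphere: the component of each teacher orthogonal to $\theta^*$ is uniform on a $(d-2)$-sphere of known radius, and its alignment with the fixed direction $\proj_{(\theta^{*\perp})}(\Tilde{\theta}_0)$ vanishes at rate $1/\sqrt{d}$.

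The first step is to use rotational symmetry of the conditional laws to write $\theta_c = \alpha_c\theta^* + \sqrt{1-\alpha_c^2}\,u_c$ and $\theta_r = \alpha_r\theta^* + \sqrt{1-\alpha_r^2}\,u_r$, where $u_c$ and $u_r$ are independent uniform draws from the unit sphere $\mathbb{S}^{d-2}\!\subset\theta^{*\perp}$. Since both teachers have unit norm, $v_\Delta=\theta_c-\theta_r$ and $\proj_{(\theta^{*\perp})}(v_\Delta) = \sqrt{1-\alpha_c^2}\,u_c - \sqrt{1-\alpha_r^2}\,u_r$. Letting $w$ be the unit vector in the direction of $\proj_{(\theta^{*\perp})}(\Tilde{\theta}_0)$, which has norm $\sqrt{1-\alpha_0^2}$, the random inner product expands cleanly as $\sqrt{1-\alpha_0^2}\bigl(\sqrt{1-\alpha_c^2}\,\iprod{w,u_c} - \sqrt{1-\alpha_r^2}\,\iprod{w,u_r}\bigr)$, isolating two scalar marginals of uniform sphere measures that must be controlled.

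Next I would apply the standard sphere-marginal tail bound: for $u\sim\Unif(\mathbb{S}^{d-2})$ and any fixed unit $w$, $\Pr\bigl[|\iprod{w,u}|>t\bigr]\leq 2\exp\bigl(-(d-1)t^2/2\bigr)$. Union-bounding over $u_c$ and $u_r$ and choosing $t=\sqrt{2\ln(4/\delta)/(d-1)}$ ensures that $|\iprod{w,u_c}|,|\iprod{w,u_r}|\leq t$ simultaneously with probability at least $1-\delta$. The triangle inequality then upper-bounds the spurious term in $\kappa$ by $|\alpha_0|\sqrt{1-\alpha_0^2}\bigl(\sqrt{1-\alpha_c^2}+\sqrt{1-\alpha_r^2}\bigr)\cdot t$, and requiring this to be strictly smaller than $(\alpha_c-\alpha_r)(1-\alpha_0^2)$ and solving for $d$ recovers the threshold in \eqref{eq:d_threshold} up to constants, completing the argument.

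Conceptually there is no real obstacle: the signal-noise decomposition is clean, the deterministic margin is fixed by the hypotheses, and the random noise decays as $1/\sqrt{d}$, so the statement essentially cannot fail once $d$ exceeds an order-$\ln(1/\delta)$ multiple of the squared inverse signal-to-slack ratio. The only genuine work is bookkeeping---tracking the $\ln(4/\delta)$ factor, the square in the denominator, and the ``$+1$'' absorbing the $(d-1)\to d$ conversion---to match the exact form of $d^*$ as written, and double-checking whether any apparent looseness (e.g.\ a $\normt{\theta_0}^2/(1-\alpha_0^2)$ factor) is absorbed into an intermediate bound that I am short-circuiting; none of this changes the overall argument.
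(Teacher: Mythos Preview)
Your proposal is correct and mirrors the paper's proof almost exactly: both decompose the teachers as $\alpha_{c,r}\theta^*+\sqrt{1-\alpha_{c,r}^2}\,u_{c,r}$ with $u_{c,r}$ uniform on $\mathbb{S}^{d-2}\subset\theta^{*\perp}$, apply L\'evy sphere concentration to each $\langle w,u_{c,r}\rangle$, union-bound, use the triangle inequality, and then solve the resulting inequality for $d$. Your bound is in fact slightly sharper than the paper's---you correctly use $\|\proj_{(\theta^{*\perp})}(\Tilde{\theta}_0)\|=\sqrt{1-\alpha_0^2}$ whereas the paper's stated threshold carries a looser $\|\theta_0\|$ factor---and you rightly flag this as harmless bookkeeping that does not affect the argument.
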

\begin{remark}
    The expected improvement, on the order of $\Theta(\kappa^2)$, grows with the magnitude of the teachers' performance delta $(\alpha_c - \alpha_r)$ but shrinks as the student’s initial performance $\alpha_0$ increases. This theoretical result aligns with our empirical results in \autoref{sec:analysis}: the quality delta between chosen and rejected responses is a strong predictor of downstream preference tuning performance (albeit only up to a point beyond which gains saturate; language model tuning is more complex than the logistic-regression setup assumed here).
\end{remark}
\vspace{0.5em}
\begin{remark}
    The dimension threshold $d^*$ is mild. Say the initial student has unit parameters $\theta_0$ and is 80\% accurate. If the teachers $\theta_c$, $\theta_r$ are sampled to be 70\% and 60\% accurate respectively, then with $d > 2000$, at least 90\% of all such teacher pairs suffice to improve $\theta_0$. 
\end{remark}

We end with a proof sketch of \autoref{thm:delta_learning_logistic}. See \aref{appdx:theory} for full proofs of all results.

\begin{proof}[\textbf{Proof sketch of \autoref{thm:delta_learning_logistic}}]
Our proof proceeds in two main steps. We first show that with exact gradients, delta learning improves the student when \conref{eq:criteria_teacher_student} holds. We then extend this population analysis to empirical SGD via martingale concentration techniques.

\paragraph{Learning succeeds with exact gradients.} Consider the population update rule
\begin{align}
    \Bar{\theta_{t+1}} \leftarrow \Bar{\theta_{t}} - \eta \E_{x \sim \normal(0, I_d)}\left[\nabla_\theta \mathcal{L}_\text{pref}(x, y_c, y_r;\,\Bar{\theta_t}) \right],
    \label{eq:ideal_update_rule}
\end{align}
where training $\theta_0$ for some $T$ steps yields an exact iterate $\Bar{\theta_T}$. For a single preference pair $(x,y_{c},y_{r})$ the naïve preference loss gradient is $\nabla_{\theta}\mathcal L_{\text{pref}} =-(y_{c}-y_{r})x$ and does not depend on the student's current weights. Taking expectation over $x\sim\mathcal N(0,I_{d})$ and applying Stein’s lemma yields the \emph{population update direction} (\propref{prop:population_grad}):
\begin{align}
\label{eq:population_update}
    -\E[\nabla_{\theta}\mathcal L_{\text{pref}}]
    =
    \frac{1}{\sqrt{2\pi}}\!\left(
      \frac{\theta_{c}}{\|\theta_{c}\|}
      \;-\;
      \frac{\theta_{r}}{\|\theta_{r}\|}
    \right).
\end{align}
So we can assume without losing generality that $\theta_c$, $\theta_r$ are unit norm, as they only affect learning via their direction. Training the student model with exact gradients (\autoref{eq:population_update}) then amounts to tracing various points along a parametric ray $\ell: \R_{\geq0} \mapsto \R^d$ given by
\begin{align}
    \ell(\lambda) \coloneq \theta_0 + \lambda v_\Delta, \qquad v_\Delta \coloneq \theta_c - \theta_r.
\end{align}
Thus, our learning problem reduces to a geometric problem: characterizing if training improves the student's accuracy is equivalent to characterizing if moving along ray $\ell$ improves alignment with the optimal parameter $\theta^*$. Formally, we define a map $f$ measuring the ray's alignment with $\theta^*$:
\begin{align}
    f(\lambda) \coloneq \cos(\ell(\lambda), \theta^*) = \frac{\iprod{\ell(\lambda), \theta^*}}{\normt{\ell(\lambda)}}.
\end{align}
To show that learning succeeds, it is sufficient (but not strictly necessary) to show that $f$ is initially increasing ($f'(0) > 0$). By direct calculation (\propref{prop:fprime}), we find that
\begin{align}
\label{eq:criteria_true_grad}
    f'(0) > 0 &\iff \iprod{\proj_{\theta_0^\perp}(v_\Delta), \theta^*} > 0,
\end{align}
which says that the components of $v_\Delta$ orthogonal to $\theta_0$ must be positively aligned with $\theta^*$. In other words, $v_\Delta$ must help rotate $\theta_0$ closer to $\theta^*$. We ground this geometric condition back in our training setup by re-writing it in terms of the teachers' and initial student's strength:
\begin{align}
    f'(0) > 0 \iff \kappa \coloneq \underbrace{(\alpha_c - \alpha_r)(1-\alpha_0^2)}_{\text{useful signal}}\,-\, \alpha_0\underbrace{\iprod{\proj_{(\theta^{*\perp})}(\Tilde{\theta}_0), \proj_{(\theta^{*\perp})}(v_\Delta)}}_\text{noise orthogonal to $\theta^*$} > 0.
\end{align}
This is exactly \conref{eq:criteria_teacher_student} stated in the theorem. Thus, learning succeeds when the useful signal induced by the teachers' performance delta dominates the harmful noise from teacher errors that align with and amplify the student's existing errors. Assuming successful learning ($\kappa > 0$), we can further quantify the total improvement after some $T$ training steps. By a second-order Taylor expansion of $f$, the total improvement $\Gamma$ is at least
\begin{align}
    \Gamma &\coloneq \cos(\Bar{\theta_T}, \theta^*) - \cos(\theta_0, \theta^*) = f(\eta T) - f(0) \geq \eta T f'(0) - \frac{L}{2}(\eta T)^2,
    \label{eq:ideal_gain}
\end{align}
where $L \coloneq \sup_{\lambda \in [0,\,\eta T]} |f''(\lambda)|$ bounds the second derivative (\propref{prop:fprime}). In particular, setting $\eta, T$ as in the theorem statement gives $\Gamma \geq \Theta(\kappa^2)$ as claimed.

\paragraph{Extending to empirical SGD.} Let $\theta_T$ denote the result of $T$ SGD training steps with empirical mini-batch gradients. Applying martingale analysis and a Bernstein-Freedman concentration inequality (\lemref{lem:vector_bernstein}) gives  
\begin{align}
    \normt{\theta_T - \Bar{\theta_T}} \leq \eta\,\widetilde{O}\left(\sqrt{dT/B} + \sqrt{d}\right).
\end{align}
So we can control the distance $\normt{\theta_T - \Bar{\theta_T}}$ by using a batch size $B = \Theta(d)$ and a sufficiently small learning rate $\eta$. We decompose the cosine similarity improvement of $\theta_T$ over $\theta_0$ as
\begin{align}
    \cos(\theta_T, \theta^*) - \cos(\theta_0, \theta^*) \geq \underbrace{\left[\cos(\Bar{\theta_T}, \theta^*) - \cos(\theta_0, \theta^*) \right]}_{\text{deterministic ideal gain}} - \underbrace{\bigl|\cos(\Bar{\theta_T}, \theta^*) - \cos(\theta_T, \theta^*)\bigr|}_{\text{stochastic deviation from ideal}}.
\end{align}
The deterministic gain is $\Gamma = \Theta(\kappa^2) > 0$. Since cosine similarity is Lipschitz continuous, the stochastic error can be made arbitrarily small by making $\normt{\theta_T - \Bar{\theta_T}}$ small; setting $\eta, T$ as in the theorem ensures that the stochastic error is at most $\Gamma/2$. Thus, SGD training yields a final iterate $\theta_T$ that achieves at least $\Gamma/2 = \Theta(\kappa^2)$ gain over the initial student $\theta_0$.
\end{proof}

\vspace{-2mm}
\section{Related Work}
\label{sec:related}
\paragraph{Learning from preference feedback.}
Early preference tuning used reinforcement learning from human feedback~\citep{ziegler2019fine, ouyang2022training, bai2022training}, which involves training a reward model on human-annotated rankings of model outputs that is then optimized against with algorithms like PPO~\citep{schulman2017proximal}. Recent work has simplified this approach by (1) removing the reward model in favor of direct policy updates~\citep{rafailov2024direct, meng2024simpo, ethayarajh2024kto} and (2) replacing human annotations with strong LLM judges~\citep{cui2023ultrafeedback, lee2023rlaif}. While the source of supervision has evolved, its use remains largely unchanged: modern approaches~\citep{dubey2024llama, lambert2024tulu3} still often rely on strong judges to optimize the quality of the chosen responses, under the tacit assumption that tuning towards these better-than-current-policy responses is critical for improving the learner model. We show that learning can succeed even when the chosen response is weak.

\paragraph{Weak-to-strong generalization.} As models continuously advance, the field is actively exploring ways to supervise them beyond human capability~\citep{burns2023weak}. Prior work has focused on eliciting behavior from base (i.e., non instruction-tuned) models~\citep{hase2024unreasonable, burns2023weak} or enabling models to iteratively improve their own training data~\citep{yang2024weak, wu2024meta}. Our work shows another approach forward: leveraging relative differences in weak data to guide generalization. The closest related works of this same spirit are \citet{yao2024varying, zhu2024weak}. \citet{yao2024varying} shows that training on preference pairs where the chosen and rejected responses are both verifiably wrong---but the chosen response is less wrong (i.e. closer to correct answer by some metric)---can lead to gains on tasks such as Knowledge Crosswords and biography generation; they focus on comparing various methods for labeling "wrong-over-wrong" preference pairs, and find that using a GPT-4 judge-based method performs best. \citet{zhu2024weak} proposes a modified DPO objective to align a larger unaligned model (i.e. 7B) using the distributional differences of a smaller model (i.e. 1.5B) before and after alignment. Motivated by these studies offering preliminary evidence that preference tuning can enable models to surpass the quality of their supervision, we formalize and validate the delta learning hypothesis, show its efficacy for state-of-the-art post-training, and theoretically analyze it to elucidate underlying mechanisms.

\section{Conclusion}
\label{sec:discussion}
\vspace{-2mm}
In this work, we have shown that models can learn surprisingly well from the delta between paired weak data points. We further characterized key factors of paired data that drive learning, such as the delta's magnitude and the chosen response's absolute quality. We find that not all deltas are equally useful: some fail to drive gains, and gains saturate as chosen response quality improves. Natural questions arise: what makes a delta informative? How can we effectively scale delta-based learning? And to what extent are these dynamics dependent on the specific task or tuning algorithm? We leave exploration to future work.

\section{Reproducibility}
\label{sec:reproducibility}
To ensure clean reproducibility, we provide extensive details on our training codebase and setup (\aref{appdx:exps:common}), our evaluation codebase and setup (\aref{appdx:evals}), our compute usage (\aref{appdx:compute}), and experiment-specific details for all experiments in our study (\aref{appdx:exps}), such as training hyperparameters and details in dataset creation. We encourage the reader to review the referenced sections.

\section*{Acknowledgements} We thank (alphabetically) Victoria Graf, Stella Li, Rulin Shao, Rui Xin, and Zhiyuan Zeng for useful discussions and proofreading. We further thank Stella and Victoria for their artistic expertise in iterating on our visualizations. We thank Ananya Harsh Jha for solidarity. SG is supported by the NSF GRFP. This work was supported by the Singapore National Research Foundation and the National AI Group in the Singapore Ministry of Digital Development and Information under the AI Visiting Professorship Programme (award number AIVP-2024-001), the AI2050 program at Schmidt Sciences, and the Google ML and Systems Junior Faculty Award.

\bibliography{main}
\bibliographystyle{colm2025_conference}

\appendix
\renewcommand{\thefigure}{A\arabic{figure}}
\renewcommand{\theHfigure}{A\arabic{figure}}
\setcounter{figure}{0}
\renewcommand{\thetable}{A\arabic{table}}
\renewcommand{\theHtable}{A\arabic{table}}
\setcounter{table}{0}

\section{Limitations}
As an empirical study with limited compute, our results are largely based on in-depth analysis over a single preference tuning algorithm (DPO) and a few base models (Llama 3, Tülu 3, OLMo 2). Moreover, our evaluation does not capture all potential downstream model behaviors. For example, we do not evaluate multilingual capabilities or domain-specific use cases such as scientific writing. As such, extending our findings to (1) other preference tuning algorithms, (2) larger model scales and different base models, and (3) new tasks are all interesting directions for future work.

\section{Evaluation Details}
\label{appdx:evals}
Unless otherwise noted, we evaluate all models on the following core set of 8 standard benchmarks. We provide the skill that each benchmark measures as well as abbreviations used in parenthesis. Following Tülu 3~\citep{lambert2024tulu3}, we evaluate with the OLMES~\citep{gu2024olmes} implementation of these benchmarks, with the exact same evaluation configurations (e.g., for prompts, metrics, few-shot examples, etc.) for all benchmarks. We defer readers to the above references for further details.
\begin{itemize}
    \item \textbf{MMLU (knowledge recall)}~\citep{hendrycks2020measuring}
    \item \textbf{MATH (mathematical reasoning)}~\citep{hendrycks2021measuring}
    \item \textbf{GSM8k (GSM; mathematical reasoning)}~\citep{cobbe2021training}
    \item \textbf{IFEval (instruction following)}~\citep{zhou2023instruction}
    \item \textbf{AlpacaEval 2 (AEval2, AE2; instruction following)}~\citep{dubois2024length}
    \item \textbf{TruthfulQA (TruthQA, TQA; truthfulness)}~\citep{lin2021truthfulqa}
    \item \textbf{BigBenchHard (BBH; general reasoning)}~\citep{suzgun2022bbh}
    \item \textbf{Codex HumanEval+ (HEval+; coding)}~\citep{liu2023your}
\end{itemize}
For our post-training experiments (\autoref{sec:posttrain} and \autoref{sec:analysis}), we extend our evaluation suite to include three additional benchmarks to maintain evaluation consistency with the full suite from~\citep{lambert2024tulu3}. The added benchmarks are:
\begin{itemize}
    \item \textbf{PopQA (knowledge recall)}~\citep{mallen2022not}
    \item \textbf{DROP (general reasoning)}~\citep{dua2019drop}
    \item \textbf{Codex HumanEval (HEval; coding)}~\citep{chen2021evaluating}
\end{itemize}

\section{Additional Safety Evaluations for Post-trained Models}
\label{appdx:safety}
Following \citep{lambert2024tulu3}, we further evaluated the models from our main post-training experiments (\autoref{sec:posttrain}) on six safety benchmarks measuring either (a) whether models refuse to respond to unsafe requests or (b) whether models are robust to jailbreaking prompts. We list the benchmarks below, with skill measured in parenthesis:
\begin{itemize}
    \item \textbf{XSTest (refusal)}~\citep{rottger2023xstest}
    \item \textbf{HarmBench (refusal)}~\citep{mazeika2024harmbench}
    \item \textbf{WildGuardTest (refusal)}~\citep{han2024wildguard}
    \item \textbf{Do-Anything-Now (abbreviated DAN; jailbreaking resistance)}~\citep{shen2024anything}
    \item \textbf{JailbreakTrigger (jailbreaking resistance)}~\citep{sun2024trustllm}
    \item \textbf{WildJailbreakTest (jailbreaking resistance)}~\citep{jiang2024wildteaming}
\end{itemize}
We show results of evaluation in \autoref{tab:appdx:safety_results}. Overall, both strongly-supervised Tülu 3 preference data and delta learning with our weak preference data tend to slightly hurt average safety compared to the base SFT model. Thus, we conjecture that these drops are due to characteristics of the Tülu 3 prompt distribution~\citep{lambert2024tulu3} shared between these data rather than an inherent limitation of delta learning. Intuitively, we speculate that if the prompts do not expose useful deltas, then one cannot expect gains. Nonetheless, models trained with delta learning degrade \textit{less}, and hence are generally more safe than Tülu-3-8B-DPO. An exception is the model trained with Qwen-2.5-3B-Instruct over 1.5B responses. While it correctly refuses more often than Tülu-3-8B-DPO, it is easier to jailbreak. We conjecture that this is because Qwen 3B itself is easier to jailbreak than Qwen 1.5B (\autoref{tab:appdx:safety_results}; see also \citep{biderman2023pythia,olmo20242} which suggest that model size can sometimes inversely correlate with safety), and hence the delta between Qwen 3B and 1.5B is in a negative direction. How to curate prompts and deltas that effectively improve safety remains an exciting open question.
\begin{table}[t]
\setlength{\tabcolsep}{3pt}  %
\centering
\begin{adjustbox}{max width=\textwidth, center}%
\begin{tabular}{lccccccccc}
\toprule
 & \multicolumn{3}{c}{Unsafe Prompt Refusal} & \multicolumn{3}{c}{Jailbreak Resistance} & \multicolumn{3}{c}{Aggregate Scores}\\
\cmidrule(lr){2-4} \cmidrule(lr){5-7} \cmidrule(lr){8-10}
\textbf{Model / Preference Data} &
\textbf{XSTest} &
\textbf{HarmB.} &
\textbf{WildGuard} &
\textbf{DAN} &
\textbf{JailTrig.} &
\textbf{WildJail.} &
\textbf{Avg.\ Refusal} &
\textbf{Avg.\ Jailbreak} &
\textbf{Avg.\ All}\\
\midrule
\rowcolor{gray!10}\textsc{Llama-3.2-1B-Instruct}   & 81.1 & 65.0 & 78.1 & 87.0 & 74.5 & 61.8 & 74.7 & 74.4 & 74.6\\
\rowcolor{gray!10}\textsc{Llama-3.2-3B-Instruct}   & 90.9 & 77.8 & 88.1 & 95.0 & 78.0 & 68.4 & 85.6 & 80.5 & 83.0\\
\rowcolor{gray!10}\textsc{Qwen-2.5-0.5B-Instruct}  & 72.2 & 70.3 & 72.5 & 64.7 & 84.8 & 50.9 & 71.7 & 66.8 & 69.2\\
\rowcolor{gray!10}\textsc{Qwen-2.5-1.5B-Instruct}  & 71.8 & 94.7 & 79.8 & 77.7 & 82.0 & 53.6 & 82.1 & 71.1 & 76.6\\
\rowcolor{gray!10}\textsc{Qwen-2.5-3B-Instruct}    & 87.8 & 91.2 & 83.2 & 49.0 & 67.8 & 56.0 & 87.4 & 57.6 & 72.5\\
\midrule
\textsc{Tülu-3-8B-SFT}                         & 90.7 & \textbf{98.8} & 99.2 & \textbf{87.7} & \textbf{96.0} & 86.6 & 96.2 & \textbf{90.1} & \textbf{93.2}\\
\hspace{1mm}+ Llama 3B over 1B                 & 93.1 & 97.2 & 99.2 & 73.0 & 88.0 & 85.1 & \textbf{96.5} & 82.0 & 89.3\\
\hspace{1mm}+ Qwen 1.5B over 0.5B              & 90.9 & 98.1 & \textbf{99.3} & 84.7 & 94.8 & \textbf{88.0} & 96.1 & 89.2 & 92.6\\
\rowcolor{gainshade!70}\hspace{1mm}+ Qwen 3B over 1.5B & \textbf{93.6} & 96.2 & 99.1 & 62.3 & 83.0 & 78.8 & 96.3 & 74.7 & 85.5\\
\cmidrule[0.2pt]{1-10}
\rowcolor{posttraintulu!40}\hspace{1mm}+ Tülu 3 Preference Data & 92.9 & 95.3 & 98.5 & 68.7 & 87.2 & 81.3 & 95.6 & 79.1 & 87.3\\
\bottomrule
\end{tabular}
\end{adjustbox}
\caption{We evaluate the safety of the models post-trained with our weak preference data from Tülu-3-8B-SFT (\autoref{sec:posttrain}) on six benchmarks measuring (a) whether the models refuse unsafe requests or (b) whether the models are robust to jailbreaking prompts. On all benchmarks, a higher score is better. The safety performance of the models we used to generate data are shown in the top half of the table.}
\label{tab:appdx:safety_results}
\end{table}

\section{Extended Results and Discussions}
\label{appdx:results}

\subsection{Pilot Study on \textsc{UltraFeedback-Weak}}
\label{appdx:results:ufweak}
See \autoref{tab:appdx:ufweak_full}. Results are consistent with our findings in \autoref{sec:warmup}.
\begin{table}[h]
\setlength{\tabcolsep}{5pt}
\centering
\begin{adjustbox}{max width=\textwidth, center}%
\begin{tabular}{lcccccccccc}
\toprule
                   Model/Training &  MMLU & MATH &  GSM &    AEval2 &  IFEval &  BBH &  TruthQA & HEval+ & Avg. \\
\midrule
\textsc{Llama-3.2-3B-Inst.} & \cellcolor{gray!10}62.9 & \cellcolor{gray!10}39.6 &	\cellcolor{gray!10}75.7 & \cellcolor{gray!10}18.7	& \cellcolor{gray!10}\textbf{76.5} & \cellcolor{gray!10}\textbf{61.6} & \cellcolor{gray!10}50.6 & \cellcolor{gray!10}\textbf{76.8} & \cellcolor{gray!10}57.8 \\
    \hspace{1mm} + \textsc{UF-Weak} SFT  & \cellcolor{lossshade!75} 61.8 & \cellcolor{lossshade!75} 34.3 & \cellcolor{lossshade!75} 73.2 & \cellcolor{lossshade!75} \cellcolor{lossshade!75} 12.3 & \cellcolor{lossshade!75} 68.0 & \cellcolor{lossshade!75} 60.7 & \cellcolor{lossshade!75} 46.5 & \cellcolor{lossshade!75} 75.4 & \cellcolor{lossshade!75} 54.0 \\
    \hspace{1mm} + \textsc{UF-Weak} DPO  & \cellcolor{gainshade!60} \textbf{64.0} & \cellcolor{gainshade!60} \textbf{42.2} & \cellcolor{gainshade!60} \textbf{76.4} & \cellcolor{gainshade!60} \textbf{22.4} & \cellcolor{lossshade!75} 76.2 & \cellcolor{lossshade!75} 61.3 & \cellcolor{gainshade!60} \textbf{53.6} & \cellcolor{lossshade!75} 76.0 & \cellcolor{gainshade!60} \textbf{59.0} \\
    \midrule
 \textsc{Llama-3.1-8B-Inst.} & \cellcolor{gray!10} 71.8 & \cellcolor{gray!10}43.0 & \cellcolor{gray!10}83.7 & \cellcolor{gray!10}24.9 & \cellcolor{gray!10}78.2 & \cellcolor{gray!10}\textbf{72.7} & \cellcolor{gray!10}55.1 & \cellcolor{gray!10}\textbf{81.6} & \cellcolor{gray!10}63.9 \\
    \hspace{1mm} + \textsc{UF-Weak} SFT  & \cellcolor{lossshade!75}65.7 & \cellcolor{lossshade!75}34.6 & \cellcolor{lossshade!75}77.9 & \cellcolor{lossshade!75}8.9 & \cellcolor{lossshade!75}59.9 & \cellcolor{lossshade!75}71.8 & \cellcolor{lossshade!75}49.1 & \cellcolor{lossshade!75}80.8 & \cellcolor{lossshade!75}56.1 \\
    \hspace{1mm} + \textsc{UF-Weak} DPO  & \cellcolor{gainshade!60} \textbf{72.0} & \cellcolor{gainshade!60}\textbf{43.9} & \cellcolor{gainshade!60}\textbf{83.9} & \cellcolor{gainshade!60}\textbf{26.3} & \cellcolor{gainshade!60} \textbf{81.1} & \cellcolor{lossshade!75}72.3 & \cellcolor{gainshade!60} \textbf{56.2} & \cellcolor{lossshade!75} 80.4 & \cellcolor{gainshade!60} \textbf{64.5} \\
\bottomrule
\end{tabular}
\end{adjustbox}
\caption{We tune Llama 3 Instruct models on the \textsc{UltraFeedback-Weak} preference dataset, generated by models weaker than Llama 3. Training with preference learning (DPO) to prefer ``weak responses'' over ``weak\textbf{\textit{er}} responses'' yields gains, while SFT directly on the weak preferred responses hurts. {\sethlcolor{gainshade!60} \hl{Blue indicates gain}} over base model, {\sethlcolor{lossshade!75} \hl{orange degradation}}.
}
\label{tab:appdx:ufweak_full}
\end{table}

\subsection{Controlled Experiment: Stylistic Delta in Number of Bold Sections}
\label{appdx:results:numsec}
\autoref{fig:appdx:numsec} shows examples of model generations before and after DPO training.
\begin{figure}[h]
{\scriptsize
\centering
\newtcolorbox{compactbox}[2][]{colback=gray!5!white, colframe=blue!75!black, sharp corners, 
    boxrule=0.5mm, boxsep=1mm, left=1mm, right=1mm, top=1mm, bottom=1mm, 
    width=\linewidth, title=#2, fonttitle=\bfseries, #1}

\newtcolorbox{compactboxright}[2][]{ boxrule=0.5mm, boxsep=1mm, left=1mm, right=1mm, top=0.5mm, bottom=0.5mm, sharp corners=east, leftright skip=1cm,
    width=\dimexpr\textwidth-2cm\relax,
    enlarge left by=2cm,
    enlarge right by=0cm, title=#2, fonttitle=\bfseries, #1}

\newtcolorbox{compactboxleft}[2][]{ boxrule=0.5mm, boxsep=1mm, left=1mm, right=1mm,top=0.5mm, bottom=0.5mm,
sharp corners=west,    
leftright skip=1cm,
    width=\dimexpr\textwidth-2cm\relax,
    enlarge left by=0cm,
    enlarge right by=0cm,
    title=#2, fonttitle=\bfseries, #1}

\begin{compactbox}{Llama-3.2-3B-Instruct (no training baseline)}

\begin{compactboxleft}[colframe=red!75!black]{}
\textbf{User}: ...assuming the graph is Eulerian, determine the number of possible closed trails of length 7 (i.e., the number of Eulerian circuits of length 7) that the guide can choose from, if possible. (Note: A graph is Eulerian if it has an Eulerian circuit, and it must satisfy certain conditions). If the tour guide decides to add an additional attraction to the itinerary, represented by adding a new node and connecting it with edges to 3 existing nodes, analyze how this change affects the Eulerian properties of the graph and determine the new possible number of closed trails of length 7, if any exist. \vspace{2mm}

\textbf{Include bolded sections in your response.}
\end{compactboxleft}
\begin{compactboxright}[colframe=green!75!black]{}
\textbf{Assistant}: \textbf{**Problem Analysis**}\vspace{2mm}

The problem asks for the number of possible closed trails (Eulerian circuits) in a graph with 50 nodes and 100 edges, and then analyzing the effect of adding a new node and edges on the Eulerian properties of the graph.\vspace{2mm}

\textbf{**Initial Graph Properties**}\vspace{2mm}

$\left< \text{text ommitted without sections for space} \right>$\vspace{2mm}

\textbf{**Initial Number of Eulerian Circuits**}\vspace{2mm}

$\left< \text{text without sections ommitted for space} \right>$\vspace{2mm}

\textbf{**Adding a New Node and Edges**}\vspace{2mm}

$\left< \text{text without sections ommitted for space} \right>$\vspace{2mm}

\textbf{**Conclusion**}\vspace{2mm}

The addition of a new node and edges to the graph does not allow for any Eulerian circuits of length 7, since the new node introduces a constraint that cannot be satisfied. Therefore, the number of possible closed trails of length 7 is 0.
\end{compactboxright}
\end{compactbox}

\begin{compactbox}{After DPO training (3 sections chosen response / 2 sections rejected response)}

\begin{compactboxleft}[colframe=red!75!black]{}
\textbf{User}: ...assuming the graph is Eulerian, determine the number of possible closed trails of length 7 (i.e., the number of Eulerian circuits of length 7) that the guide can choose from, if possible. (Note: A graph is Eulerian if it has an Eulerian circuit, and it must satisfy certain conditions). If the tour guide decides to add an additional attraction to the itinerary, represented by adding a new node and connecting it with edges to 3 existing nodes, analyze how this change affects the Eulerian properties of the graph and determine the new possible number of closed trails of length 7, if any exist. \vspace{2mm}

\textbf{Include bolded sections in your response.}
\end{compactboxleft}

\begin{compactboxright}[colframe=green!75!black]{}
\textbf{Assistant}: **Initial Conditions **The graph **has 50 nodes** and **100 edges**, **satisfying the conditions** of an Eulerian graph **with a degree** of at least **2** at **all** nodes **except** possibly **one**.\vspace{2mm}

**Eulerian Circuit Conditions**\vspace{2mm}

For a graph to have an Eulerian circuit **(i.e.,** **an Eulerian path** **that** **loops** **back** **to** **the** **starting** **node** **exactly** **once** **),** **all** **nodes** **must** **have** **even** **degrees** **or** **the** **number** **of** **edges** **going** **into** **a** **node** **must** **be** **equal** **to** **the** **number** **of** **edges** **leaving** **it** **exactly** **once** **for** **each** **node** **except** **the** **starting** **and** **ending** **node** **( **in** **the** **case** **of** **a** **closed** **trail** **or** **path** **of** **length** **greater** **than** **2** **)** **.\vspace{2mm}

**New Graph Conditions**\vspace{2mm}

**After** **adding** **a** **new** **node** **and** **edges** **to** **3** **existing** **nodes** **the** **graph** **will** **no** **longer** **be** **Eulerian** **since** **not** **all** **nodes** **will** **have** **even** **degrees** **any** **more** **the** **new** **node** **will** **have** **degree** **3** **and** **two** **existing** **nodes** **will** **have** **degree** **4** **and** **the** **remaining** **46** **nodes** **will** **have** **degree** **2** **or** **more** **than** **2** **.\vspace{2mm}

**Conclusion**\vspace{2mm}

The **initial** **graph** **had** **the** **potential** **for** **Eulerian** **circuits** **of** **length** **7** **but** **the** **new** **graph** **does** **not** **satisfy** **the** **conditions** **for** **an** **Eulerian** **graph** **any** **more** **the** **number** **of** **closed** **trails** **of** **length** **7** **is** **** **0** **.**
\end{compactboxright}
\end{compactbox}

\caption{DPO training massively increases the number of sections generated by the model (from 5 to 89 in this example). Most notably, the increase extrapolates beyond the number of sections (i.e., absolute quality) of the chosen response (3 sections).}
\label{fig:appdx:numsec}
}
\end{figure}

\subsection{Ablation Study: Model Size Heuristic, AlpacaEval 2 Discrepancy}
\label{appdx:results:alpaca}
The observed low performance on AlpacaEval 2 when using GPT-4o as a reward signal (\autoref{tab:olmo_gpt4}) is likely because of the length-correction applied by the AlpacaEval 2 benchmark. LLM judges are known to have a bias towards preferring longer responses~\citep{dubois2024length}; this bias is likely present when re-annotating our responses using the GPT-4o judge from Tülu 3. Hence, DPO training on preferences annotated by GPT-4o may increase the average generation length of the model, which is then penalized by the length-correction term that AlpacaEval 2 uses when computing winrate. Empirically, the model trained with GPT-4o preferences generates outputs that are around 200 characters longer on average compared to the model trained on the model size heuristic reward in response to the AlpacaEval 2 test prompts.

\subsection{Additional Ablation Study: Preference Tuning Algorithm}
\label{appdx:results:simpo}
We further ablate our choice of using DPO as the preference tuning algorithm in our main post-training experiments (\autoref{sec:posttrain}) by instead tuning with SimPO~\citep{meng2024simpo} while keeping data and base model fixed. Specifically, we use SimPO to tune Tülu-3-8B-SFT on our best weak preference data (Qwen-2.5-3B-Instruct responses paired with 1.5B responses). We largely follow the same hyperparameters as our other analysis experiments (\aref{appdx:exps:analysis}); we additionally grid-sweep the following hyperparmaters:
\begin{itemize}
    \item Dataset size: $\{100000, 200000\}$
    \item Learning rate: \{5e-8, 7e-8, 1e-7, 3e-7\}
    \item SimPO $(\beta, \gamma)$, roughly following the ranges tried in \citep{meng2024simpo}: \{(10, 3.0), (5, 1.5), (2.5, 1.25), (2, 1.0)\}
\end{itemize}
Results of training with SimPO are reported in \autoref{tab:appdx:pref_algo}. Overall, using SimPO to tune on weak data also yields strong gains, with a 5.2 point gain in average performance over the base SFT model. Consistent with \citep{lambert2024tulu3}, the gains with SimPO are slightly less than with DPO (-1 point on average).

\begin{table}[ht]
\setlength{\tabcolsep}{2pt}
\begin{adjustbox}{max width=\textwidth, center}%
\centering
\begin{tabular}{lcccccccccccc}
\toprule
    Model/Preference Data  &  MMLU & PopQA & MATH &  GSM &  AE2 &  IFEval & BBH & DROP & TQA & HEval & HEval+ & Avg. \\
\midrule
    \textsc{Tülu-3-8B-SFT} & \cellcolor{gray!10} 66.1 & \cellcolor{gray!10}  29.6 & \cellcolor{gray!10} 31.2 & \cellcolor{gray!10} 76.0 & \cellcolor{gray!10} 12.2 & \cellcolor{gray!10} 71.3 & \cellcolor{gray!10} 69.2 & \cellcolor{gray!10} 61.2 & \cellcolor{gray!10} 46.8 & \cellcolor{gray!10} \textbf{86.2} & \cellcolor{gray!10} 79.8 & \cellcolor{gray!10} 57.2 \\
    \hspace{1mm}+ DPO (Qwen 3B over 1.5B) & \textbf{69.4} & \textbf{31.7} & \textbf{42.6} & \textbf{83.4} & \textbf{36.1} & \textbf{78.6} & 69.4 & \textbf{62.0} & 57.7 & 84.4 & 81.7 & \textbf{63.4}\\
    \hspace{1mm}+ SimPO (Qwen 3B over 1.5B) & 67.8 & 31.1 & 40.6 & 81.8 & 28.4 & 77.8 & \textbf{70.6} & 61.4 & \textbf{57.9} & 86.0 & \textbf{82.6} & 62.4 \\
\bottomrule
    \end{tabular}
\end{adjustbox}
\caption{
\textbf{Top half.} We ablate our use of DPO as our preference tuning algorithm, and find that tuning with SimPO can also yield gains with weak preference data.
\vspace{-1.5em}
}
\label{tab:appdx:pref_algo}
\end{table}

\section{Weak Preference Dataset Details}
\label{appdx:weak_dataset}
Here, we provide generation details and statistics for the weak preference datasets used in our main post-training experiments (\autoref{sec:posttrain}). To better illustrate what deltas exist between the chosen and rejected responses, we also provide qualitative examples of preference pairs from our best performing weak dataset (Qwen-2.5-3B-Instruct over 1.5B responses).

\subsection{Dataset Creation Details}
 We find that the original Tülu 3 dataset contains approximately 6000 duplicated prompts (but not duplicated preference pairs, as Tülu 3 uses a large pool of models to generate responses and hence can form multiple distinct pairs for each prompt). Because our setup uses the same model to generate chosen responses for every prompt, we de-duplicated the repeated prompts, leaving 264806 remaining preference pairs for each of our weak datasets. These are the seed prompts are used for our simple recipe in \autoref{sec:posttrain}. 

\subsection{Qualitative Examples}
To better understand the delta between weak and weaker responses, we manually inspected pairs from our best post-training preference data (i.e., chosen responses generated by Qwen-2.5-3B-Instruct and rejected responses by Qwen-2.5-1.5B-Instruct). Overall, there were no universal axes along which the Qwen 3B responses were better than the Qwen 1.5B, but we did observe several interesting deltas. We summarize them here, and showcase qualitative examples of these differences in \autoref{fig:appdx:qualitative_correct}, \autoref{fig:appdx:qualitative_length}, and \autoref{fig:appdx:qualitative_cot}.
\begin{enumerate}
    \item On prompts with verifiable answers (e.g. math, code), we find pairs where Qwen 3B responds correctly but Qwen 1.5B does not (\autoref{fig:appdx:qualitative_correct}).
    \item On knowledge-seeking prompts, we find pairs where Qwen 3B responds with more detail (\autoref{fig:appdx:qualitative_length}).
    \item On prompts that admit brief answers, we find pairs where Qwen 3B generates a chain-of-thought, while Qwen 1.5B responds with just the answer (\autoref{fig:appdx:qualitative_cot}).
\end{enumerate}

Note that these deltas \textit{are not exhaustive}; we simply highlight a few here as interesting examples to motivate future work. In particular, we believe it would be exciting to further characterize what semantic deltas exist in preference data and how the deltas translate to downstream model behavior after training. 

\begin{figure}[h]
\scriptsize
\centering
\begin{tcolorbox}[
  enhanced,
  attach boxed title to top center={yshift=-3mm},
  colbacktitle=blue!75!black,
  colframe=blue!75!black,
  colback=white,
  width=\textwidth,
  boxed title style={size=small, colback=blue!75!black},
  fonttitle=\bfseries\color{white}
]
\begin{tcolorbox}[
  colframe=blue!75!black,
  colback=blue!5,
  coltitle=white,
  title=\textbf{Prompt:},
  fonttitle=\bfseries
]
Dani brings two and half dozen cupcakes for her 2nd-grade class.  There are 27 students (including Dani), 1 teacher, and 1 teacher’s aid.  If 3 students called in sick that day, how many cupcakes are left after Dani gives one to everyone in the class?
\end{tcolorbox}
\textbf{\textit{Note: the correct answer is 4}}
\vspace{0.5em}
\begin{tcolorbox}[
  colframe=blue!50!black,
  colback=white,
  coltitle=white,
  title=\textbf{Chosen Response (Generated by Qwen-2.5-3B-Instruct):},
  fonttitle=\bfseries
]
First, let's determine the total number of cupcakes Dani brought. Since a dozen is equal to 12, two and a half dozen cupcakes is calculated as follows:

\[ 2.5 \times 12 = 30 \]

So, Dani brought 30 cupcakes.

Next, we need to determine the total number of people in the class that day. There are 27 students, 1 teacher, and 1 teacher's aid, but 3 students called in sick. Therefore, the number of students present is:

\[ 27 - 3 = 24 \]

Including the teacher and the teacher's aid, the total number of people in the class is:

\[ 24 + 1 + 1 = 26 \]

Dani gives one cupcake to each person in the class. Therefore, the number of cupcakes given out is 26. The number of cupcakes left after giving one to each person is:

\[ 30 - 26 = 4 \]

So, the number of cupcakes left is:

\[
\boxed{4}
\]
\end{tcolorbox}
\begin{tcolorbox}[
  colframe=blue!50!black,
  colback=white,
  coltitle=white,
  title=\textbf{Rejected Response (Generated by Qwen-2.5-1.5B-Instruct):},
  fonttitle=\bfseries
]
To determine how many cupcakes are left after Dani gives one to everyone in the class, we need to follow these steps:\\

1. Calculate the total number of cupcakes Dani brought.\\
2. Determine the number of students who attended the class.\\
3. Calculate the total number of cupcakes given out.\\
4. Subtract the number of cupcakes given out from the total number of cupcakes brought.\\

First, let's calculate the total number of cupcakes Dani brought. She brought two and a half dozen cupcakes. Since one dozen equals 12 cupcakes, two and a half dozen is:
\[ 2.5 \times 12 = 30 \text{ cupcakes} \]

Next, we need to determine the number of students who attended the class. There are 27 students in total, but 3 students called in sick. Therefore, the number of students who attended is:
\[ 27 - 3 = 24 \text{ students} \]

Now, we calculate the total number of cupcakes given out. Since Dani gives one cupcake to each student, the total number of cupcakes given out is:
\[ 24 \text{ cupcakes} \]

Finally, we subtract the number of cupcakes given out from the total number of cupcakes brought to find out how many cupcakes are left:
\[ 30 - 24 = 6 \text{ cupcakes} \]

So, the number of cupcakes left after Dani gives one to everyone in the class is:
\[ \boxed{6} \]
\end{tcolorbox}

\end{tcolorbox}
\caption{
Qualitative example of a preference pair from our Qwen 3B over 1.5B weak preference data. On this math prompt (ground-truth answer is 4 cupcakes), the Qwen 3B-generated chosen response contains a verifiably correct answer, while the Qwen 1.5B-generated rejected response is incorrect (6 cupcakes).
}
\label{fig:appdx:qualitative_correct}
\end{figure}

\begin{figure}[h]
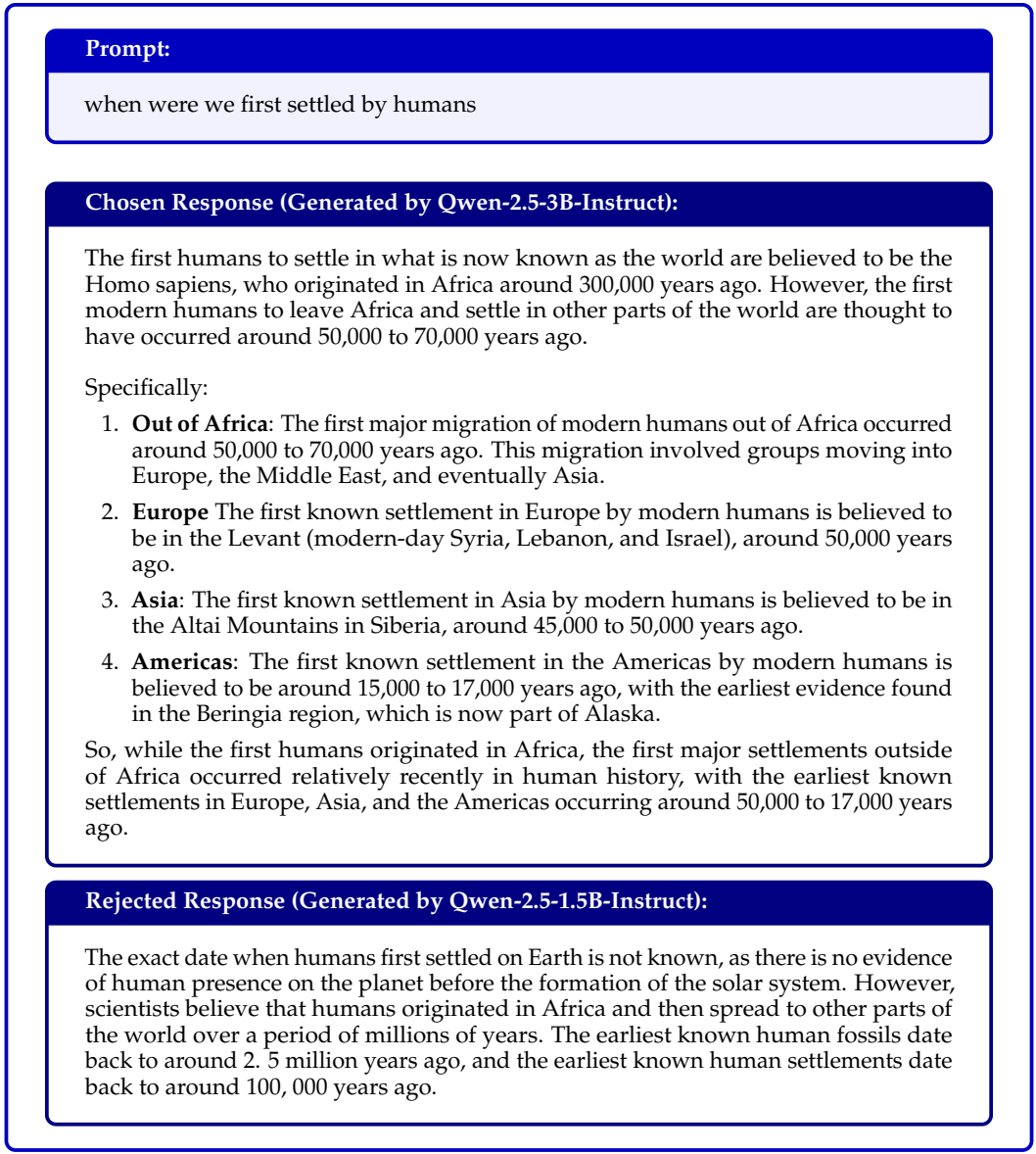

\footnotesize
\centering
\begin{tcolorbox}[
  enhanced,
  attach boxed title to top center={yshift=-3mm},
  colbacktitle=blue!75!black,
  colframe=blue!75!black,
  colback=white,
  width=\textwidth,
  boxed title style={size=small, colback=blue!75!black},
  fonttitle=\bfseries\color{white}
]
\begin{tcolorbox}[
  colframe=blue!75!black,
  colback=blue!5,
  coltitle=white,
  title=\textbf{Prompt:},
  fonttitle=\bfseries
]
when were we first settled by humans
\end{tcolorbox}
\vspace{0.5em}
\begin{tcolorbox}[
  colframe=blue!50!black,
  colback=white,
  coltitle=white,
  title=\textbf{Chosen Response (Generated by Qwen-2.5-3B-Instruct):},
  fonttitle=\bfseries
]
The first humans to settle in what is now known as the world are believed to be the Homo sapiens, who originated in Africa around 300,000 years ago. However, the first modern humans to leave Africa and settle in other parts of the world are thought to have occurred around 50,000 to 70,000 years ago.\\

Specifically:
\begin{enumerate}[leftmargin=2em]
    \item \textbf{Out of Africa}: The first major migration of modern humans out of Africa occurred around 50,000 to 70,000 years ago. This migration involved groups moving into Europe, the Middle East, and eventually Asia.
    \item \textbf{Europe} The first known settlement in Europe by modern humans is believed to be in the Levant (modern-day Syria, Lebanon, and Israel), around 50,000 years ago.
    \item \textbf{Asia}: The first known settlement in Asia by modern humans is believed to be in the Altai Mountains in Siberia, around 45,000 to 50,000 years ago.
    \item \textbf{Americas}: The first known settlement in the Americas by modern humans is believed to be around 15,000 to 17,000 years ago, with the earliest evidence found in the Beringia region, which is now part of Alaska.
\end{enumerate}
So, while the first humans originated in Africa, the first major settlements outside of Africa occurred relatively recently in human history, with the earliest known settlements in Europe, Asia, and the Americas occurring around 50,000 to 17,000 years ago.
\end{tcolorbox}
\begin{tcolorbox}[
  colframe=blue!50!black,
  colback=white,
  coltitle=white,
  title=\textbf{Rejected Response (Generated by Qwen-2.5-1.5B-Instruct):},
  fonttitle=\bfseries
]
The exact date when humans first settled on Earth is not known, as there is no evidence of human presence on the planet before the formation of the solar system. However, scientists believe that humans originated in Africa and then spread to other parts of the world over a period of millions of years. The earliest known human fossils date back to around 2. 5 million years ago, and the earliest known human settlements date back to around 100, 000 years ago.
\end{tcolorbox}

\end{tcolorbox}
\caption{
Qualitative example of a preference pair from our Qwen 3B over 1.5B weak preference data. On this knowledge-seeking prompt, the Qwen 3B-generated chosen response contains (1) more detail and (2) more structure compared to the Qwen 1.5B-generated rejected response.
}
\label{fig:appdx:qualitative_length}
\end{figure}

\begin{figure}[h]
\footnotesize
\centering
\begin{tcolorbox}[
  enhanced,
  attach boxed title to top center={yshift=-3mm},
  colbacktitle=blue!75!black,
  colframe=blue!75!black,
  colback=white,
  width=\textwidth,
  boxed title style={size=small, colback=blue!75!black},
  fonttitle=\bfseries\color{white}
]
\begin{tcolorbox}[
  colframe=blue!75!black,
  colback=blue!5,
  coltitle=white,
  title=\textbf{Prompt:},
  fonttitle=\bfseries
]
In this task, you're given a context, a question, three options, and an answer. Your task is to classify whether the given answer is correct or not by providing 'Yes' or 'No', based on the context with commonsense reasoning about social situations.\\
\\
Input: Context: Kai wanted to save money so he wen to the car dealership and haggled and bought a used car.\\
\hspace*{1em}Question: How would you describe Kai?\\
\hspace*{1em}Options: (A) wasteful (B) extravagant (C) thrifty \\
\hspace*{1em}Answer: C\\
Output:\\
\end{tcolorbox}
\vspace{0.5em}
\begin{tcolorbox}[
  colframe=blue!50!black,
  colback=white,
  coltitle=white,
  title=\textbf{Chosen Response (Generated by Qwen-2.5-3B-Instruct):},
  fonttitle=\bfseries
]
Yes\\

The given answer C (thrifty) is correct based on the context provided. Kai's actions of wanting to save money and haggling to buy a used car demonstrate frugality and careful financial management, which aligns with the definition of being thrifty
\end{tcolorbox}
\begin{tcolorbox}[
  colframe=blue!50!black,
  colback=white,
  coltitle=white,
  title=\textbf{Rejected Response (Generated by Qwen-2.5-1.5B-Instruct):},
  fonttitle=\bfseries
]
Yes
\end{tcolorbox}
\end{tcolorbox}
\caption{
Qualitative example of a preference pair from our Qwen 3B over 1.5B weak preference data. On this prompt which admits a brief answer, the Qwen 3B-generated chosen response contains a chain-of-thought justification, while the Qwen 1.5B-generated rejected response contains just an answer.
}
\label{fig:appdx:qualitative_cot}
\end{figure}

\subsection{Quantitative Statistics}
We computed the following statistics for each weak dataset used in our core post-training experiments (\autoref{tab:posttrain}) using the Tülu-3-8B-SFT tokenizer (i.e. the base SFT model’s tokenizer):
\begin{itemize}[itemsep=0.2em, leftmargin=3em]
    \item \textbf{Average token length} of chosen and rejected responses
    \item \textbf{Vocabulary diversity}, measured as the number of unique 1-gram and 2-gram tokens in chosen and rejected responses
    \item \textbf{Cosine similarity} between chosen and rejected response embeddings, computed using OpenAI's \texttt{text-embedding-3-small} API
\end{itemize}

As a reference point, we also report these statistics for the original Tülu 3 preference dataset. Results are shown in \autoref{tab:appdx:dataset_stats}.
Overall, we observe that the chosen responses in our weak preference data are generally (a) shorter and (b) more diverse in vocabulary than the paired rejected responses. In contrast, chosen and rejected responses in the Tülu 3 data exhibit largely similar statistics. We hypothesize that the longer length of rejected responses in our data may be due to degenerate outputs from the small weak models; they sometimes repeat tokens until reaching the maximum generation length. The cosine similarity between chosen and rejected responses is relatively small for all datasets; qualitatively, chosen and rejected responses often differ significantly on a surface semantic level, which may contribute to this low embedding similarity.
\begin{table}[htbp]
\setlength{\tabcolsep}{4pt}
\centering
\begin{adjustbox}{max width=\textwidth, center}%
\begin{tabular}{l ccc ccc c}
\toprule
\multicolumn{1}{c}{} &
\multicolumn{3}{c}{\textbf{Chosen Responses}} &
\multicolumn{3}{c}{\textbf{Rejected Responses}} &
\multicolumn{1}{c}{}\\
\cmidrule(lr){2-4}\cmidrule(lr){5-7}
\textbf{Preference Dataset} &
\textbf{Avg.\ Len.} & \textbf{Uniq.\ 1-gram} & \textbf{Uniq.\ 2-gram} &
\textbf{Avg.\ Len.} & \textbf{Uniq.\ 1-gram} & \textbf{Uniq.\ 2-gram} &
\textbf{Cosine Sim.} \\
\midrule
Llama 3.2 3B over 1B  & 779.3 & 112{,}553 & 6{,}858{,}616 & 1{,}041.3 & 109{,}090 & 5{,}884{,}941 & 0.117\\
Qwen 2.5 1.5B over 0.5B & 776.5 & 110{,}380 & 5{,}107{,}761 & 1{,}317.8 & 106{,}900 & 4{,}626{,}119 & 0.112\\
Qwen 2.5 3B over 1.5B & 709.1 & 114{,}183 & 6{,}637{,}312 & 776.5 & 110{,}380 & 5{,}107{,}761 & 0.115\\
\midrule
Tülu 3 Preference Data     & 443.9 & 119{,}269 & 10{,}980{,}913 & 441.3 & 119{,}320 & 10{,}866{,}352 & 0.117\\
\bottomrule
\end{tabular}
\end{adjustbox}
\caption{Statistics of the chosen and rejected responses in our weak preference datasets, with statistics of responses from the Tülu 3 preference dataset shown for referenc}
\label{tab:appdx:dataset_stats}
\end{table}

\section{Delta Learning in Logistic Regression}
\label{appdx:theory}

In this section, we give the full proofs of \autoref{thm:delta_learning_logistic} and \corref{cor:delta_learning_high_prob} along with all intermediate propositions and lemmas.

\subsection{Additional Preliminaries}
We begin by stating additional useful results that we take as preliminaries.
\vspace{0.5em}
\begin{proposition}[Population Gradient for Naive Preference Loss]
\label{prop:population_grad}
    Take covariates $x \sim \normal(0, I_d)$ and assign pseudo-labels $y_c, y_r$ using two teacher models $\theta_c, \theta_r \in \R^d$ via the following rule:
    \begin{align*}
        y_{c} = \mathbbm{1} \{\iprod{\theta_c, x} \geq 0\}, \qquad y_{r} = \mathbbm{1} \{\iprod{\theta_r, x} \geq 0\}.
    \end{align*}
    Then in expectation over the covariate distribution, we have
    \begin{align*}
        &\E\bigl[\nabla_\theta \mathcal{L}_{\text{pref}}\,(x,y_c,y_r;\theta)\bigr] = -\frac{1}{\sqrt{2\pi}}\left(\frac{\theta_c}{\normt{\theta_c}} - \frac{\theta_r}{\normt{\theta_r}}\right),\\ &\Cov{\bigl(\nabla_\theta\mathcal{L}_{\text{pref}}\,(x,y_c,y_r;\theta)\bigr)} \preceq I_d.
    \end{align*}
\end{proposition}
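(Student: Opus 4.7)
}
The plan is to exploit the fact that the naïve preference loss has an especially simple gradient in the logistic model: the sigmoid terms coming from $\nabla_\theta\log p_\theta(y_c|x)$ and $\nabla_\theta\log p_\theta(y_r|x)$ cancel, leaving a $\theta$-independent expression. From there, the population expectation reduces to two copies of the same Gaussian half-space integral, one per teacher.

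First I would compute the per-sample gradient. Using $p_\theta(y=1|x)=\sigma(\iprod{\theta,x})$, standard logistic calculus gives $\nabla_\theta\log p_\theta(y|x) = (y-\sigma(\iprod{\theta,x}))x$. Substituting into the definition of $\mathcal{L}_\text{pref}$ yields
\begin{align*}
    \nabla_\theta \mathcal{L}_\text{pref}(x,y_c,y_r;\theta) = -(y_c - y_r)\,x,
\end{align*}
which is independent of the current iterate $\theta$ (as the theorem's proof sketch already observes). Next I would evaluate $\E[(y_c-y_r)x]$ teacher-by-teacher. For each $k\in\{c,r\}$, set $u_k \coloneq \theta_k/\normt{\theta_k}$ and decompose $x = \iprod{u_k,x}u_k + x_\perp$, where $Z_k \coloneq \iprod{u_k,x}\sim\normal(0,1)$ is independent of the orthogonal component $x_\perp$ (which has mean zero). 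Then $y_k = \mathbbm{1}\{Z_k\geq 0\}$, so
\begin{align*}
    \E\bigl[\mathbbm{1}\{Z_k\geq 0\}\,x\bigr] = \E\bigl[Z_k\mathbbm{1}\{Z_k\geq 0\}\bigr]\,u_k = \frac{1}{\sqrt{2\pi}}\,\frac{\theta_k}{\normt{\theta_k}},
\end{align*}
using the elementary fact $\int_0^\infty z\phi(z)\,dz = \phi(0) = 1/\sqrt{2\pi}$ (equivalently, Stein's lemma). Subtracting the two teacher contributions and negating gives the claimed population gradient.

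For the covariance bound, I would note that $(y_c-y_r)\in\{-1,0,1\}$, so $(y_c-y_r)^2\leq 1$ almost surely. Hence
\begin{align*}
    \Cov\bigl(\nabla_\theta\mathcal L_\text{pref}\bigr) \preceq \E\bigl[(y_c-y_r)^2 xx^\top\bigr] \preceq \E[xx^\top] = I_d,
\end{align*}
where the first inequality is the standard $\Cov(V)\preceq \E[VV^\top]$ for any random vector $V$, and the second uses $(y_c-y_r)^2\leq 1$ together with $\E[xx^\top]=I_d$. I do not expect any real obstacle here: the main content is the cancellation of sigmoid terms in the gradient and the one-dimensional Gaussian half-space integral, both of which are routine. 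The only care needed is in the decomposition step, ensuring that $x_\perp$ is genuinely independent of $Z_k$ (which holds because $x$ is isotropic Gaussian, so rotation into the $u_k$-basis preserves independence of coordinates).
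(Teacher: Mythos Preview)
Your proposal is correct and follows essentially the same route as the paper: compute the per-sample gradient $-(y_c-y_r)x$, evaluate $\E[\mathbbm{1}\{\iprod{\theta_k,x}\geq 0\}\,x]$ for each teacher, and bound the covariance via $(y_c-y_r)^2\leq 1$ and $\E[xx^\top]=I_d$. The only cosmetic difference is in the half-space expectation: the paper invokes Stein's lemma directly on the indicator (using its weak derivative $\theta\,\delta(\iprod{\theta,x})$ and the Gaussian density at zero), whereas you rotate into the $u_k$-basis and reduce to the one-dimensional integral $\int_0^\infty z\phi(z)\,dz$; both yield $\tfrac{1}{\sqrt{2\pi}}\,\theta_k/\normt{\theta_k}$ and are standard.
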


\begin{proof}
    The naive preference loss $\mathcal{L}_\text{pref}$ is
    \begin{align*}
        \mathcal{L}_\text{pref}(x, y_c, y_r; \theta) &= - \bigl[\log{p_{\theta}(y_c | x)} - \log{p_{\theta}(y_r | x)}\bigr],\qquad p_\theta(y=1|x) = \sigma(\iprod{\theta,x}).
    \end{align*}
    For any single fixed preference pair $(x, y_c, y_r)$, the gradient with respect to $\theta$ is
    \begin{align*}
        \nabla_\theta L_\text{naive}(x, y_c, y_r; \theta) &= -\bigl[ \nabla_\theta\log p_\theta(y_c | x) - \nabla_\theta\log p_\theta(y_r | x) \bigr]\\
        &= -\left[\bigl(y_c - \sigma(\iprod{\theta, x})\bigr)x - \bigl(y_r - \sigma(\iprod{\theta, x})\bigr)x\right]\\
        &= - (y_c - y_r)x.
    \end{align*}
Taking expectation of the gradient over the covariate distribution $\normal(0, I_d)$,
\begin{align*}
    \E[\nabla_\theta \mathcal{L}_{\text{pref}}(x, y_c, y_r; \theta)] &= -\Bigl[\E\bigl[y_c\cdot x\bigr] - \E\bigl[y_r \cdot x\bigr] \Bigr] \\
    &= -\Bigl[\E\bigl[\mathbbm{1}\{\iprod{\theta_c,x}\geq0\}\cdot x\bigr] - \E\bigl[\mathbbm{1}\{\iprod{\theta_r,x}\geq0\} \cdot x\bigr] \Bigr].
\end{align*}
Since the function $h(x) = \mathbbm{1}\{\iprod{\theta,x}\geq0\}$ is weakly differentiable, by Stein's Lemma we have
\begin{align*}
    \E\bigl[\mathbbm{1}\{\iprod{\theta,x}\geq0\} \cdot x\bigr] &= \E\bigl[\nabla_x\mathbbm{1}\{\iprod{\theta,x}\geq0\}\bigr]\\
    &= \E\bigl[\theta \cdot \delta(\iprod{\theta, x})\bigr] = \theta\E\bigl[\delta(\iprod{\theta, x})\bigr]\\
    &= \frac{1}{\sqrt{2\pi}} \cdot \frac{\theta}{\normt{\theta}}.
\end{align*}
where $\delta$ denotes the Dirac delta function. The final equality holds since $Z \coloneq \iprod{\theta,x} \sim \normal(0, \snormt{\theta})$, and hence $\E\bigl[\delta(\iprod{\theta, x})\bigr] = p_Z(0) = 1/(\sqrt{2\pi}\normt{\theta})$.

We now bound the covariance $\Cov(\nabla_\theta \mathcal{L}_\text{pref}) = \Cov((y_c-y_r)x)$. Let $D$ denote the set where $\theta_c$ and $\theta_r$ disagree on decision
\begin{align*}
    D = \{x \colon \sgn{\iprod{\theta_c, x}} \ne \sgn{\iprod{\theta_r, x}}\}
\end{align*}
Then since $(y_c - y_r)x = x$ for all $x \in D$ and 0 otherwise, we have
\begin{align*}
    \E\left[(\nabla_\theta \mathcal{L}_\text{pref})(\nabla_\theta \mathcal{L}_\text{pref})^T\right] = \E[(y_c-y_r)^2xx^T] = \E[xx^T \cdot \one_D(x)] \preceq \E[xx^T] &= I_d\\
    \implies \Cov(\nabla_\theta \mathcal{L}_\text{pref}) = \E[(\nabla_\theta \mathcal{L}_\text{pref})(\nabla_\theta \mathcal{L}_\text{pref})^T] - \E[\nabla_\theta \mathcal{L}_\text{pref}]\E[\nabla_\theta \mathcal{L}_\text{pref}]^T &\preceq I_d
\end{align*}
as $\E[\nabla_\theta \mathcal{L}_\text{pref}]\E[\nabla_\theta \mathcal{L}_\text{pref}]^T$ is positive semi-definite. The equality $\E[xx^T] = I_d$ holds from the assumption that our data is drawn from isotropic Gaussian.
\end{proof}

\begin{lemma}[Vector Bernstein-Freedman]\label{lem:vector_bernstein}
   Let $\bigl( \mathcal{F}\bigr)_{k\geq0}$ be a filtration and let
   \begin{align*}
       \{Y_k: k=0,1,2,\dots\}
   \end{align*}
   be an $\R^d$-valued martingale adapted to it. Denote the difference sequence as
   \begin{align*}
       X_k \coloneq Y_{k} - Y_{k-1}, \quad k\geq 1,
   \end{align*}
   and assume that the difference sequence is uniformly bounded:
   \begin{align*}
       \normt{X_k} \leq R \quad \text{almost surely for every } k \geq 1
   \end{align*}
   At any finite horizon $n$, we define the predictable quadratic variation as
   \begin{align*}
       \sigma_n^2 \coloneq \sum_{k=1}^n\E_{k-1}\snormt{X_k}.
   \end{align*}
   Let $S_n$ denote the partial sum of differences up to the horizon, $S_n = \sum_{k=1}^n X_k$. Then for every $t \geq 0$,
   \begin{align*}
       \Pr\left[\normt{S_n} \geq t\right] \leq (d+1)\exp{-\frac{-t^2/2}{\sigma_n^2 + Rt/3}}.
   \end{align*}
    Equivalently, for any failure probability $\delta \in (0,1)$, then with probability at least $1-\delta$
    \begin{align*}
        \normt{S_n} \leq \sqrt{2\sigma_n^2\ln\frac{d+1}{\delta}} + \frac{2R}{3}\ln\frac{d+1}{\delta}.
    \end{align*}
\end{lemma}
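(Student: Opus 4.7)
The plan is to reduce the vector-valued tail inequality to the classical matrix Bernstein--Freedman inequality for self-adjoint matrix martingales via the Hermitian dilation trick. This trick isometrically embeds $\R^d$ into the space of $(d+1)\times(d+1)$ self-adjoint matrices in a way that converts $\ell_2$-norm into operator norm, and the ambient dimension $(d+1)$ of the lifted problem is precisely what produces the $(d+1)$ prefactor in the statement.

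First I would set up the dilation. Define $\Phi\colon \R^d \to \R^{(d+1)\times(d+1)}$ by
\begin{equation*}
    \Phi(v) \coloneq \begin{pmatrix} 0 & v^T \\ v & 0 \end{pmatrix},
\end{equation*}
and let $Z_k \coloneq \Phi(X_k)$. Since $\Phi$ is linear and deterministic, $(Z_k)$ is a self-adjoint-matrix-valued martingale difference sequence adapted to $(\mathcal{F}_k)$, with $\sum_{k=1}^n Z_k = \Phi(S_n)$. Two block multiplications yield the identities that make this embedding useful: the nonzero eigenvalues of $\Phi(v)$ are $\pm\normt{v}$, so $\lambda_{\max}\bigl(\Phi(v)\bigr)=\normt{v}$, and $\Phi(v)^2$ is block-diagonal with blocks $\snormt{v}$ and $vv^T$, so $\lambda_{\max}\bigl(\Phi(v)^2\bigr)=\snormt{v}$. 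Consequently the event $\{\normt{S_n}\geq t\}$ is identical to $\{\lambda_{\max}(\sum_k Z_k)\geq t\}$, and the almost-sure bound $\normt{X_k}\leq R$ upgrades to $\lambda_{\max}(Z_k)\leq R$.

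Next I would verify that the predictable quadratic variation of $(Z_k)$ is still governed by the scalar quantity $\sigma_n^2$. The matrix $W_n \coloneq \sum_{k=1}^n \E_{k-1}[Z_k^2]$ inherits the block-diagonal structure, with top-left entry $\sum_k \E_{k-1}\snormt{X_k}=\sigma_n^2$ and bottom-right block $M \coloneq \sum_k \E_{k-1}[X_k X_k^T]$. The block $M$ is PSD, and by subadditivity of $\lambda_{\max}$ combined with Jensen's inequality applied to the convex map $\lambda_{\max}$ we get $\lambda_{\max}(M) \leq \sum_k \E_{k-1}\lambda_{\max}(X_kX_k^T) = \sum_k \E_{k-1}\snormt{X_k} = \sigma_n^2$. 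Hence $\lambda_{\max}(W_n)\leq\sigma_n^2$, without any spurious blow-up from the embedding.

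Finally I would apply the matrix Bernstein--Freedman inequality for self-adjoint martingales in ambient dimension $d+1$, which with the two bounds above produces
\begin{equation*}
    \Pr\!\left[\lambda_{\max}\Bigl(\sum_{k=1}^n Z_k\Bigr)\geq t\right] \leq (d+1)\exp\!\left(-\frac{t^2/2}{\sigma_n^2 + Rt/3}\right).
\end{equation*}
By the dilation identity the left-hand side equals $\Pr[\normt{S_n}\geq t]$, proving the first claim. The equivalent high-probability form follows by setting the right-hand side equal to $\delta$, solving the resulting quadratic in $t$, and invoking $\sqrt{a+b}\leq\sqrt{a}+\sqrt{b}$ to separate the sub-Gaussian and sub-exponential contributions. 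The main obstacle is essentially bookkeeping around the dilation---specifically, confirming that the bottom-right block does not inflate the predictable variation by a constant factor---after which the result is a direct black-box invocation of matrix Freedman.
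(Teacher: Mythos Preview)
Your argument is correct. The paper's own proof is a one-line citation: it states that the lemma is a restatement of Theorem~1.6 in Tropp (2012), \emph{User-friendly tail bounds for sums of random matrices}, specialized to $d\times 1$ matrices, and that the high-probability form follows by setting the tail bound equal to $\delta$ and solving for $t$. What you have done is unpack that black box: the Hermitian dilation $\Phi(v)=\bigl(\begin{smallmatrix}0 & v^T\\ v & 0\end{smallmatrix}\bigr)$ is exactly the device Tropp uses to pass from the self-adjoint matrix Freedman inequality to the rectangular version, and your verification that $\lambda_{\max}(W_n)\le\sigma_n^2$ (which could also be obtained from the cruder $X_kX_k^T\preceq\snormt{X_k}I_d$) is the step that keeps the variance proxy from inflating. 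So the two proofs are not really different routes---yours is the explicit content of the cited theorem, restricted to vectors. Your write-up has the advantage of making the $(d+1)$ prefactor and the quadratic inversion for the high-probability form fully visible; the paper simply defers both to the reference.
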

\begin{proof}
    This is a restatement of Theorem 1.6 from \cite{tropp2012user} specialized to $d\times1$ vectors in $\R^d$. To get the equivalent high probability bound statement, simply set $\Pr\left[\normt{S_n} \geq t\right] \leq \delta$ and solve for $t$.
\end{proof}

\subsection{Full Proof of \autoref{thm:delta_learning_logistic}}
We build up to the final proof via a series of propositions.

\begin{proposition}
\label{prop:fprime}
Take $a, b \in \R^d$ and take $\theta^*$ to be a unit vector in $\R^d$. Define the function
\begin{align}
    f(t) \coloneq \cos(a+bt, \theta^*) = \frac{\iprod{a+bt, \theta^*}}{\normt{a+bt}}.
\end{align}
Then the derivative at $t = 0$ satisfies the following identity:
\begin{align}
    f'(0) \cdot \normt{a} = \iprod{\proj_{a^\perp}(b),\theta^*}.
\end{align}
This can also be expressed as
\begin{align}
    f'(0) \cdot \normt{a} = \iprod{b, \theta^*}\left(1 - \frac{\iprod{a, \theta^*}^2}{\snormt{a}}\right) - \frac{\iprod{a, \theta^*}}{\snormt{a}}\left( \iprod{\proj_{(\theta^*)^\perp}(a), \proj_{(\theta^*)^\perp}(b)}\right).
\end{align}
Hence, $f'(0) > 0$ is equivalent to either of the above expressions being positive.
\end{proposition}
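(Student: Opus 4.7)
The plan is to compute $f'(0)$ directly via the quotient rule and then rewrite the resulting scalar in two algebraically equivalent ways, each corresponding to one of the stated identities.

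First, I would differentiate $f(t) = \iprod{a+bt,\theta^*}/\normt{a+bt}$ with respect to $t$. The numerator derivative is $\iprod{b,\theta^*}$ and the denominator derivative is $\iprod{a+bt,b}/\normt{a+bt}$, which at $t=0$ simplifies to $\iprod{a,b}/\normt{a}$. Combining via the quotient rule and multiplying both sides by $\normt{a}$ gives
\begin{align*}
    f'(0) \cdot \normt{a} = \iprod{b,\theta^*} - \frac{\iprod{a,\theta^*}\iprod{a,b}}{\snormt{a}}.
\end{align*}

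Second, to obtain the first stated identity, I would recall the explicit formula for the orthogonal projection onto $a^\perp$, namely $\proj_{a^\perp}(b) = b - (\iprod{a,b}/\snormt{a})\,a$. Taking the inner product with $\theta^*$ yields exactly the right-hand side above, establishing $f'(0)\cdot\normt{a} = \iprod{\proj_{a^\perp}(b),\theta^*}$.

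Third, for the second identity, I would decompose both $a$ and $b$ along $\theta^*$ and its orthogonal complement. This gives $\iprod{a,b} = \iprod{a,\theta^*}\iprod{b,\theta^*} + \iprod{\proj_{(\theta^*)^\perp}(a),\proj_{(\theta^*)^\perp}(b)}$. Substituting this expansion into the quotient-rule expression and factoring $\iprod{b,\theta^*}$ produces the desired form
\begin{align*}
    f'(0) \cdot \normt{a} = \iprod{b,\theta^*}\left(1 - \frac{\iprod{a,\theta^*}^2}{\snormt{a}}\right) - \frac{\iprod{a,\theta^*}}{\snormt{a}}\iprod{\proj_{(\theta^*)^\perp}(a),\proj_{(\theta^*)^\perp}(b)}.
\end{align*}
The equivalence $f'(0)>0 \iff \text{RHS}>0$ is immediate from $\normt{a}>0$. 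There is no real obstacle here beyond careful bookkeeping of the projection identities; the proof is essentially a one-line quotient-rule computation followed by two algebraic rewritings.
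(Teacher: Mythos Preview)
Your proposal is correct and follows essentially the same route as the paper: a quotient-rule computation of $f'(0)$, followed by recognizing the result as $\iprod{\proj_{a^\perp}(b),\theta^*}$, and then expanding $\iprod{a,b}$ via the decomposition along $\theta^*$ and $(\theta^*)^\perp$ to obtain the second form. The only cosmetic difference is that the paper writes out the full $f'(t)$ before specializing to $t=0$, whereas you evaluate the numerator and denominator derivatives at $0$ directly.
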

\begin{proof}
This proceeds with direct calculation. Write $\ell(t) = a+bt$. By quotient rule, the derivative of $f$ is
\begin{align*}
    f'(t) = \frac{d}{dt} \left[\frac{\iprod{\ell(t), \theta^*}}{\normt{\ell(t)}}\right] = \frac{\normt{\ell(t)}^2\cdot \iprod{b, \theta^*} - \iprod{\ell(t), \theta^*} \cdot \iprod{\ell(t), b}}{\normt{\ell(t)}^3}.
\end{align*}
Evaluating at $t=0$ gives
\begin{align*}
    f'(0) = \frac{\snormt{a}\iprod{b, \theta^*} - \iprod{a, \theta^*}\iprod{a, b}}{\normt{a}^3}.
\end{align*}
Now write the projection of $b$ onto the hyperplane perpendicular to $a$ as
\begin{align*}
    \proj_{a^\perp}(b) \coloneq b - \frac{\iprod{a,b}}{\snormt{a}}a,
\end{align*}
so then
\begin{align*}
    \iprod{\proj_{a^\perp}(b),\theta^*} = \iprod{b, \theta^*} - \frac{\iprod{a,b}}{\snormt{a}}\iprod{a,\theta^*} = f'(0)\cdot \normt{a},
\end{align*}
yielding the first identity. We further decompose $a,b$ into their components parallel to and orthogonal to $\theta^*$ as
\begin{align*}
    a = \iprod{a, \theta^*}\theta^* + \proj_{(\theta^*)^\perp}(a), \qquad b = \iprod{b, \theta^*}\theta^* + \proj_{(\theta^*)^\perp}(b).
\end{align*}
Simplifying $\iprod{a,b}$ with this decomposition, we have
\begin{align*}
     \iprod{a, b} = \iprod{a, \theta^*}\iprod{b, \theta^*} + \iprod{\proj_{(\theta^*)^\perp}(a), \proj_{(\theta^*)^\perp}(b)}.
\end{align*}
Substituting this into the expression for $\iprod{\proj_{a^\perp}(b),\theta^*}$ yields that
\begin{align*}
    f'(0)\cdot \normt{a} &= \iprod{\proj_{a^\perp}(b),\theta^*}\\
    &=\iprod{b, \theta^*}\left(1 - \frac{\iprod{a, \theta^*}^2}{\snormt{a}}\right) - \frac{\iprod{a, \theta^*}}{\snormt{a}}\left( \iprod{\proj_{(\theta^*)^\perp}(a), \proj_{(\theta^*)^\perp}(b)}\right),
\end{align*}
yielding the second identity.
\end{proof}

\begin{proposition}
\label{prop:fprimeprime}
    Take $f$ as defined in \propref{prop:fprime}, $f(t) \coloneq \cos(a + bt, \theta^*)$. Then for all $t$,
    \begin{align}
        |f''(t)| \leq \frac{2}{\sqrt{3}}\cdot\frac{\snormt{b}}{\snormt{a+bt}}.
    \end{align}
\end{proposition}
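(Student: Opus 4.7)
The plan is to reparametrize the curve so that the problem becomes one of bounding the second derivative of a unit-vector curve on the sphere, and then reduce the desired inequality to a one-dimensional optimization.

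First I would rewrite $f(t) = \iprod{u(t),\theta^*}$ where $u(t) \coloneq \ell(t)/\normt{\ell(t)}$ with $\ell(t) = a+bt$; since $\theta^*$ is constant, $f''(t) = \iprod{u''(t),\theta^*}$, and because $\normt{\theta^*}=1$ Cauchy--Schwarz gives $|f''(t)| \le \normt{u''(t)}$. This reduces the task to bounding $\normt{u''(t)}$ by $\tfrac{2}{\sqrt{3}}\snormt{b}/\snormt{\ell(t)}$.

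Next I would compute $u'$ and $u''$ directly via the quotient rule. Writing $L = \normt{\ell}$ and using $L' = \iprod{u,b}$, a short calculation yields $u'(t) = \proj_{u^\perp}(b)/L$. Differentiating once more (the cross terms cancel because $u \perp u'$), I would get
\begin{align*}
    u''(t) = -\frac{2\iprod{u,b}}{L}\,u'(t) \;-\; \frac{\iprod{u',b}}{L}\,u(t),
\end{align*}
and, since $u \perp u'$, the two summands are orthogonal, so
\begin{align*}
    \snormt{u''(t)} \;=\; \frac{4\iprod{u,b}^{2}\snormt{u'(t)} + \iprod{u',b}^{2}}{L^{2}}.
\end{align*}

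Then I would introduce $\alpha \coloneq \iprod{u,b}$ and $\beta^{2} \coloneq \snormt{b} - \alpha^{2} = \snormt{\proj_{u^\perp}(b)}$, so that $\snormt{u'} = \beta^{2}/L^{2}$ and $\iprod{u',b} = \beta^{2}/L$. Substituting gives the clean identity
\begin{align*}
    \snormt{u''(t)} \;=\; \frac{\beta^{2}(4\alpha^{2}+\beta^{2})}{L^{4}} \;=\; \frac{\beta^{2}\bigl(4\snormt{b} - 3\beta^{2}\bigr)}{L^{4}}.
\end{align*}
Finally, maximizing $x \mapsto x(4\snormt{b} - 3x)$ over $x = \beta^{2} \in [0,\snormt{b}]$ (or equivalently noting that $(3x - 2\snormt{b})^{2} \ge 0$ implies $x(4\snormt{b}-3x) \le \tfrac{4}{3}\snormt{b}^{2}$), I obtain $\snormt{u''(t)} \le \tfrac{4}{3}\snormt{b}^{2}/L^{4}$, and taking square roots and recalling $L^{2} = \snormt{\ell(t)} = \snormt{a+bt}$ gives the claim.

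The only real obstacle is the bookkeeping in computing $u''$ and in verifying that the cross term $\iprod{u,u'}$ vanishes (which is just $\frac{d}{dt}\snormt{u} = 0$). Once the decomposition along $u$ and $u^\perp$ is in place, the remaining steps are an algebraic identity and a completion-of-squares argument.
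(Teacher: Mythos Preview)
Your proposal is correct and follows essentially the same approach as the paper: reduce $|f''|$ to $\normt{u''}$ via Cauchy--Schwarz, express $u''$ as an orthogonal sum along $u$ and $u'$, and then optimize the resulting one-parameter expression to get the $4/3$ constant. Your parametrization in terms of $\alpha=\iprod{u,b}$ and $\beta^{2}=\snormt{b}-\alpha^{2}$ is a slight notational variant of the paper's use of $c=\cos(\angle(b,\ell))$ (indeed $(1-c^{2})(3c^{2}+1)=-3c^{4}+2c^{2}+1$ is exactly your $\beta^{2}(4\snormt{b}-3\beta^{2})/\snormt{b}^{2}$), and your derivation of $u''$ via the projection form of $u'$ is marginally cleaner than the paper's two applications of the quotient rule, but the argument is the same.
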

\begin{proof}
This bound can be shown by some careful algebra. We rewrite $f$ in terms of two intermediary functions defined as
\begin{align*}
    r(t) \coloneq \normt{l(t)}, \qquad u(t) \coloneq l(t) / r(t),
\end{align*}
so that $f = \iprod{u(t), \theta^*}$. Then since $\theta^*$ is a fixed unit vector we have
\begin{align*}
    f''(t) = \iprod{u''(t), \theta^*} \implies |f''(t)| \leq \normt{u''(t)}.
\end{align*}
So it suffices to bound $u''(t)$. With some calculus, we compute
\begin{align*}
    u'(t) = \frac{br(t)^2 - l(t)\iprod{l(t), b}}{r(t)^3}.
\end{align*}
Differentiating again with quotient rule,
\begin{align*}
    u''(t) &= \frac{r(t)^3 \cdot \bigl(2b\iprod{l(t), b} - b\iprod{l(t), b} - l(t)\iprod{b, b} \bigr) - \bigl(br(t)^2 - l(t)\iprod{l(t), b} \bigr) \cdot 3r(t)\iprod{l(t), b}}{r(t)^6}\\
    &=\frac{-2b\iprod{l(t), b}r(t)^3 - l(t)\iprod{b,b}r^3(t) + 3l(t)\iprod{l(t), b}^2r(t)}{r(t)^6}\\
    &= b\cdot\frac{-2\iprod{l(t), b}}{r(t)^3} + l(t)\cdot \left(-\frac{\iprod{b, b}}{r(t)^3} + \frac{3\iprod{l(t), b}^2}{r(t)^5} \right).
\end{align*}
To simplify further, we decompose $b$ into constituent parts parallel to and perpendicular to $l(t)$:
\begin{align*}
    b_{\parallel} \coloneq \proj_{l(t)}(b) = \left(\frac{\iprod{l(t), b}}{r(t)^2}\right)l(t), \qquad b_\perp \coloneq b - b_\parallel.
\end{align*}
Then since $b = b_\parallel + b_\perp$ we can reduce $u''(t)$ as
\begin{align*}
    u''(t) &= \left(l(t)\cdot\frac{-2\iprod{l(t), b}^2}{r(t)^5} + b_\perp \cdot \frac{-2\iprod{l(t), b}}{r(t)^3} \right) + l(t)\cdot \left(-\frac{\iprod{b, b}}{r(t)^3} + \frac{3\iprod{l(t), b}^2}{r(t)^5} \right)\\
    &= b_\perp \cdot \frac{-2\iprod{l(t), b}}{r(t)^3} - l(t) \cdot \frac{\iprod{b, b} - \iprod{l(t), b}^2/r(t)^2}{r(t)^3}.
\end{align*}
So now we've expressed $u''(t)$ in terms of two orthogonal components, $b_\perp \perp l(t)$. As such, the squared norms of the components add:
\begin{align*}
    \snormt{u''(t)} = \frac{4\iprod{l(t), b}^2\snormt{b_\perp}}{r(t)^6} + \frac{\left( \snormt{b} - \iprod{l(t), b}^2/r(t)^2 \right)^2}{r(t)^4} \qquad \qquad \left(\snormt{l(t)} = r(t)^2 \right)
\end{align*}
Now let $\theta(t)$ denote the angle between $b$ and $l(t)$, and observe that $c(t) \coloneq \cos(\theta(t)) = \frac{\iprod{l(t), b}}{r(t)\normt{b}}$. Then we have
\begin{align*}
    \snormt{u''(t)} &= \frac{4c(t)^2\snormt{b} \cdot ( 1 - c(t)^2)\snormt{b}}{r(t)^4} + \frac{\left(\snormt{b} - c(t)^2\snormt{b}\right)^2}{r(t)^4}\\
    &=\frac{\normt{b}^4}{r(t)^4} \cdot \left[ 4c(t)^2(1-c(t)^2) + (1-c(t)^2)^2\right]\\
    &= \frac{\normt{b}^4}{r(t)^4} \cdot \left[-3c(t)^4 +2c(t)^2 + 1 \right].
\end{align*}
By construction, for any $t$ we have $c(t) \in [-1,1]$. One can check that
\begin{align*}
    |f''(t)|^2 \leq \snormt{u''(t)} \leq \frac{\normt{b}^4}{r(t)^4} \cdot \sup_{x \in [-1,1]} \left[-3x^4 +2x^2 + 1  \right] \leq \frac{\normt{b}^4}{r(t)^4} \cdot (4/3).
\end{align*}
Taking square roots of both sides gives the desired bound.
\end{proof}

\begin{proposition}
\label{prop:delta_learning_population}
    Assume the delta learning setup of \autoref{sec:theory:setup} and \autoref{thm:delta_learning_logistic}, and suppose we train with the population update rule
    \begin{align}
        \Bar{\theta_{t+1}} \leftarrow \Bar{\theta_{t}} - \eta \E_{x \sim \normal(0, I_d)}\left[\nabla_\theta \mathcal{L}_\text{pref}(x, y_c, y_r;\,\Bar{\theta_t}) \right].
        \label{eq:appdx:ideal_update_rule}
    \end{align}
    Then if \conref{eq:criteria_teacher_student} holds so that $\kappa > 0$ (as defined in \autoref{thm:delta_learning_logistic}), training with learning rate $\eta$ for a total update horizon of
    \begin{align}
        H^* \coloneq \eta T = \frac{\kappa\normt{\theta_0}}{4\snormt{v_\Delta}}
        \label{eq:appdx:ideal_training_length}
    \end{align}
    yields a final iterate $\Bar{\theta_{T}}$ whose cosine alignment with the ideal parameters $\theta^*$ improves over the alignment of the initial student $\theta_0$ by a margin of at least $\Gamma$:
    \begin{align}
        &\Gamma \coloneq \kappa / 50 = \Theta(\kappa),\\
        &\cos(\Bar{\theta_T}, \theta^*)\; >\; \cos(\theta_0, \theta^*) + \Gamma.
    \end{align}
\end{proposition}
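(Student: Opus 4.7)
The plan is to reduce the update dynamics to a scalar calculus problem and then apply a standard second-order Taylor argument. First, I would note that the population gradient computed in \propref{prop:population_grad} is $-\E[\nabla_\theta \mathcal{L}_\text{pref}] = \frac{1}{\sqrt{2\pi}}(\theta_c/\normt{\theta_c} - \theta_r/\normt{\theta_r})$, which is independent of the current iterate $\bar\theta_t$. Since the teachers only enter the update through their normalized directions (this is noted in the paper's main proof sketch), I would WLOG rescale so $\theta_c$, $\theta_r$ are unit and absorb $1/\sqrt{2\pi}$ into the learning rate, so that the population trajectory is a straight ray $\bar\theta_t = \theta_0 + (\eta t)\,v_\Delta$ with $v_\Delta = \theta_c - \theta_r$. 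The final iterate $\bar\theta_T$ thus corresponds to the point $\ell(H^*) = \theta_0 + H^* v_\Delta$ on this ray, and proving the proposition reduces to lower-bounding $f(H^*) - f(0)$ for the one-dimensional map $f(\lambda) \coloneq \cos(\ell(\lambda),\theta^*)$ defined in \propref{prop:fprime}.

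Next, I would apply a second-order Taylor expansion: $f(H^*) - f(0) \geq H^* f'(0) - \tfrac{L}{2}(H^*)^2$, where $L \coloneq \sup_{\lambda \in [0, H^*]} |f''(\lambda)|$. By \propref{prop:fprime} with $a = \theta_0$, $b = v_\Delta$, the second (fully expanded) identity combined with the definitions $\alpha_c - \alpha_r = \iprod{v_\Delta, \theta^*}$ and $\alpha_0 = \iprod{\theta_0, \theta^*}/\normt{\theta_0}$ gives exactly
\[
f'(0)\cdot \normt{\theta_0} \,=\, (\alpha_c - \alpha_r)(1-\alpha_0^2) - \alpha_0 \iprod{\proj_{(\theta^{*\perp})}(\Tilde{\theta}_0),\,\proj_{(\theta^{*\perp})}(v_\Delta)} \,=\, \kappa,
\]
so $f'(0) = \kappa/\normt{\theta_0}$, which is strictly positive under \conref{eq:criteria_teacher_student}.

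The main obstacle, and the reason for the specific choice $H^* = \kappa\normt{\theta_0}/(4\snormt{v_\Delta})$, is keeping $L$ bounded: the estimate of \propref{prop:fprimeprime} is $|f''(\lambda)| \leq \tfrac{2}{\sqrt{3}}\snormt{v_\Delta}/\snormt{\ell(\lambda)}$, so I need a uniform lower bound on $\normt{\ell(\lambda)}$ over $[0, H^*]$. I would establish the elementary fact that $\kappa \leq \normt{v_\Delta}$ (it is an inner product of $\proj_{(\theta^{*\perp})}(v_\Delta)/\dots$ with something of unit norm, which $\leq \normt{v_\Delta}$), so $H^* \normt{v_\Delta} \leq \normt{\theta_0}/4$, and by the triangle inequality $\normt{\ell(\lambda)} \geq \tfrac{3}{4}\normt{\theta_0}$ throughout. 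This yields the uniform bound $L \leq \tfrac{2}{\sqrt{3}} \cdot \tfrac{16}{9}\cdot \snormt{v_\Delta}/\snormt{\theta_0}$.

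Finally, I would substitute into the Taylor bound: the linear term contributes $H^* f'(0) = \kappa^2/(4\snormt{v_\Delta})$, while the quadratic correction is absolutely bounded by $\tfrac{L}{2}(H^*)^2 = O(\snormt{v_\Delta}/\snormt{\theta_0}) \cdot \kappa^2 \snormt{\theta_0}/\snormt{v_\Delta}^4 \cdot \snormt{v_\Delta}$, which works out to a small constant multiple of $\kappa^2/\snormt{v_\Delta}$ strictly smaller than the linear gain. Combining and plugging in the explicit constants gives a clean lower bound of the form $\Gamma \geq c\,\kappa^2/\snormt{v_\Delta}$ for an absolute $c > 0$; since $\normt{v_\Delta} \leq 2$ (difference of unit vectors), this is at least $\kappa/50$ after absorbing constants, matching the statement. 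The two expected pain points are (i) verifying $\kappa \leq \normt{v_\Delta}$ cleanly from the definitions, and (ii) chasing constants carefully enough in the Taylor remainder to land at the explicit $1/50$ rather than merely an asymptotic $\Theta(\kappa^2)$ claim.
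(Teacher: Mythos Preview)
Your proposal is correct and follows essentially the same approach as the paper: reduce to the one-dimensional map $f(\lambda)=\cos(\theta_0+\lambda v_\Delta,\theta^*)$, apply a second-order Taylor expansion, invoke \propref{prop:fprime} for $f'(0)=\kappa/\normt{\theta_0}$ and \propref{prop:fprimeprime} for the curvature bound, then control $\normt{\ell(\lambda)}$ from below via a bound relating $\kappa$ to $\normt{v_\Delta}$. The only notable difference is that you use the sharper estimate $|\kappa|\le\normt{v_\Delta}$ (which indeed follows directly from the first identity $\kappa=\iprod{\proj_{\theta_0^\perp}(v_\Delta),\theta^*}$ in \propref{prop:fprime} and Cauchy--Schwarz), whereas the paper uses the looser $|\kappa|\le 2\normt{v_\Delta}$ via triangle inequality on the expanded form; this gives you $\normt{\ell(\lambda)}\ge\tfrac{3}{4}\normt{\theta_0}$ versus the paper's $\tfrac{1}{2}\normt{\theta_0}$, but the downstream computation and final $\Theta(\kappa^2/\snormt{v_\Delta})$ bound are the same.
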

\begin{proof}
As discussed in our sketch in \autoref{sec:theory:theorem_sketch}, by \propref{prop:population_grad} training with population gradients amounts to tracing various points along a parametric ray $\ell: \R_{\geq0} \mapsto \R^d$,
\begin{align}
    \ell(\lambda) \coloneq \theta_0 + \lambda v_\Delta, \qquad v_\Delta \coloneq \theta_c/\normt{\theta_c} - \theta_r/\normt{\theta_r}.
\end{align}
Define $f$ to be a map measuring the student's performance over the course of training:
\begin{align}
    f(\lambda) \coloneq \cos(\ell(\lambda), \theta^*) = \frac{\iprod{\ell(\lambda), \theta^*}}{\normt{\ell(\lambda)}}.
\end{align}
Specializing \propref{prop:fprime} to our setup, we have that
\begin{align}
    f'(0) \cdot \normt{\theta_0} &= \iprod{\proj_{\theta_0^\perp}(v_\Delta), \theta^*}\\
    &= (\alpha_c - \alpha_r)(1-\alpha_0^2) - \alpha_0\iprod{\proj_{(\theta^*)^\perp}(\theta_0/\normt{\theta_0}), \proj_{(\theta^*)^\perp}(v_\Delta)} \eqcolon \kappa.
    \label{eq:appdx:kappa_fprime_relationship}
\end{align}
So $f'(0) \iff \kappa > 0$, and $f'(0) > 0$ is a sufficient condition for training with population updates to yield an improvement. This is exactly \conref{eq:criteria_teacher_student} from \autoref{thm:delta_learning_logistic}. Assuming this holds, we can quantify the magnitude of the gain after training for some $T$ steps. By a second-order Taylor expansion of $f$, the total improvement $\Gamma$ can be bounded as
\begin{align}
    \Gamma &\coloneq \cos(\Bar{\theta_T}, \theta^*) - \cos(\theta_0, \theta^*) = f(\eta T) - f(0) \geq \eta T f'(0) - \frac{L}{2}(\eta T)^2,
    \label{eq:appdx:ideal_gain}
\end{align}
where $L$ is a bound on the second derivative $|f''(\lambda)|$ derived in \propref{prop:fprimeprime}:
\begin{align}
    L \coloneq \sup_{\lambda \in [0,\,\eta T]} |f''(\lambda)| = \frac{2}{\sqrt{3}}\frac{\snormt{v_\Delta}}{\snormt{\theta_0 + \lambda v_\Delta}}.
\end{align}
To simplify $L$ further, observe that $|\kappa| \leq 2\normt{v_\Delta}$:
\begin{align}
    |\kappa| &\leq \left|\iprod{v_\Delta, \theta^*}(1-\alpha_0^2)\right| + \left|\alpha_0 \iprod{\proj_{(\theta^*)^\perp}(\theta_0/\normt{\theta_0}), \proj_{(\theta^*)^\perp}(v_\Delta)} \right|\\
    &\leq \normt{v_\Delta} + \normt{\proj_{(\theta^*)^\perp}(v_\Delta)} \leq 2\normt{v_\Delta}.
\end{align}
So then we have for all $\lambda \in [0,\,\eta T]$ that
\begin{align}
    \normt{\theta_0 + \lambda v_\Delta} \geq \normt{\theta_0} - \lambda\normt{v_\Delta} \geq \normt{\theta_0} - \left(\frac{2\normt{v_\Delta}\normt{\theta_0}}{4\snormt{v}}\right)\normt{v_\Delta} = \frac{1}{2}\normt{\theta_0},\label{eq:appdx:ideal_iterate_norm_bound}\\
    \implies L = \frac{8}{\sqrt{3}}\frac{\snormt{v_\Delta}}{\snormt{\theta_0}}.\label{eq:appdx:second_derivative_bound}
\end{align}
Now, note that $f'(0) = \kappa / \normt{\theta_0}$ (\autoref{eq:appdx:kappa_fprime_relationship}), and set the total training horizon as in the proposition statement:
\begin{align}
    H^* \coloneq \eta T = f'(0) \cdot \frac{\snormt{\theta_0}}{4\snormt{v_\Delta}} = \frac{\kappa\normt{\theta_0}}{4\snormt{v_\Delta}}.
\end{align}

Combining this training horizon with \autoref{eq:appdx:ideal_gain} and \autoref{eq:appdx:second_derivative_bound}, we have $\eta T = 2f'(0)/(\sqrt{3}L)$, so
\begin{align}
    \Gamma = \frac{2f'(0)^2}{\sqrt{3}L} - \frac{L}{2} \cdot \frac{4f'(0)^2}{3L^2} = \left(\frac{2}{\sqrt{3}} - \frac{2}{3}\right)\frac{f'(0)^2}{L} = \frac{\sqrt{3}}{8}\left(\frac{2}{\sqrt{3}} - \frac{2}{3}\right)\frac{\kappa^2}{\snormt{v_\Delta}}.
\end{align}
Simplifying with $\normt{v_\Delta} \leq \normt{\theta_c} + \normt{\theta_r} = 2$ yields the desired claim.
\end{proof}

\begin{proposition}
\label{prop:sgd_deviation}
    Assume the delta learning setup of \autoref{sec:theory:setup} and \autoref{thm:delta_learning_logistic}. Then the student iterates $\theta_t$ trained with empirical mini-batch SGD (\autoref{eq:update_rule}) do not deviate too much from the exact iterates $\Bar{\theta_t}$ trained with population gradients (\autoref{eq:appdx:ideal_update_rule}). Formally, for any failure probability $\delta \in (0,1)$, if the ambient dimension satisfies $d > \frac{1}{4}\ln(2BT/\delta)$, then with probability at least $1-\delta$
    \begin{align}
        \label{eq:appdx:empirical_iterate_bound}
        \normt{\theta_T - \Bar{\theta}_T} \leq \eta\left[ \sqrt{\frac{2dT}{B}\ln\frac{d+1}{\delta/2}} + 4\sqrt{d}\ln\frac{d+1}{\delta/2}\right] = \eta\,\widetilde{O}\left(\sqrt{dT/B} + \sqrt{d} \right).
    \end{align}
\end{proposition}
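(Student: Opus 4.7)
The central structural fact enabling a clean analysis is that, as computed in \propref{prop:population_grad}, the preference-loss gradient $\nabla_\theta \mathcal{L}_{\text{pref}}(x, y_c, y_r; \theta) = -(y_c - y_r) x$ is independent of the current parameter $\theta$. Hence no stochastic error compounds across iterates, and the gap $\theta_T - \Bar{\theta}_T$ is exactly the accumulated mini-batch noise
\begin{align*}
    \theta_T - \Bar{\theta}_T = -\frac{\eta}{B}\sum_{t=1}^T \sum_{i=1}^B Z^{(t,i)}, \qquad Z^{(t,i)} \coloneq (y_c^{(t,i)} - y_r^{(t,i)})x^{(t,i)} - \E[(y_c - y_r)x],
\end{align*}
under the natural batch-averaging reading of the SGD update (consistent with the $1/B$ variance scaling in the target bound). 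The plan is to apply \lemref{lem:vector_bernstein} to the $TB$-term martingale with differences $X_{t,i} \coloneq -(\eta/B)Z^{(t,i)}$, which requires two inputs: a bound on the predictable quadratic variation $\sigma^2$ and a uniform a.s.\ bound $R$ on $\normt{X_{t,i}}$.

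The variance bound is immediate from \propref{prop:population_grad}: $\Cov((y_c-y_r)x) \preceq I_d$ yields $\E\snormt{Z^{(t,i)}} \leq \mathrm{tr}(I_d) = d$, so $\sigma^2 \leq TB \cdot (\eta/B)^2 \cdot d = \eta^2 Td/B$. Substituting this into the variance term of \lemref{lem:vector_bernstein} produces $\eta\sqrt{(2dT/B)\ln((d+1)/(\delta/2))}$, matching the first target term exactly. The a.s.\ bound $R$ is the subtle step: since $x^{(t,i)} \sim \normal(0, I_d)$ is unbounded, $\normt{X_{t,i}}$ cannot be bounded deterministically. The standard fix is to condition on a high-probability good event. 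By Gaussian concentration, $\Pr[\normt{x} > \sqrt{d} + s] \leq e^{-s^2/2}$; a union bound over the $TB$ samples with $s = \sqrt{2\ln(2TB/\delta)}$ produces an event $\mathcal{E}$ of probability $\geq 1 - \delta/2$ on which every $\normt{x^{(t,i)}} \leq \sqrt{d} + \sqrt{2\ln(2TB/\delta)}$. The stated dimension hypothesis $d > \tfrac{1}{4}\ln(2BT/\delta)$ implies $\sqrt{2\ln(2TB/\delta)} \leq \sqrt{d}$, so on $\mathcal{E}$ we have $\normt{x^{(t,i)}} \leq 2\sqrt{d}$ and thus $R \leq 4\eta\sqrt{d}/B$. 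The Bernstein tail term then reduces to $\tfrac{2R}{3}\ln((d+1)/(\delta/2)) \leq 4\eta\sqrt{d}\ln((d+1)/(\delta/2))$ (using $B \geq 1$), matching the second target term. A final union bound combines the failure of $\mathcal{E}$ and of Bernstein into total failure probability at most $\delta$.

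The only real obstacle is formally legitimizing the truncation: strictly, \lemref{lem:vector_bernstein} requires an a.s.\ bound, whereas we have only a high-probability one. Two standard routes work: (i) apply the lemma to the martingale conditioned on $\mathcal{E}$, verifying that the conditional differences remain a martingale and that conditioning does not materially inflate $\sigma^2$; or (ii) apply the lemma to truncated differences $\widetilde{X}_{t,i} = X_{t,i}\one\{\normt{x^{(t,i)}} \leq 2\sqrt{d}\}$ and then control the small truncation-induced mean bias via standard Gaussian tail integrals, showing the bias is dominated by the Bernstein terms. Everything else --- constant chasing and simplifying $\log$ factors under the dimension hypothesis --- is routine bookkeeping.
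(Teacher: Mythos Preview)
Your proposal is correct and follows essentially the same route as the paper: exploit the $\theta$-independence of the gradient to write the gap as a pure martingale sum, condition on a high-probability Gaussian-norm good event, and apply the vector Bernstein--Freedman inequality (the paper applies it to the $T$ batch-level errors $\zeta_t = g_t - \E[g_t]$ rather than the $TB$ per-sample differences, and uses the Laurent--Massart chi-square bound $\Pr(\normt{x}\geq 4\sqrt{d})\leq e^{-4d}$, which makes the dimension threshold work out exactly). One arithmetic slip: the hypothesis $d > \tfrac14\ln(2BT/\delta)$ only yields $\sqrt{2\ln(2TB/\delta)} < 2\sqrt{2}\,\sqrt{d}$, not $\leq \sqrt{d}$, but this is harmless since $\normt{x^{(t,i)}} < 4\sqrt{d}$ on the good event still follows and the stated bound holds.
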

\begin{proof}
This follows via a martingale concentration argument. Write the empirical mini-batch gradient at each timestep $t$ as
\begin{align}
    g_t = \frac{1}{B}\sum_{i=1}^B\left[\nabla_\theta \mathcal{L}_{\text{pref}}\bigl(x^{(t,i)}, y_{c}^{(t,i)}, y_{r}^{(t,i)} \bigr)\right]
\end{align}
and define the \textit{error vector}
\begin{align}
    \zeta_t \coloneq g_t - \E[g_t],
\end{align}
where $\E[g_t] = -v_\Delta$ up to a constant scaling factor absorbed into $\eta$. Each $\zeta_t$ is a random variable (over the draws of the mini-batch) that measures how much the empirical gradient $g_t$ differs from the population gradient. Introducing $\zeta_t$ allows us to rewrite the empirical SGD updates in terms of the exact iterates $\Bar{\theta_t}$:
\begin{align}
    \theta_{t+1} = \theta_t - \eta g_t = \left(\theta_t - \eta\E[g_t]\right) - \eta\zeta_t = \bar{\theta_{t+1}} - \eta\zeta_t.
\end{align}
Unrolling these updates across $T$ steps, 
\begin{align}
    \label{eq:appdx:empirical_iterate}
    \theta_T = \Bar{\theta_T} - \eta\sum_{t=1}^T\zeta_t \qquad \bigl(=\, \text{deterministic ``backbone'' + stochastic deviation}\bigr).
\end{align}
We now bound the stochastic deviation $\eta\sum_{i=1}^T\zeta_t$. By construction, the sequence $\{\zeta_1,\dots,\zeta_T\}$ is a martingale difference sequence with $\E[\zeta_t] = 0$. Hence, their cumulative sum can be bounded via a vector Bernstein-Freedman inequality (\lemref{lem:vector_bernstein}). We verify the necessary assumptions for \lemref{lem:vector_bernstein} by bounding $\normt{\zeta_t}$ and $\sum_{t=1}^T\E\snormt{\zeta_t}$. To bound $\normt{\zeta_t}$, observe that $x^{(t,i)} \sim \normal(0, I_d)$, so then $\snormt{x^{(t,i)}}$ follows a chi-squared distribution with $d$ degrees of freedom. Standard tail bounds due to \cite{laurent2000adaptive} give that
\begin{align}
    \Pr\left(\normt{x^{(t,i)}} \geq 4\sqrt{d} \right) \leq e^{-4d}.
    \label{eq:appdx:laurent_massart}
\end{align}
By a union bound, the event that \textit{any} covariate observed throughout training exceeds this bound occurs with probability at most $\delta_1 = TB\exp(-4d)$. So if $d$ exceeds the stated threshold, up to a $\delta_1 = \delta/2$ failure probability we can condition the rest of our argument on the ``good'' event
\begin{align}
    \mathcal{E} \coloneq \left\{\mathbbm{1} \left \{\normt{x^{(t,i)}} \leq 4\sqrt{d}\right\} \forall i\,\forall t \right\}.
    \label{eq:appdx:good_event}
\end{align}
Applying triangle inequality then gives
\begin{align}
    \normt{\zeta_t} &\leq \normt{g_t} + \normt{v_\Delta} \leq \frac{1}{B}\sum_{i=1}^B \normt{x^{(t,i)}} + \normt{v_\Delta} \leq 4\sqrt{d} + \normt{v_\Delta} \leq 6\sqrt{d}.
\end{align}
The final inequality holds because $\normt{v_\Delta} \leq \normt{\theta_c} + \normt{\theta_r} = 2$.
We next bound the cumulative second moments $\sum_{t=1}^T\E\snormt{\zeta_t}$. Observe that
\begin{align}
    \Cov(\zeta_t) = \Cov(g_t) = \frac{1}{B^2}\sum_{i=1}^B \Cov\left(\nabla_\theta \mathcal{L}_{\text{pref}}\bigl(x_i^{(t)}, y_{c,i}^{(t)}, y_{r,i}^{(t)} \bigr)\right) \preceq \frac{1}{B}I_d, \qquad (\text{\propref{prop:population_grad})}
\end{align}
so then the second moment is bounded as
\begin{align}
    \E\left[\snormt{\zeta_t}\right] \leq \frac{d}{B} \quad \implies \, \sum_{i=1}^T \E\left[\snormt{\zeta_t}\right] \leq \frac{dT}{B}.
\end{align}
Then by \lemref{lem:vector_bernstein}, for any $\delta_2 \in (0,1)$ we have with probability at least $1-\delta_2$
\begin{align}
    \normt{\sum_{i=1}^T
    \zeta_t} \leq \sqrt{\frac{2dT}{B}\ln\frac{d+1}{\delta_2}} + 4\sqrt{d}\ln\frac{d+1}{\delta_2}.
\end{align}
Combining this with \autoref{eq:appdx:empirical_iterate} and setting $\delta_2 = \delta/2$ yields that the desired claim.
\end{proof}

We are now ready to put everything together.
\begin{proof}[\textbf{Proof of \autoref{thm:delta_learning_logistic}}]

We want to show that the early-stopped $T$-th student iterate $\theta_T$ achieves higher performance than the initial student $\theta_0$,
\begin{align*}
     \cos(\theta_{T}, \theta^*) - \cos(\theta_0, \theta^*) \geq \Theta(\kappa^2) > 0.
\end{align*}
Using a triangle inequality, we break this down as
\begin{align}
    \cos(\theta_T, \theta^*) - \cos(\theta_0, \theta^*) \geq \underbrace{\left[\cos(\Bar{\theta_T}, \theta^*) - \cos(\theta_0, \theta^*) \right]}_{\text{deterministic ideal gain}} - \underbrace{\bigl|\cos(\Bar{\theta_T}, \theta^*) - \cos(\theta_T, \theta^*)\bigr|}_{\text{stochastic deviation from ideal}}.
\end{align}
By \propref{prop:delta_learning_population}, if \conref{eq:criteria_teacher_student} in the theorem statement holds and we fix the total training horizon $H^* = \eta T = \bigl(\kappa\normt{\theta_0}\bigr)/\bigl(4\snormt{v_\Delta}\bigr)$, then we are guaranteed a deterministic gain in cosine similarity of at least $\Gamma \geq \Theta(\kappa^2)$. To bound the stochastic error, we control $\normt{\Bar{\theta_T} - \theta_T}$ (\propref{prop:sgd_deviation}) and rely on the continuity of cosine similarity. A straightforward gradient computation shows that for all fixed $R > 0$, the map  $f_u(\theta) = \iprod{\theta,u}/\normt{\theta}$ is $(1/R)$-Lipschitz on the domain $\{\theta \in \R^d, \normt{\theta} \geq R\}$. In our setting, training for horizon $H^*$ already guarantees that $\normt{\Bar{\theta_T}} \geq \frac{1}{2}\normt{\theta_0}$ (\autoref{eq:appdx:ideal_iterate_norm_bound}), so 
\begin{align}
    \normt{\theta_T} \geq \normt{\Bar{\theta_T}} - \normt{\Bar{\theta_T} - \theta_T} \geq \frac{1}{2}\normt{\theta_0} - \normt{\Bar{\theta_T} - \theta_T}.
\end{align}
Thus, if we control the distance such that
\begin{align}
    \normt{\Bar{\theta_T} - \theta_T} \leq \frac{1}{8}\Gamma \normt{\theta_0},
    \label{eq:appdx:max_stochastic_error_allowed}
\end{align}
then we have $\normt{\theta_T} \geq \frac{1}{4}\normt{\theta_0}$ (as $0 < \Gamma \leq 1$) and consequently
\begin{align}
    \bigl|\cos(\Bar{\theta_T}, \theta^*) - \cos(\theta_T, \theta^*)\bigr| \leq \frac{4}{\normt{\theta_0}} \normt{\Bar{\theta_T} - \theta_T} = \frac{\Gamma}{2}.
\end{align}
So then training $\theta_0$ with delta learning yields an improvement in cosine similarity of at least $\Gamma/2$ = $\Theta(\kappa^2)$ as claimed. We end by analyzing what the training hyperparameters $\eta, T, B$ need to be set as to control this distance. Rewriting $T = H^* / \eta$ and combining \propref{prop:sgd_deviation} with \autoref{eq:appdx:max_stochastic_error_allowed},
\begin{align}
    \normt{\Bar{\theta_T} - \theta_T}\,\lesssim\,\eta\left[ \frac{1}{\sqrt{\eta}}\underbrace{\sqrt{\frac{2dH^*}{B}\ln{(d/\delta)}}}_{\eqcolon\,a} + \underbrace{4\sqrt{d}\ln(d/\delta)}_{\eqcolon\,b} \right] = a\eta^{1/2} + b\eta \leq \frac{1}{8}\Gamma \normt{\theta_0}.
    \label{eq:appdx:eta_constraint_original}
\end{align}
We can rewrite \autoref{eq:appdx:eta_constraint_original} as a quadratic inequality in $s = \sqrt{\eta}$:
\begin{align}
    bs^2 + as - \frac{1}{8}\Gamma\normt{\theta_0} \leq 0
    \label{eq:appdx:eta_constraint_parabola}.
\end{align}
Because both $a,b >0$, $\eta = s^2$ satisfies \autoref{eq:appdx:eta_constraint_original} so long as we simultaneously have
\begin{align*}
    bs^2 \leq \frac{1}{16}\Gamma\normt{\theta_0} \quad  \text{ and } \quad  as \leq \frac{1}{16}\Gamma\normt{\theta_0}.
\end{align*}
Both are immediately satisfied if we set $\eta$ as
\begin{align}
    \eta \coloneq \min \left \{ \frac{\Gamma\normt{\theta_0}}{16b}, \frac{\Gamma^2\snormt{\theta_0}}{256a^2}\right\}.
    \label{eq:appdx:eta_constraint_sufficient_two_parts}
\end{align}
Plugging in the definitions of $a, b, \Gamma$ into \autoref{eq:appdx:eta_constraint_sufficient_two_parts} and choosing $B=\Theta(d)$, we have (modulo some fixed scaling constants)
\begin{align}
    \eta = \min\left\{ \frac{\kappa^2\normt{\theta_0}}{\sqrt{d}\ln(d/\delta)}, \frac{\kappa^3B\normt{\theta_0}}{d\snormt{v_\Delta}\ln(d/\delta)} \right\} = \widetilde{\Theta}\left(\kappa^2\normt{\theta_0} \cdot \min\left\{1/\sqrt{d},\kappa / \snormt{v_\Delta} \right\} \right).
\end{align}
So then the number of training steps $T$ can be expressed as
\begin{align}
    T = H^*/\eta = \widetilde{\Theta}\left(\frac{1}{\kappa \snormt{v_\Delta}} \cdot \max\left\{ \sqrt{d},\snormt{v_\Delta}/\kappa\right\}\right).
\end{align}
Finally, our earlier use of \propref{prop:sgd_deviation} requires that
\begin{align}
    d > \frac{1}{4}\ln(2BT/\delta).
\end{align}
But we set $B = \Theta(d)$ and $T = O(\sqrt{d})$, so the right-hand side grows only logarithmically with $d$; this condition is trivially satisfied for any reasonably large $d$. One can check this with some algebra:
\begin{align}
    \frac{1}{4}\ln(2BT/\delta) &\lesssim \frac{1}{4}\ln\left(\frac{2d}{\delta} \cdot \frac{\max\left\{ \sqrt{d},\snormt{v_\Delta}/\kappa\right\}}{\kappa \snormt{v_\Delta}} \cdot \ln(d/\delta) \right)\\
    &\leq \frac{1}{4}\ln\left(\frac{2d^2}{\delta^2} \cdot \frac{\sqrt{d}(1+\snormt{v_\Delta}/\kappa)}{\kappa \snormt{v_\Delta}} \right) \leq \frac{1}{4}\left[ 3\ln(d) + \ln\left(\frac{\kappa + \snormt{v_\Delta}}{\delta^2\kappa \snormt{v_\Delta}}\right) \right]\\
    &\leq \frac{3}{4}d + \frac{1}{4}\ln\left(\frac{\kappa + \snormt{v_\Delta}}{\delta^2\kappa \snormt{v_\Delta}}\right) \leq d.
\end{align}
So as long as 
\begin{align}
    d \gtrsim \ln\left(\frac{\kappa + \snormt{v_\Delta}}{\delta^2\kappa \snormt{v_\Delta}}\right)
\end{align}
then the total failure probability is at most $\delta$, and running delta learning with SGD improves $\theta_0$ by $\Theta(\kappa^2)$ as claimed.
\end{proof}

\subsection{Full Proof of \corref{cor:delta_learning_high_prob}}
We rely on a standard concentration bound of inner products on a sphere:
\begin{lemma}[Sphere‐concentration Bound]\label{lem:sphere_conc}
Fix an arbitrary vector $v_0 \in e_1^\perp \subset R^{d}$, and draw $u_c, u_r$ uniformly and independently from the unit sphere $S^{d-2}\subset e_1^\perp$. For some constants $\alpha_c, \alpha_r \in \R$, set
\begin{align*}
    v_c=\sqrt{1-\alpha_c^2}\,u_c,\qquad v_r=\sqrt{1-\alpha_r^2}\,u_r,\qquad v_\Delta=v_c-v_r.
\end{align*}
Then for any $\delta\in(0,1)$, with probability at least $1-\delta$,
\begin{align*}
    \bigl| \iprod{v_0, v_\Delta}\bigr|\;\le\;\normt{v_0}\Bigl(\sqrt{1-\alpha_c^2}+\sqrt{1-\alpha_r^2}\Bigr)
\sqrt{\frac{2}{\,d-1\,}\,\ln\!\frac{4}{\delta}}
\end{align*}
\end{lemma}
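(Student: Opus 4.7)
The plan is to split $v_\Delta = v_c - v_r$ and control each inner product $\langle v_0, v_c\rangle$ and $\langle v_0, v_r\rangle$ separately using standard sphere concentration, then combine via the triangle inequality and a union bound. First I would write
\begin{align*}
    \bigl|\langle v_0, v_\Delta\rangle\bigr|
    \;\le\; \sqrt{1-\alpha_c^2}\,\bigl|\langle v_0,u_c\rangle\bigr|
         + \sqrt{1-\alpha_r^2}\,\bigl|\langle v_0,u_r\rangle\bigr|,
\end{align*}
so it suffices to tail-bound each of $|\langle v_0,u_c\rangle|$ and $|\langle v_0,u_r\rangle|$, where $u_c,u_r$ are uniform on the unit sphere $S^{d-2}$ inside the $(d-1)$-dimensional subspace $e_1^\perp$ containing $v_0$.

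Next, I would invoke the standard concentration fact that for $u$ drawn uniformly on the unit sphere of an $n$-dimensional subspace and any fixed vector $v_0$ in that subspace,
\begin{align*}
    \Pr\Bigl[\,\bigl|\langle v_0,u\rangle\bigr|\;\ge\;\normt{v_0}\,t\,\Bigr]\;\le\;2\exp\!\left(-\tfrac{(n-?)\,t^{2}}{2}\right),
\end{align*}
with the cleanest statement being the sub-Gaussian tail $2\exp(-nt^{2}/2)$ that follows either from (i) writing $u=g/\normt{g}$ with $g\sim\normal(0,I_{n})$ and arguing that $\langle v_0,u\rangle$ is $(\normt{v_0}/\sqrt{n})$-sub-Gaussian, or (ii) direct Lévy-style measure concentration on the sphere (the map $u\mapsto\langle v_0,u\rangle$ is $\normt{v_0}$-Lipschitz). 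In our setting $n = d-1$, so inverting the tail bound yields
\begin{align*}
    \bigl|\langle v_0,u\rangle\bigr|\;\le\;\normt{v_0}\sqrt{\tfrac{2}{d-1}\ln\tfrac{2}{\delta'}}
\end{align*}
with probability at least $1-\delta'$.

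Finally, I would apply this bound to $u_c$ and $u_r$ each with failure probability $\delta'=\delta/2$ and take a union bound so that both hold simultaneously with probability at least $1-\delta$. Plugging back in and combining with the first display line yields
\begin{align*}
    \bigl|\langle v_0, v_\Delta\rangle\bigr|
    \;\le\;\normt{v_0}\Bigl(\sqrt{1-\alpha_c^2}+\sqrt{1-\alpha_r^2}\Bigr)\sqrt{\tfrac{2}{d-1}\ln\tfrac{4}{\delta}},
\end{align*}
which is exactly the claimed inequality.

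The main obstacle is really just choosing a clean version of the sphere-concentration tail bound and being careful about constants; nothing deep is happening, but one has to verify that the sub-Gaussian parameter on an $n$-sphere is $1/\sqrt{n}$ (as opposed to, say, $1/\sqrt{n-1}$ or $1/\sqrt{n+1}$ that sometimes appear in the literature depending on the normalization). A convenient route is the Gaussian-ratio representation $u = g/\normt{g}$: by isotropy one may rotate $v_0$ to align with $e_1$, reducing $\langle v_0,u\rangle$ to $\normt{v_0}\cdot g_1/\normt{g}$, and then standard $\chi^{2}$ concentration on $\normt{g}^{2}$ combined with the symmetric distribution of $g_1$ delivers the stated tail with the clean constant $2/(d-1)$. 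Everything else is bookkeeping.
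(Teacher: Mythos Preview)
Your proposal is correct and follows essentially the same route as the paper: decompose $\langle v_0,v_\Delta\rangle$ into the two inner products with $u_c$ and $u_r$, apply L\'evy's concentration on the $(d-1)$-sphere (the map $u\mapsto\langle v_0,u\rangle$ is $\normt{v_0}$-Lipschitz with mean zero), set each failure probability to $\delta/2$, union bound, and combine via the triangle inequality. The paper simply invokes L\'evy directly rather than going through the Gaussian-ratio representation you sketch, but the argument and constants are identical.
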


\begin{proof}
We decompose the inner product and apply Lévy’s concentration on each term. We have
\begin{align*}
    \iprod{v_0, v_\Delta} = \sqrt{1-\alpha_c^2}\iprod{v_0, u_c} - \sqrt{1-\alpha_r^2}\iprod{v_0, u_r}.
\end{align*}
For either $u \in \{u_c, u_r\}$ the map 
$f(u)=\iprod{v_0, u}$ 
is $\normt{v_0}$-Lipschitz in the Euclidean metric. Since $\E\iprod{v_0, u} = 0$, Lévy’s concentration on the sphere gives, for any $\epsilon > 0$,
\begin{align*}
\Pr\bigl(|\langle v_0, u\rangle|\geq \epsilon \bigr)
\;\le\;
2\exp\!\Bigl(-\frac{(d-1)\,\epsilon^2}{2\|v_0\|^2}\Bigr).
\end{align*}
Now set $\epsilon$ so that the right hand side equals $\delta/2$; explicitly,
\begin{align*}
    \epsilon
    \;=\;
    \|v_0\|\,
    \sqrt{\frac{2}{\,d-1\,}\,
        \ln\!\frac{4}{\delta}}.
\end{align*}
By a union bound, with probability at least $1-\delta$ we have $\{|\iprod{v_0, u_c}| \leq \epsilon\}$ and $\{|\iprod{v_0, u_r}| \leq \epsilon\}$. So by a triangle inequality we have
\begin{align*}
    \bigl|\iprod{v_0, v_\Delta}\bigr| \leq \sqrt{1-\alpha_c^2}\bigl|\iprod{v_0, u_c}\bigr| + \sqrt{1-\alpha_r^2}\bigl|\iprod{v_0, u_r}\bigr| \leq \Bigl(\sqrt{1-\alpha_c^2}+\sqrt{1-\alpha_r^2}\Bigr)\,\epsilon.
\end{align*}
Substituting the specified $\epsilon$ gives the displayed bound.
\end{proof}

\begin{proof}[\textbf{Proof of \corref{cor:delta_learning_high_prob}}]
The claim is that \conref{eq:criteria_teacher_student} holds with high probability in sufficiently high dimensions. Since both $\theta_c,\theta_r$ are randomly drawn and uncorrelated in all components orthogonal to $\theta^*$, standard sphere concentration bounds show that the ``bad'' noise
\begin{align}
    \iprod{\proj_{(\theta^{*\perp})}(\Tilde{\theta}_0), \proj_{(\theta^{*\perp})}(v_\Delta)}
\end{align}
vanishes in high dimensions. In particular, given any fixed failure probability $\delta \in (0,1)$, a direct application of \lemref{lem:sphere_conc} yields that with probability at least $1-\delta$ 
\begin{align}
    \left|\iprod{\proj_{(\theta^{*\perp})}(\Tilde{\theta}_0), \proj_{(\theta^{*\perp})}(v_\Delta)}\right| \leq \normt{\theta_0}\left(\sqrt{1-\alpha_c^2} + \sqrt{1-\alpha_r^2} \right) \sqrt{\frac{2}{d-1}\ln{\frac{4}{\delta}}}.
\end{align}
Hence \conref{eq:criteria_teacher_student} holds with the same probability whenever
\begin{align}
    \frac{(\alpha_c - \alpha_r)\left(1-\alpha_0^2\right)}{|\alpha_0|} > \normt{\theta_0}\left(\sqrt{1-\alpha_c^2} + \sqrt{1+\alpha_r^2} \right) \sqrt{\frac{2}{d-1}\ln{\frac{4}{\delta}}} .
\end{align}
Or equivalently, whenever
\begin{align}
    |\alpha_0|\normt{\theta_0}\left(\sqrt{1-\alpha_c^2} + \sqrt{1+\alpha_r^2} \right) < (\alpha_c - \alpha_r)(1-\alpha_0^2)\sqrt{\frac{d-1}{2\ln(4/\delta)}}.
\end{align}
In particular, for any fixed $\delta$, the right-hand side grows like $\sqrt{d}$ as $d\to \infty$, so \conref{eq:criteria_teacher_student} is easily satisfied. Specifically, we simply need
\begin{align}
    d \;>\; d^* \coloneq 2\ln(4/\delta)\cdot \left(\frac{|\alpha_0|\normt{\theta_0}\left(\sqrt{1-\alpha_c^2} + \sqrt{1-\alpha_r^2} \right)}{(\alpha_c - \alpha_r)(1-\alpha_0^2)}\right)^2 + 1.
\end{align}
\end{proof}

\section{Experiment Details}
\label{appdx:exps}

\subsection{Shared training and hyperparameter details}
\label{appdx:exps:common}
For all training, we generally sweep hyperparameters near the defaults suggested by recent work~\citep{lambert2024tulu3, hu2024openrlhf}. We use a cosine annealing learning rate schedule with a warmup ratio of 0.03. We use an AdamW optimizer with $(\beta_1=0.9, \beta_2=0.95)$ and no weight decay following~\citep{ivison2024unpacking}. For all DPO tuning, we use length-normalization in the loss, which~\cite{lambert2024tulu3} suggests to generally work better. We use batch size 32 for DPO tuning and batch size 256 for SFT. We train all DPO models for exactly one epoch, sweeping learning rate and DPO $\beta$. For all SFT experiments, we sweep epochs and learning rate; see each experiment subsection below for exact ranges. We use gradient checkpointing, DeepSpeed~\citep{rasley2020deepspeed}, and FlashAttention2~\citep{dao2023flashattention2} to improve training efficiency. We use code from the OpenRLHF Github repository~\citep{hu2024openrlhf} to train all of our models.

\subsection{Pilot Study on \textsc{UltraFeedback-Weak}}
\label{appdx:exps:ufweak}
\paragraph{Data and filtering.} The original \textsc{UltraFeedback} dataset~\citep{cui2023ultrafeedback} is a popular preference dataset constructed by prompting a set of LLMs with diverse prompts and then scoring the responses using a much stronger judge model (GPT-4). For each prompt $x$, we form preference pairs $(x, y_c, y_r)$ by selecting the highest-scoring response $y_c$ and one lower-scoring response $y_r$.

As of March 28 2025, Llama-3.2-3B-Instruct achieves a LMSYS Chatbot Arena ELO score of 1103 and Llama-3.1-8B-Instruct achieves 1176 ELO. We filter out all responses from the original \textsc{UltraFeedback} dataset that were generated by models with higher ELO than 1100. This excludes GPT-4-0613 (1163 ELO), GPT-3.5-Turbo (1106 ELO), and WizardLM-70B (1106 ELO). The best remaining model is Vicuna-33B (1091 ELO); see \autoref{tab:appdx:ufweak_models} for a full list of remaining models.

\begin{table}[h]
    \footnotesize
    \centering
    \begin{tabular}{ll}
        \toprule
        \textbf{Model} & \textbf{Reference}\\
        \midrule
        Alpaca-7B & \cite{alpaca}\\
        Bard & https://bard.google.com/\\
        Falcon-40B-Instruct & \cite{falcon40b}\\
        Llama-2-13B-Chat & \cite{touvron2023llama}\\
        Llama-2-70B-Chat & \cite{touvron2023llama}\\
        Llama-2-7B-Chat & \cite{touvron2023llama}\\
        MPT-30B-Chat & \cite{MosaicML2023Introducing} \\
        Pythia-12B & \cite{biderman2023pythia} \\
        StarChat & \cite{Tunstall2023starchat-alpha} \\
        UltraLM-13B & \cite{ding2023enhancing} \\
        UltraLM-65B & \cite{ding2023enhancing}\\
        Vicuna-33B & \cite{vicuna2023} \\
        WizardLM-7B & \cite{xu2023wizardlm} \\
        WizardLM-13B & \cite{xu2023wizardlm} \\
        \bottomrule
    \end{tabular}
    \caption{Models used to generate the responses in our \textsc{UltraFeedback-Weak} dataset, constructed by filtering out all responses generated by models with a LMSYS Chatbot Arena ELO score above 1100 (i.e., near Llama-3.2-3B-Instruct's ELO) from the original \textsc{UltraFeedback} dataset.}
    \label{tab:appdx:ufweak_models}
\end{table}

\paragraph{Evaluation.} We evaluate on the eight core benchmarks detailed in \aref{appdx:evals}: MMLU, MATH, GSM8k, IFEval, AlpacaEval 2, TruthfulQA, BigBenchHard, and Codex HumanEval+.

\paragraph{Training and hyperparameters.} We follow the setup described in \aref{appdx:exps:common}, and further sweep learning rate in $\{\text{1e-7, 3e-7, 5e-7, 7e-7}\}$ and $\beta \in \{5, 10\}$ for DPO training. For SFT, we sweep learning rate in $\{\text{1e-5, 5e-5, 1e-6}\}$ and epochs in $\{1,2\}$.

\subsection{Controlled Experiment: Stylistic Delta in Number of Bold Sections}
\label{appdx:exps:numsec}

\paragraph{Data.} Each prompt $x$ is formed by appending “Include bolded sections in your response.”~to a prompt from the Tülu 3 SFT dataset~\citep{lambert2024tulu3}. To generate each response $y_{k_i}$, we modify the appended instruction into a hard constraint: “Include exactly $k_i$ bolded sections in your response.” We generate responses with Llama-3.2-3B-Instruct, and use regular expressions to guarantee correctness. We collect exactly 16384 training data points.

\paragraph{Hyperparameters.}  We follow the setup described in \aref{appdx:exps:common}, and further sweep learning rate in $\{\text{1e-7, 3e-7, 5e-7, 7e-7}\}$ and $\beta \in \{5, 10\}$ for DPO training. For SFT, we sweep learning rate in $\{\text{1e-5, 5e-5, 1e-6}\}$ and epochs in $\{1,2\}$. We select the best hyperparameters based on a held-out validation set.

\subsection{Controlled Experiment: Semantic Delta from a Weaker Model}
\label{appdx:exps:selfimprove}

\paragraph{Hyperparameters.} We follow the setup described in \aref{appdx:exps:common}, and further sweep learning rate in $\{\text{3e-7, 1e-7, 5e-8, 1e-8}\}$ and $\beta \in \{5,10\}$ for DPO training. For SFT, we sweep learning rate in $\{\text{5e-6, 1e-6, 5e-7, 1e-7}\}$ and epochs in $\{1,2\}$. We select the best hyperparameters based only on GSM8k accuracy, keeping the remaining evaluations in this controlled experiment held-out.

The learning rates swept here are slightly lower than those used in prior work~\citep{lambert2024tulu3}; we found in our preliminary experiments that lower learning rates generally performed better across the board for both DPO and SFT in this setting. 

\paragraph{Evaluation.} We evaluate on the eight core benchmarks detailed in \aref{appdx:evals}: MMLU, MATH, GSM8k, IFEval, AlpacaEval 2, TruthfulQA, BigBenchHard, and Codex HumanEval+.

\subsection{Post-training with Weak Preference Data}
\label{appdx:exps:posttrain}
\paragraph{Data.} See \aref{appdx:weak_dataset} for details on dataset generation, dataset statistics, and qualitative examples.

\paragraph{Hyperparameters.} We follow the setup described in \aref{appdx:exps:common}. Hyperparameter tuning is crucial for performant post-training; we carefully tune hyperparameters for each dataset independently, being careful to sweep the same number of hyperparameters for each setting. Following best practice~\citep{lambert2024tulu3, ivison2024unpacking}, we sweep DPO learning rate in $\{\text{5e-7, 1e-7, 7e-8, 5e-8}\}$ and $\beta \in \{5, 10\}$ and select the best checkpoint for each dataset. We further swept dataset size in $\{100000, 150000, 200000, 264806\}$, and find that training on a subset of the full dataset (264806 samples) was typically slightly better. Finally, Tülu 3~\citep{lambert2024tulu3} finds that performance can depend on random seed initialization and hence picks the best run out of multiple seeds; we follow this practice and sweep 5 random seeds on top of our single best hyperparameter and dataset configuration. This yields the numbers for our best setup in \autoref{tab:posttrain}.

\paragraph{Evaluation.} We evaluate on all eleven benchmarks detailed in \aref{appdx:evals}: MMLU, PopQA, MATH, GSM8k, IFEval, AlpacaEval 2, TruthfulQA, BigBenchHard, DROP, Codex HumanEval, and Codex HumanEval+. We compare directly against the officially released Tülu-3-8B-DPO model and re-run evaluations with the exact same version of the codebase we use to evaluate our own models. We find that, despite using the same evaluation configuration and overall evaluation codebase as Tülu 3 (\aref{appdx:evals}), re-running with the current version slightly improves Tülu-3-8B-DPO's performance numbers compared to the numbers reported in the original paper~\citep{lambert2024tulu3}.

\subsection{Analysis Experiments}
\label{appdx:exps:analysis}

\paragraph{Data.} To construct the 21 preference datasets used in our analysis experiments on \textbf{delta magnitude} and \textbf{chosen response quality}, we use a 100k prompt random subset of the full Tülu 3 dataset.

\paragraph{Hyperparameters.} We follow the setup described in \aref{appdx:exps:common}. For our analysis experiments on \textbf{delta magnitude} and \textbf{chosen response quality}, we tune learning rate in \{\text{1e-7, 7e-8, 5e-8}\} and $\beta \in \{5,10\}$ when training with DPO. When training with SFT for our \textbf{chosen response quality} experiments, we sweep learning rate in $\{\text{5e-6, 1e-5, 1e-6}\}$ and epochs in $\{1,2\}$. For our experiments with \textbf{GPT-4o annotations} and \textbf{OLMo-2-7B-SFT}, we tune hyperparameters as described in \aref{appdx:exps:posttrain}.

\paragraph{Evaluation.} We evaluate on all eleven benchmarks detailed in \aref{appdx:evals}: MMLU, PopQA, MATH, GSM8k, IFEval, AlpacaEval 2, TruthfulQA, BigBenchHard, DROP, Codex HumanEval, and Codex HumanEval+.

\section{Compute Details} 
\label{appdx:compute}
All models are trained on either single H100 or A100 nodes. Training an 8B language model with DPO on 100k preference pairs takes approximately 4-6 hours on one H100 node. Supervised finetuning an 8B model on 100k data points takes approximately 2-4 hours on one H100 node.

\end{document}